\newtheoremstyle{newstyle}      
{10pt} 
{10pt} 
{\itshape} 
{} 
{\bfseries} 
{.} 
{ } 
{} 
\theoremstyle{newstyle}
\newtheorem{theorem}{Theorem}[section]    
\newtheorem{definition}{Definition}[section]
\newtheorem{lemma}{Lemma}[section]
\renewcommand{\epsilon}{\varepsilon}
\let\oldnl\nl
\newcommand{\nonl}{\renewcommand{\nl}{\let\nl\oldnl}}
\title{Agnostic Membership Query Learning with Nontrivial Savings: New Results, Techniques}
\author{
Ari Karchmer
\\ Boston University \\ \href{mailto:arika@bu.edu}{arika@bu.edu} 
}
\begin{document}

\maketitle

\begin{abstract}%

Designing computationally efficient algorithms in the agnostic learning model (Haussler, 1992; Kearns et al., 1994) is notoriously difficult.
In this work, we consider agnostic learning with membership queries for touchstone classes at the \textit{frontier} of agnostic learning, with a focus on how much computation can be \textit{saved} over the trivial runtime of $2^n$. This approach is inspired by and continues the study of ``learning with nontrivial savings'' (Servedio and Tan, 2017). To this end, we establish multiple agnostic learning algorithms, highlighted by:

\begin{itemize}
  \item An agnostic learning algorithm for circuits consisting of a sublinear number of gates, which can each be any function computable by a sublogarithmic degree $k$ polynomial threshold function (the depth of the circuit is bounded only by size). This algorithm runs in time $2^{n -s(n)}$ for $s(n) \approx n/(k+1)$, and learns over the uniform distribution over unlabelled examples on $\{0,1\}^n$.
\item An agnostic learning algorithm for circuits consisting of a sublinear number of gates, where each can be any function computable by a $\sym^+$ circuit of subexponential size and sublogarithmic degree $k$. This algorithm runs in time $2^{n-s(n)}$ for $s(n) \approx n/(k+1)$, and learns over distributions of unlabelled examples that are products of $k+1$ \textit{arbitrary and unknown} distributions, each over $\{0,1\}^{n/(k+1)}$ (assume without loss of generality that $k+1$ divides $n$). 

\end{itemize}
Furthermore, we apply our new agnostic learning algorithms for these classes to also obtain algorithms for randomized compression, exact learning with membership and equivalence queries, and distribution-independent PAC-learning with membership queries.

Our core technique, which may be of independent interest, remixes the learning from natural proofs paradigm (Carmosino et al. 2016, 2017), so that we can tolerate concept classes fundamentally different than $\AC^0[p]$, and achieve \textit{fully} agnostic learning. We make use of communication-complexity-based natural proofs (Nisan, 1993), rather than natural proofs of Razborov (1987) and Smolensky (1987) for $\AC^0[p]$. 
\end{abstract}

\newpage 

\section{Introduction}

\textit{Agnostic} learning \cite{haussler1992decision, kearns1992toward} is an important generalization of PAC-learning \cite{valiant1984theory}. Agnostic learning is meant to more accurately capture a common approach to machine learning, where a predefined set of functions is explored in order to find the one achieving the least error on a set of data produced by some totally unknown process. Thus, roughly speaking, the objective of an \textit{agnostic learning algorithm} for a complexity class $\Lambda$ is to output a \textit{hypothesis} $h$ whose error in approximating an arbitrary concept is nearly as small as that of the \textit{best possible} hypothesis within $\Lambda$. The class $\Lambda$ is referred to as the \textit{touchstone class}.

Designing \textit{computationally efficient} (i.e., polynomial time) agnostic learning algorithms for expressive touchstone classes has historically been relatively hard. Even extremely simple touchstone classes such as parity functions are believed to be computationally hard to learn in the agnostic model \cite{blum1993cryptographic}.
Some positive results exist, however, including for piecewise functions \cite{kearns1992toward}, restricted fan-in two-layer neural nets \cite{lee1996efficient}, geometric patterns \cite{goldman1997agnostic}, decision trees, \cite{gopalan2008agnostically}, and halfspaces \cite{kalai2008agnostically}. 

If we take some combination of the common relaxations considered in computational learning theory, such as access to membership queries, distribution-specific learning, or super-polynomial runtime, more positive results become known. For instance, the famed polynomial time agnostic learning algorithm for parity functions due to \cite{goldreich1989hard} (also referred to sometimes as the KM algorithm after \cite{kushilevitz1991learning}), uses membership queries and requires a uniform distribution over unlabelled examples.
On the other hand, \cite{blum2003noise} and \cite{lyubashevsky2005parity} show agnostic learning for parities using only random examples, but restricting to the uniform distribution and allowing \textit{slightly} subexponential runtime.

In this work, we study agnostic learning under all three relaxations: access to membership queries, a varying degree of distribution-specificity, and super-polynomial runtime. Throughout the paper, we refer to agnostic learning algorithms that use membership queries as \textit{AMQ-learners}, and the learning model as \textit{AMQ-learning}. Distributional assumptions and/or super-polynomial runtimes are stated explicitly.

More specifically, the goal of this work is to increase the complexity of the touchstone classes that are known to be agnostically learnable. Thus, with respect to the runtime relaxation, we will focus merely on how much computation can be \textit{saved} over the \textit{trivial} runtime of $2^n$ (here $n$ is the number of binary ``inputs'' to the target concept). To summarize, we will try to design AMQ-learners that assume some knowledge about the distribution over unlabelled examples, and which run in time $2^{n-s(n)}$ for a \textit{savings function} $s(n)$. Clearly, when $s(n) = n-O(\log(n))$, runtime is polynomial. Generally speaking, we aim for $s(n) \in \omega(\log(n))$.
Throughout the paper, we refer to a learning algorithm in a specified model that runs in time $2^{n-s(n)}$ as $s(n)$-nontrivial, or simply nontrivial, if $s(n) \in \omega(\log(n))$.

This perspective is heavily inspired by the idea of \cite{servedio2017circuit}, who first explicitly considered learning with nontrivial savings. However, our learning model differs significantly, as \cite{servedio2017circuit} did not consider agnostic learning, but also did not make use of membership queries. Additionally, for technical reasons, they considered an online mistake-bound model of learning \cite{littlestone1988learning}, which is actually stronger than distribution-independent PAC-learning \cite{littlestone1988learning, blum1994separating} and equivalent to exact learning with equivalence queries only \cite{angluin1988queries}. In the online mistake-bound model, they obtained nontrivial learning algorithms for $\AC^0$ circuits with a few LTF-gates, or augmented with $\mod p$-gates, in addition to full-basis formulas, branching programs, and span programs of some fixed polynomial size (all less than $n^2$).

Understanding how much computation can be saved is a relatively new goal in computational learning theory, which, on the other hand, has a longer, fruitful history in complexity theory. For instance, there exist many examples of non-trivial upper bounds slightly better than $2^n$ for counting or satisfiability algorithms for CNFs, and other $\NP$-hard or $\#\P$-hard problems (see \cite{paturi1997satisfiability, schoning1999probabilistic, paturi2005improved, schuler2005algorithm, fomin2013exact, impagliazzo2012satisfiability}). Thus, as \cite{servedio2017circuit} pointed out, finding out the extent of what can be learned with nontrivial savings is worthwhile to pursue in order to push computational learning theory forward. Additionally, as we will discuss later, a direct implication from nontrivial learning algorithms to \textit{compression} algorithms for Boolean circuit classes provides a concrete application for our study.

\subsection{Our Results}\label{intro:ourResults}


We present a variety of new nontrivial AMQ-learning algorithms. 
For the sake of clarity, we now present somewhat weaker and slightly informal statements that still convey the main results of this work. The formal and more technically precise statements proved in the body of the paper are easily seen to imply them. We compare these algorithms to related ones in Section \ref{related}. An extended technical overview follows in Section \ref{sec:ExtendedTechnical}, which includes a precise definition of AMQ-learning.

\paragraph{AMQ-learning over the uniform distribution.} First, we obtain nontrivial AMQ-learners, with respect to a uniform distribution over unlabelled examples on $\{0,1\}^n$, for functions of degree $k$ polynomial threshold functions (PTFs). 
Specifically, the following touchstone classes:

\begin{itemize}
    \item \textbf{Class 1:} Circuits of at most $n^{0.99}$ gates, where each gate computes any function computable by a PTF of degree $k \le \log(n)^{0.99}$, in time $2^{n -s(n)}$ for $s(n) \approx n/(k+1)$.
    \item \textbf{Class 2:} Decision trees of at most $n^{0.99}$ depth, where each node is allowed to make a query computed by a PTF of degree $k \le \log(n)^{0.99}$, in time $2^{n -s(n)}$ for $s(n) \approx n/(k+1)$.
\end{itemize}
We use $\approx$ to suppress additive factors that are sublinear in $n$, and an unavoidable logarithmic dependency on the reciprocal of the error rate of the optimal hypothesis in the touchstone class. 

\paragraph{AMQ-learning over $t$-product distributions.} We also obtain nontrivial AMQ-learners when the distribution over unlabelled examples on $\{0,1\}^n$ is a $t$\textit{-product}. A $t$-product is a distribution that is defined by $t$ arbitrary distributions, each over $\{0,1\}^{n/t}$ (assume without loss of generality that $t$ divides $n$). The $t$-product distribution is sampled by taking independent samples from each of the $t$ distributions, and concatenating them to form an element of $\{0,1\}^n$. The ability to handle $t$-product distributions over unlabelled examples is a significant upgrade over the uniform distribution, for at least two reasons. First, this class of distributions allows for intricate dependencies between some of the bits of the unlabelled examples. Second, the $t$ distributions need not even be known to the algorithm. That is, the algorithm is not specific to the choice of the $t$ distributions that make up the $t$-product.\footnote{If we could upgrade further to arbitrary distributions, then we could remove the need for membership queries (see \cite{feldman2009power}). This is an interesting goal for future research.}

We give AMQ-learners for functions of $\sym^+$ circuits of subexponential size and degree $k$ (see touchstone classes 3 and 4), where the distribution over unlabelled examples is an unknown $(k+1)$-product.
A $\sym^+$ circuit of size $s$ and degree $k$ is defined by a pair $(p, \theta)$, where $p$ is an $n$-variable degree-$k$ multilinear polynomial over the integers. Size $s$ means that the magnitude of the coefficients of $p$ is at most $s$. On the other hand, $\theta: \mathbb{Z} \rightarrow \{-1,1\}$ is an arbitrary function. The $\sym^+$ circuit $(p, \theta)$ evaluates by computing the function $s: \{0,1\}^n \rightarrow \{-1,1\}$ defined as $s(x) = \theta(p(x))$.

\begin{itemize}
    \item \textbf{Class 3:} Circuits of at most $n^{1-\epsilon}$ gates, where each gate computes any function computable by a $\sym^+$ circuit of size $2^{n^{\zeta}}$ ($\zeta < \epsilon$) and degree $k \le \log(n)^{0.99}$, in time $2^{n -s(n)}$ for $s(n) \approx n/(k+1)$.
    \item \textbf{Class 4:} Decision trees of at most $n^{1-\epsilon}$ depth, where each node is allowed to make a query computed by a size $2^{n^{\zeta}}$ ($\zeta < \epsilon$) and degree $k \le \log(n)^{0.99}$ $\sym^+$ circuit, in time $2^{n -s(n)}$ for $s(n) \approx n/(k+1)$.
\end{itemize}

Note, degree $k$ PTFs with sum of coefficients less than $s$ are a subclass of size $s$, degree $k$ $\sym^+$ circuits. So, to expand the class of distributions over unlabelled examples handled by the AMQ-learner for touchstone classes 1 and 2, we have to modify the touchstone class by bounding the weights of each PTF-gate slightly. However, size $s$, degree $k$ $\sym^+$ circuits are still more general than degree $k$ PTFs with total weight bounded by $s$. Because of this, touchstone classes 3 and 4 are not necessarily supersets of touchstone classes 1 and 2.

\subsection{Some Applications of Our Results}\label{intro:apps}

Several applications of the nontrivial AMQ-learners for classes 1-4 in are in order. We refer to Section \ref{sec:compression} for precise statements and further discussion of these applications. 

\paragraph{Compression Algorithms for Classes 1-4.}
Randomized compression algorithms for Boolean functions \cite{chen2015mining} are obtained in a simple and generic way from membership query learning over the uniform distribution (this was noticed already in \cite{carmosino2016learning, servedio2017circuit}). Even though our results are AMQ-learners, they can be used in the \textit{realizable} setting as membership query learning algorithms as needed. Therefore, we also obtain randomized compression algorithms for touchstone classes 1-4, which were not known before. 
Our compression results also provide further support for a hypothesis of \cite{chen2015mining}, which is closely related to the questions of \cite{servedio2017circuit}. Their hypothesis is that every currently known natural circuit lower bound for circuits of type $\Lambda$ yields a compression algorithm for $\Lambda$. 

\paragraph{Nontrivial Exact Learning and Distribution-Independent PAC-Learning for Classes 1-4.} 
In a similar fashion, randomized learning algorithms in the exact learning with membership and equivalence queries model \cite{angluin1988queries} are obtained from membership query learning over the uniform distribution. Additionally, it is also known (by standard arguments -- see Section 2.4 of \cite{angluin1988queries}) that algorithms for exact learning with membership and equivalence queries imply distribution-independent membership query learning algorithms in the realizable PAC model. Hence, we also obtain nontrivial exact learning and distribution-independent PAC learning algorithms for classes 1-4, which were not known prior.

\subsection{Our Approach} 

The approach of \cite{servedio2017circuit} was to convert circuit lower bound methods (random restrictions, Nečiporuk's method) into nontrivial learning algorithms in the online mistake bound model. Because of this, they asked (see section 5 of \cite{servedio2017circuit} for exact quotations):
\begin{enumerate}
    \item Can other proof techniques from computational complexity be used to obtain other nontrivial learning algorithms?
    \item Many circuit classes have known lower bounds, but not nontrivial learning algorithms (in any model); can we design nontrivial learning algorithms for such classes?
\end{enumerate}

The approach we take is led by this line of questioning. Specifically, we translate circuit lower bounds proved by a communication complexity based method due to \cite{nisan1993communication} into nontrivial AMQ-learners. This goes to answer question 1 above. It also means that we obtain answers to question 2, because, fixing a degree $k$, none of touchstone classes 1-4 can compute the generalized inner product function of degree $k+1$ \cite{nisan1993communication}, but prior to this work no nontrivial learning algorithms were known for these classes (in any learning model). The essential details and definitions of communication complexity in the relevant communication models will be explained in sufficient depth in the technical overview (Section \ref{sec:ExtendedTechnical}); we point to \cite{kushilevitz1996communication} for further reference.

At a very high level, the method of \cite{nisan1993communication} is the following. First, identify a function $\phi: \{0,1\}^n \rightarrow \{0,1\}$ that requires \textit{high} communication complexity (e.g., $\Omega(n)$). This can be in any communication model, such as $k$-party number-on-forehead (NOF), two-party randomized, or two-party deterministic. Then, identify a circuit class $\cC$ (with an associated size parameter $s(n)$) such that, for every function $g$ computable by a circuit of type $\cC$ and size $s(n)$, $g$ is computable by a \textit{low} communication protocol in that communication model. \textit{Low} communication cost indicates a relatively small function of $s(n)$ like $\log(s(n))$. Finally, one can conclude that $f$ requires large circuits of type $\cC$.  


At the core of Nisan's method is the \textit{upper bound} on communication complexity for the circuit class $\cC$. 
Thus, by specifically showing that communication complexity \textit{upper bounds} in the $k$-party NOF model imply, in a general sense, approximately $n/k$-nontrivial AMQ-learners, we end up translating Nisan's general \textit{lower bound} method into nontrivial AMQ-learners. 
Since circuit lower bounds for explicit functions can be proved for each of touchstone classes 1-4 using Nisan's method, this suffices to derive our AMQ-learners.
In Section \ref{sec:ExtendedTechnical}, we give an in-depth explanation of the construction of the AMQ-learner for deterministic NOF protocols.

\begin{figure}[H]
\centering
\includegraphics[scale=0.15]{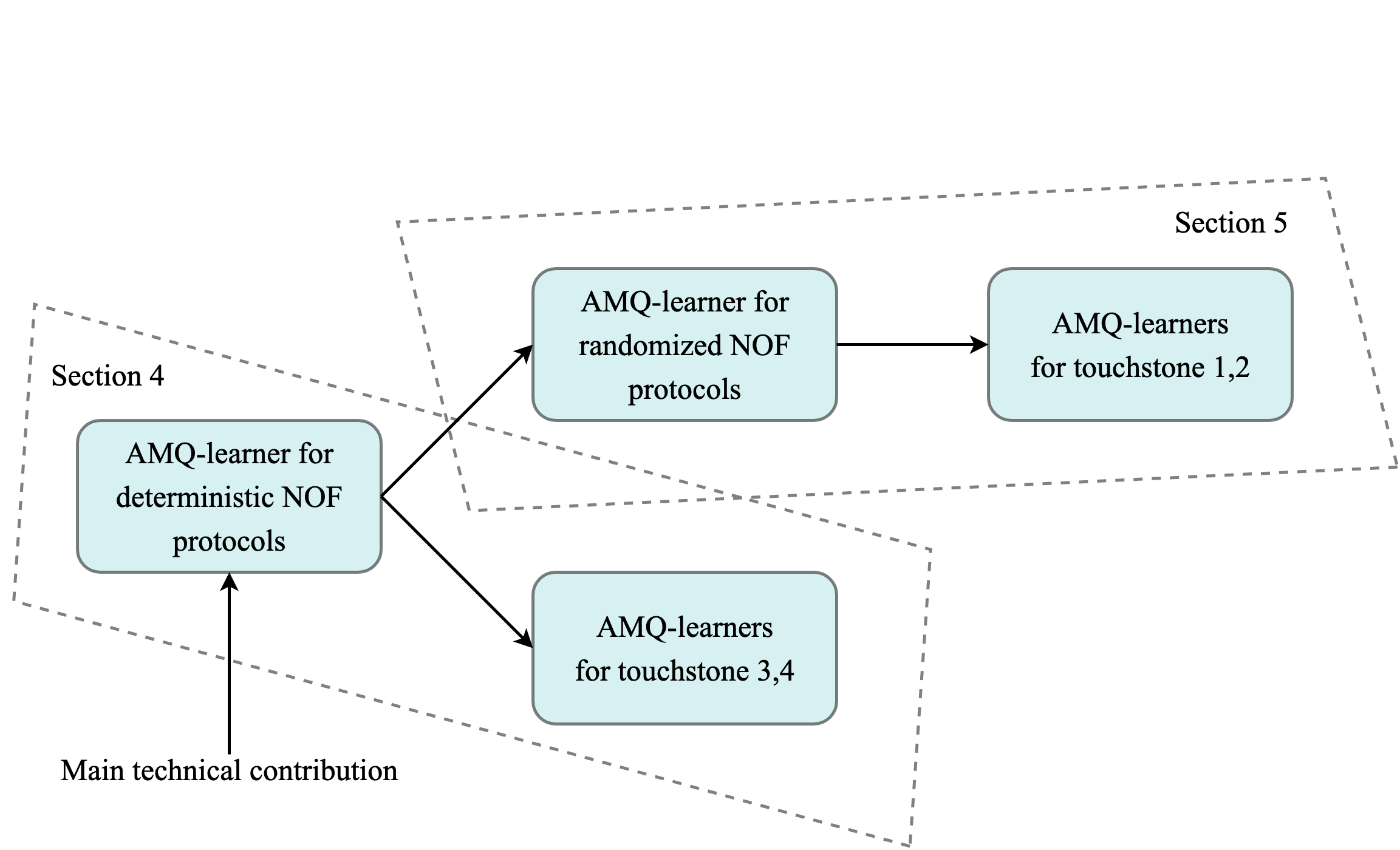}
\caption{The progression of our constructions.}
\end{figure}

\subsection{Related Results and Discussion}\label{related}

Not many touchstone classes that resemble ours are known to be nontrivially learnable, whether in AMQ-learning or the exact learning model. The most related results are the learning algorithms of \cite{servedio2017circuit} and \cite{carmosino2017agnostic}. We now compare our results to each. 

For comparison with \cite{servedio2017circuit}, let us focus on our results in the nontrivial exact learning model (see Section \ref{intro:apps}), which is the focus of their work.
Classes 1 and 2 are possibly more interesting than the classes shown to be nontrivially learnable in the exact learning with (only) equivalence queries by \cite{servedio2017circuit}. For example, LTF-of-$\AC^0$ and Parity-of-$\AC^0$ -- these circuit classes cannot compute the parity function or an LTF, respectively, while touchstone classes 1 and 2 can compute both. Additionally, our AMQ-learners for classes 1 and 2 are $n/O(1)$-nontrivial (when fixing a constant degree $k$ for the PTFs), while \cite{servedio2017circuit} only obtain $n/\polylog(n)$-nontrivial learners for LTF-of-$\AC^0$ and Parity-of-$\AC^0$.
However, it is important to note that our results are not necessarily formally stronger than \cite{servedio2017circuit} because we require membership queries, while \cite{servedio2017circuit} do not.

For comparison to \cite{carmosino2017agnostic}, let us focus on our results in the AMQ-learning model, which is their focus as well. \cite{carmosino2017agnostic} obtain a uniform-distribution-specific algorithm with some relatively mild agnostic guarantees for $\AC^0[p]$ circuits, for any prime $p$. More specifically, the AMQ learner for $\AC^0[p]$ finds a hypothesis with error as much as $\polylog(n)$ times the optimal hypothesis in $\AC^0[p]$.
In contrast, our AMQ-learner for classes 3 and 4 is \textit{fully} agnostic (i.e., it does not have the $\polylog(n)$ loss factor). Additionally, our AMQ-learner works over $k$-product distributions, rather than just the uniform distribution. In these two respects, our algorithm compares favorably. Plus, touchstone classes 3 and 4 are not contained in $\AC^0[p]$ for any prime $p$. However, their algorithm is much faster -- quasi-polynomial time -- while ours is approximately $n/2$-nontrivial. 

\paragraph{Could we improve our results?} An interesting note regarding touchstone classes 3 and 4 is that seemingly small improvements would be a breakthrough in computational learning theory (indeed classes 3 and 4 reach the AMQ-learning \textit{frontier}). To see this, recall that touchstone classes 3 and 4 are classes of functions of subexponential size but sublogarithmic degree $\sym^+$ circuits. Now observe that a \textit{quasipolynomial} size $\sym^+$ circuit with \textit{polylogarithmic} degree can compute all of $\ACC^0$ \cite{beigel1994acc}. Nontrivial AMQ-learning of $\ACC^0$ is an open problem, even when the distribution over unlabelled examples is fixed to be uniform, and the learning algorithm is not even agnostic. Thus, extending our AMQ-learner to handle a circuit with even \textit{one single} gate that computes a quasi-polynomial size, polylogarithmic degree $\sym^+$ function would be a breakthrough in learning theory. In fact, since nontrivial learning algorithms for $\ACC^0$ can be ``sped-up'' (by a result of \cite{oliveira2016conspiracies}), we would obtain a learning algorithm in time $2^{n^{\epsilon}}$, for any $\epsilon>0$. On the other hand, pseudorandom functions with exponential security computable by $\ACC^0$ circuits are conjectured \cite{boneh2018exploring}. These conjectures preclude $n/O(1)$-nontrivial AMQ-learners for $\ACC^0$.

\section{Extended Technical Overview}\label{sec:ExtendedTechnical}

In this section, we provide an in-depth overview of the technique behind our main technical contribution, from which all the AMQ-learners for touchstone classes 1-4 are derived. The main technical contribution constructs the AMQ-learner over $k$-product distributions for the touchstone class of functions admitting $k$-party NOF communication protocols of cost $c$ (we will set $c$ as a function of $n$ and $k$ later; for now, think of $k$ as a constant and $c:= \Theta(n^{0.99})$). 

We begin by defining $k$-party NOF communication, then define nontrivial AMQ-learning precisely, and finally walk through the main technical contribution.

\paragraph{$k$-party NOF communication.} The $k$-party NOF communication model is the following. There are $k$ parties, each having unbounded computational power, who try to collectively compute a function. The input to the function is separated into $k$ parts, each containing $n/k$ of the inputs (assume without loss of generality that $n$ is a multiple of $k$), and the $i^{th}$ party sees all parts except the $i^{th}$. The communication between the parties is by broadcast: any party can send a bit to all others simultaneously.

All parties may transmit messages according to a fixed protocol. The protocol determines, for every sequence of bits transmitted up to that point (the transcript), whether the protocol is finished (as a function of the transcript), or if (and which) party writes next (as a function of the transcript) and what that party transmits
(as a function of the transcript and the input of that party). Finally, the last bit transmitted is the output of the protocol, which is a value in $\{-1, 1\}$. The complexity measure of the protocol is the total number of bits transmitted by the parties.

\begin{definition}[$\dCC{k}{c}$ class]
$\dCC{k}{c}$ is defined to be the class of functions $f: \{0,1\}^n \rightarrow \{-1,1\}$ that, for any partition of the $n$ variables into $k$ length $n/k$ parts, can be computed by a $k$-party communication protocol with complexity $c$.
\end{definition}
Again, for now, one can think of $n$ as being a multiple of $k$ without loss of generality.
Within this extended technical overview, we will use the notation $[x_1, \cdots  x_k]$ to denote the concatenation of the $k$ parts $x_1, \cdots  x_k$, according to the appropriate partition, which is implicit. For simplicity, it is appropriate to think of the partition as contiguous blocks of $n/k$ bits, from ``left to right'' (depicted later in Figure \ref{fig:design-example2}). 


\paragraph{AMQ-learning.}

Valiant's PAC-learning model operates in the \textit{realizable} setting, where the concept $f: \{0,1\}^n \rightarrow \{-1,1\}$ is assumed to be part of a fixed class. In agnostic learning, the concept is modelled instead as an arbitrary \textit{distribution} $\cD$ over $\{0,1\}^n \times \{-1,1\}$. For $(x,y) \sim \cD$, $x$ is called the \textit{example}, and $y$ is called the \textit{label}. Note that the label for one example may be randomized. We refer to the marginal distribution $\rho$ over $x$ for $(x,y) \sim \cD$ as the distribution over unlabelled examples, or example distribution for short.

The goal of agnostic learning for a touchstone class $\Lambda$ is to find a hypothesis $h: \{0,1\}^n \rightarrow \{-1,1\}$ such that, with probability $1-\delta$, $h$ satisfies
\[
    \Ex{(x,y) \sim \cD}{h(x) \cdot y} \ge \opt{\Lambda} - \epsilon
\]
where the learning algorithm is given $\epsilon, \delta$ as input, and $\opt{\Lambda}$ is defined as
\[
   \opt{\Lambda} := \max_{c \in \Lambda} \Ex{(x,y)\sim \cD}{c(x) \cdot y}
\]
Throughout this paper, we will make assumptions about $\rho$, the distribution over unlabelled examples. We specify that we obtain agnostic learning when $\rho$ is restricted to be part of some class of distributions (e.g., $k$-product distributions). In AMQ-learning, we allow the learning algorithm to have access to a membership query oracle, which works as follows. The AMQ-learner can submit an example $z$, of its choice, and receive back the label $y$ with probability $\Pr[(x,y) \sim \cD | x = z]$.

In nontrivial AMQ-learning, the two notions of complexity that we consider are:

\begin{itemize}
    \item Query complexity: The total number of membership queries made by the AMQ-learner, in the worst-case.
    \item Time complexity: The worst-case runtime complexity of the AMQ-learner.
\end{itemize}

Normally, query and time complexity are measured as functions of $n, \epsilon, \delta$. This can make statements about complexity a bit messy in the nontrivial learning setting; we will fix $\delta := 2/3$, and $\epsilon := 2^{-n^{0.99}}$, and thus bound complexity purely as a function of $n$. Recall, an AMQ-learner is $s(n)$-nontrivial if time complexity is at most $2^{n-s(n)}$.

\paragraph{Towards the AMQ-learner.} 


Towards the full AMQ-learner, we construct a \textit{weak} AMQ-learner. A weak AMQ-learner essentially preserves a small portion of the correlation between the optimal hypothesis in the touchstone, and the target concept. We use a standard notion due to \cite{feldman2009distribution, kanade2009potential}.

\begin{definition}[Weak agnostic learning]
    For $0 < \alpha \le \gamma < 1$, a $(\gamma, \alpha)$-weak agnostic learner for $\cC$ is an algorithm $\cA$ that (when receiving some oracle access to a target concept $\cD$) outputs a hypothesis $h: \{0,1\}^n \rightarrow \{-1,1\}$ such that 
    \[
        \opt{\cC} = \max_{c \in \cal{C}} \Ex{(x,y)\sim \cD}{c(x)y} \ge \gamma \implies \Pr_{\cA}\left[\Ex{(x,y)\sim \cD}{h(x)y} \ge \alpha\right] \ge 2/3
    \]
\end{definition}

We construct a weak AMQ-learner of this kind, because \cite{feldman2009distribution, kanade2009potential} show that it can be \textit{boosted}, using \textit{distribution-specific} boosting algorithms, into a full-blown agnostic learning algorithm. This means that the boosting algorithm only ever invokes the weak AMQ-learner on a single distribution over unlabelled examples. This is important for us, because our initial result does not handle arbitrary distributions over unlabelled examples.

For the sake of simplicity in exposition, the main technical contribution is stated below with respect to the uniform distribution over unlabelled examples, and the concept is considered an arbitrary \textit{function}, rather than distribution. The corresponding theorem proved in the body of the paper extends this to $k$-product distributions over unlabelled examples (as discussed previously), as well as concepts that are arbitrary distributions. 

Define $\corr{\Lambda} := \max_{g \in \Lambda} \Ex{z \sim \{0,1\}^n}{g(z) \cdot f(z)}$.

    
\begin{theorem}[Main technical contribution]\label{intro:mainTech_weak}
Let $f :\{0,1\}^n \rightarrow \{-1,1\}$. There exists an oracle algorithm $\cA$ with the guarantee that if $\corr{\dCC{k}{c}} \ge \gamma$, then $\cA^{f}$ outputs $h: \{0,1\}^n \rightarrow \{-1,1\}$ such that 
\[
\Pr_{\cA}\left[\Pr_{z \sim \{0,1\}^n}\left[{h(z) = f(z)}\right] \ge 1/2+\alpha\right] \ge 2/3
\]
with:
\begin{itemize}
    \item $\alpha := \Omega(\gamma \cdot 2^{-c{2^k}-k})$,
    \item query complexity $q := 2^{n-s(n)}$, for $s(n) = n/k-\log(k) +4(\log(\gamma) - c2^k +k))$
    \item time complexity $t := O(q)$.
\end{itemize}
Here, $z \sim \{0,1\}^n$ denotes sampling over the uniform distribution.
\end{theorem}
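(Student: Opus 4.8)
The plan is to build the weak AMQ-learner by combining a communication-complexity upper bound with a sampling/random-restriction argument, in the spirit of the ``learning from natural proofs'' paradigm but routed through Nisan's method instead of Razborov--Smolensky. First, I would fix the optimal $g \in \dCC{k}{c}$ achieving $\Ex{z}{g(z)f(z)} \ge \gamma$, and fix the contiguous partition of $[n]$ into $k$ blocks of size $n/k$, so that $g$ is computed by a $k$-party NOF protocol $\Pi$ of cost $c$. The key structural fact I want to exploit is that a $k$-party NOF protocol of cost $c$ partitions $\{0,1\}^n$ into at most $2^c$ ``cylinder intersections'' --- combinatorial rectangles in the NOF sense --- on each of which $g$ is constant. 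Equivalently, if we fix the bits seen by party $1$ (i.e. blocks $2,\dots,k$, which is everything except block $1$), then across the residual cube $\{0,1\}^{n/k}$ for block $x_1$, the function $g$ restricted there is determined by a much simpler object; iterating this across parties, the protocol on a ``leaf'' depends on each block only through a bounded amount of information.

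The core algorithmic idea I would pursue: pick a uniformly random restriction that fixes all but one block, say block $1$, by drawing $x_2,\dots,x_k$ uniformly at random; this costs nothing in queries yet. On the restricted cube, $f$ restricted to $x_1 \mapsto f(x_1, x_2,\dots,x_k)$ still has correlation $\ge \gamma$ in expectation with $g$ restricted, and $g$ restricted is now a function of $x_1$ alone computable by a cheap object (a decision list / small-depth structure of size roughly $2^c$, because party $1$ never had to be told anything). Then I would \emph{learn this restricted function exactly} by brute force over the block: since $|x_1| = n/k$, reading off $f$ on all $2^{n/k}$ points of the restricted cube via membership queries, and taking the best among the $\le 2^{c \cdot 2^k}$ candidate restricted-$g$ behaviours (this is where the $2^{-c 2^k}$ factor in $\alpha$ and the $c2^k$ terms in $s(n)$ come from --- the number of cylinder-intersection leaves when all $k$ parties' contributions are accounted for scales like $2^{c 2^k}$ rather than $2^c$). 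The hypothesis $h$ is then: on input $z = [z_1,\dots,z_k]$, output the value of the learned restricted function at $z_1$ (ignoring the rest, or more precisely using the stored table indexed by the random restriction actually used). Averaging over the random choice of $x_2,\dots,x_k$ and a Markov/averaging argument gives that with probability $\ge 2/3$ the produced $h$ has $\Pr_z[h(z)=f(z)] \ge 1/2 + \Omega(\gamma 2^{-c2^k - k})$; the query and time counts are $2^{n/k}$ times the number of candidate hypotheses and a small polynomial overhead, which after bookkeeping gives $q = 2^{n - s(n)}$ with the stated $s(n)$.

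I would organize the write-up as: (i) a lemma that any $g \in \dCC{k}{c}$, after fixing the inputs of $k-1$ designated parties, becomes a function on $\{0,1\}^{n/k}$ belonging to an explicit family $\mathcal{F}$ of size $\le 2^{O(c 2^k)}$ (unrolling the protocol tree and counting cylinder intersections); (ii) a ``one-block exact learner'': given the promise that the restricted target is exactly one function in $\mathcal{F}$, brute-force it with $2^{n/k}$ membership queries --- but since the restricted $f$ is only correlated, not equal, to a member of $\mathcal{F}$, instead enumerate $\mathcal{F}$, evaluate empirical agreement with $f$ on all $2^{n/k}$ points, and keep the best; (iii) the averaging step over the random restriction, converting the global correlation $\gamma$ into a per-restriction advantage via $\Ex{x_{2..k}}{\corr{\text{restricted}}} \ge \gamma$ and a pick-the-good-restriction probabilistic argument with constant success; (iv) assembling the hypothesis and tallying complexities.

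The main obstacle, I expect, is step (i) combined with the correlation-transfer in step (iii): it is not enough that $g$ is constant on cylinder intersections --- I need the restricted function (for a \emph{random} fixing of $k-1$ blocks) to lie in a family small enough that enumerating it costs $\ll 2^{n/k}$ extra, yet rich enough to still carry $\Omega(\gamma)$ correlation with restricted $f$. Getting the dependence exactly $2^{c 2^k}$ (rather than, say, $2^{2^c}$, which would be useless) requires carefully choosing \emph{which} $k-1$ parties to fix and exploiting that once those are fixed, the free party is precisely the one who sees all the fixed data, so the protocol's branching on the free block is governed by the $\le c$ bits that party broadcasts, across $\le 2^k$ interleavings of speaking order --- this is the delicate combinatorial heart of the argument, and also what forces the NOF (rather than two-party) model and the $n/k$-type savings. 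A secondary technical point is ensuring membership queries suffice to simulate ``evaluate restricted $f$ on all of $\{0,1\}^{n/k}$'' --- this is immediate since a membership query on $[z_1, x_2,\dots,x_k]$ with the fixed random tail returns exactly the needed value (in the full theorem, where the concept is a distribution, one repeats queries $O(\log)$ times to estimate, which is absorbed into the $\log(1/\epsilon)$ and $\log(1/\gamma)$ slack already hidden in $s(n)$).
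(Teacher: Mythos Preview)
Your proposal has a genuine structural gap, and it is not the one you flagged. The hypothesis you describe, $h(z)=\hat g(z_1)$ for some function $\hat g$ learned on the random restriction $(x_2,\dots,x_k)$, depends only on the first block of the input. But the theorem requires
\[
\Pr_{z_1,\dots,z_k}\bigl[\hat g(z_1)=f(z_1,z_2,\dots,z_k)\bigr]\ge \tfrac12+\alpha,
\]
where $z_2,\dots,z_k$ are \emph{fresh uniform samples}, independent of the restriction $x_2,\dots,x_k$ you used to learn $\hat g$. Correlation of $\hat g(\cdot)$ with $f(\cdot,x_2,\dots,x_k)$ tells you nothing about correlation with $f(\cdot,z_2,\dots,z_k)$; indeed, $f$ need not be approximable by any function of $z_1$ alone even when $\corr{\dCC{k}{c}}$ is large (e.g.\ take $f=g\in\dCC{k}{c}$ to be inner product of $x_1$ with $x_2$ for $k=2$). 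So step (iii) of your outline does not transfer the global advantage to the hypothesis; the averaging argument you sketch proves the wrong statement. Relatedly, your proposed source of the factor $2^{-c2^k}$ --- the size of a family $\mathcal F$ of restricted-$g$ behaviours --- is not right: the algorithm does not know $g$ or the protocol, so enumerating ``all cost-$c$ NOF protocol restrictions'' is not a family of size $2^{O(c2^k)}$ in any useful sense.

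The paper's route is quite different and does not use cylinder-intersection structure or enumeration at all. It goes through the \emph{$k$-party norm} $R_k(f)$ and the correlation bound $\corr{\dCC{k}{c}}\le 2^c\,R_k(f)^{1/2^k}$: the assumption $\corr{\dCC{k}{c}}\ge\gamma$ forces $R_k(f)\ge(\gamma 2^{-c})^{2^k}$, which is exactly where the $2^{-c2^k}$ enters. The algorithm samples two random block-vectors $x^0,x^1$, views the $2^k$ ``design points'' $\{[x_1^{\epsilon_1},\dots,x_k^{\epsilon_k}]:\epsilon\in\{0,1\}^k\}$ as a pseudorandom object, and uses the product test (parity of the $2^k$ labels) as a distinguisher with advantage $R_k(f)$. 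A hybrid argument then converts this distinguisher into a next-bit predictor: with a random hybrid index $b$ and random guess bits $r$, one obtains a hypothesis that, on input $z$, slots the blocks of $z$ into the design table and outputs a product of precomputed $f$-values times a random sign. Crucially this hypothesis depends on \emph{all} blocks of $z$ (through the $2^k$ lookups), which is what your construction is missing. The $2^{n-n/k}$ query cost arises because preprocessing must tabulate $f$ on all completions of each partially-filled design row, and the final $2^{-k}$ loss comes from averaging over the hybrid index.
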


The main tool we use towards this theorem is the ``$k$-party norm''  of a function $f: \{0,1\}^n \rightarrow \{-1,1\}$, denoted $R_k(f)$, defined as:

\begin{definition}[$k$-party norm]
For $f: \{0,1\}^n \rightarrow \{-1,1\}$, the $k$-party norm of $f$ is defined as
\begin{equation}\label{intro:2partynorm}
    R_k(f) := \Ex{x^0_1, \cdots x^0_k, x^1_1, \cdots , x^1_k \sim \{0,1\}^{n/k}}{\prod_{\epsilon_1,\epsilon_2 \in \{0,1\}} f([x^{\epsilon_1}_1, \cdots  x^{\epsilon_2}_k])}
\end{equation}

\end{definition}
Again, we assume without loss of generality that $k$ divides $n$.

The main insight towards a weak AMQ-learner for $\dCC{k}{c}$ is that $R_k(f)$ upper bounds the correlation of $f$ with functions computable by deterministic $k$-party NOF communication protocols. 
Proved in all three of \cite{chung1993communication, raz2000bns, viola2007norms}, is the following bound: 
\begin{theorem}[$k$-party correlation bound]
\label{intro:thm:corrBound}
For every function $f : \{0,1\}^n \rightarrow \{-1,1\}$, 
\begin{equation}\label{intro:corrBound}
    \corr{\dCC{k}{c}} = \max\limits_{\pi \in \dCC{k}{c}}\left|\Ex{x}{f(x) \cdot \pi(x)}\right| \le 2^c \cdot R_k(f)^{1/2^k}
\end{equation}
for $x$ uniformly distributed over $\{0,1\}^n$.
\end{theorem}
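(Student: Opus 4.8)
The plan is to prove the $k$-party correlation bound by relating $\corr{\dCC{k}{c}}$ to the $k$-party norm through the standard cylinder-intersection structure of NOF protocols, combined with a Cauchy--Schwarz / cube-measure argument. First I would recall that any deterministic $k$-party NOF protocol of cost $c$ partitions $\{0,1\}^n$ into at most $2^c$ monochromatic \emph{cylinder intersections}: sets of the form $C = \bigcap_{i=1}^k C_i$ where $C_i$ does not depend on the $i$-th block $x_i$. Hence any $\pi \in \dCC{k}{c}$ can be written as $\pi(x) = \sum_{j} \sigma_j \mathbf{1}[x \in C^{(j)}]$ over these $\le 2^c$ pieces with signs $\sigma_j \in \{-1,1\}$. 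By the triangle inequality, $\big|\Ex{x}{f(x)\pi(x)}\big| \le \sum_j \big|\Ex{x}{f(x)\mathbf{1}[x \in C^{(j)}]}\big| \le 2^c \cdot \max_{C} \big|\Ex{x}{f(x)\mathbf{1}[x\in C]}\big|$, where the max is over cylinder intersections $C$. So it suffices to show that for every cylinder intersection $C$, $\big|\Ex{x}{f(x)\mathbf{1}[x\in C]}\big| \le R_k(f)^{1/2^k}$.

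The core step is this cylinder-intersection bound, which is proved by iterated Cauchy--Schwarz (the ``BNS argument''). Write $\Ex{x}{f(x)\prod_{i=1}^k \phi_i(x_{-i})}$ where $\phi_i = \mathbf{1}[C_i]$ depends only on the coordinates outside block $i$ (here $x_{-i}$ denotes all blocks except the $i$-th). One applies Cauchy--Schwarz successively, once for each block, each time ``folding'' a pair of copies $x_i^0, x_i^1$ of the $i$-th block and eliminating one of the indicator functions $\phi_i$ (which can be pulled out of the expectation over block $i$ since it does not depend on block $i$). After $k$ rounds all the $\phi_i$ have been absorbed or pulled out and bounded by $1$ in absolute value (since each $\phi_i \in \{0,1\}$), and the surviving quantity is exactly $\Ex{x_1^0,x_1^1,\ldots,x_k^0,x_k^1}{\prod_{\epsilon \in \{0,1\}^k} f(x_1^{\epsilon_1},\ldots,x_k^{\epsilon_k})} = R_k(f)$. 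Tracking the exponents, each Cauchy--Schwarz squares the left side and halves the exponent, so after $k$ steps one gets $\big|\Ex{x}{f(x)\mathbf{1}[x\in C]}\big|^{2^k} \le R_k(f)$, i.e.\ the claimed $R_k(f)^{1/2^k}$.

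The main obstacle — or at least the place demanding care — is the bookkeeping in the iterated Cauchy--Schwarz: making sure that at each step the function being pulled out genuinely does not depend on the block being doubled, that the uniform measure factorizes across blocks so the doubling is legitimate, and that the accumulated exponent of $2$ is exactly $2^k$ at the end. One also must be slightly careful that $R_k(f) \ge 0$ so that taking the $2^k$-th root is well-defined; this follows because grouping the $2^k$ factors in the definition of $R_k$ into the $2^{k-1}$ pairs differing only in $\epsilon_k$ exhibits $R_k(f)$ as an expectation of a square (over the last block's doubling), hence nonnegative. Combining the cylinder-intersection bound with the $2^c$ union bound over pieces gives $\corr{\dCC{k}{c}} \le 2^c \cdot R_k(f)^{1/2^k}$, as stated. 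Since this theorem is quoted from \cite{chung1993communication, raz2000bns, viola2007norms}, I would present the argument at this level of detail and cite those references for the routine verifications.
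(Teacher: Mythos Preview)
The paper does not give its own proof of this theorem; it is quoted as a known result from \cite{chung1993communication, raz2000bns, viola2007norms}. Your sketch is the standard cylinder-intersection plus iterated Cauchy--Schwarz argument that appears in those references, and it is correct as outlined.
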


This correlation bound can be used to obtain a \textit{natural proof}, in the sense of \cite{razborov1997natural}, against $\dCC{k}{c}$. This has observed for many years as folklore, but explicitly shown in \cite{karchmer2023distributional}. On first thought, it seems we should then be able to obtain learning algorithms using the technique of \cite{carmosino2016learning, carmosino2017agnostic}, but this does not work, because the ``combinatorial designs'' within the Nisan-Wigderson PRG, used at the heart of their construction, require too much communication to compute. This trivializes the bound in (\ref{intro:corrBound}).

The way we get around this is as follows: we use the combinatorial design that \textit{arises naturally} from the $k$-party norm itself, and then use the $k$-party correlation bounds as a \textit{distinguisher}. Observe: $R_k(f)$ is the expectation of the product of $f$ computed on all combinations of a list of $2k$, $n/k$-bit ``slices.'' So, we view the $2n$ bits in this list as a ``seed,'' and each of the $2^k$ combinations as the designs over slices of this seed. See Figure \ref{fig:design-example2} for a graphical depiction.

\begin{figure}[H]
\centering
\includegraphics[scale=0.12]{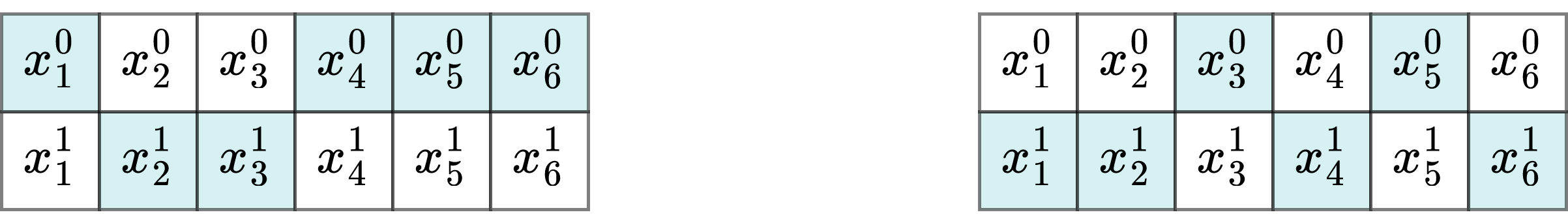}
\caption{Two examples of designs naturally arising from $R_6(f)$. Each $x^i_j$ is a $n/6$ bit slice of the $2n$ bit seed. Design indices `011000' and `110101' are highlighted on the left and right images respectively.}
\label{fig:design-example2}
\end{figure}

In other words, if we generate a random seed $x$, and then query the concept $f$ at every point $z_i$ indicated by the seed projected to design $i \in \{0,1\}^k$ (see Figure \ref{fig:projected-design}), then the distribution over the labels $\langle f(z_i)\rangle_{i \in \{0,1\}^k}$ can be distinguished from a uniformly random string in $\{-1,1\}^{2^k}$! The distinguisher simply takes the product of the bits. By design, this product has expectation at least $(\gamma 2^{-c})^{2^k}$, which follows directly from Theorem \ref{intro:thm:corrBound} (if we assume that the concept $f$ has $\gamma$ correlation with $\dCC{k}{c}$). For a random string, the expectation is zero. Therefore the distinguishing advantage is $(\gamma 2^{-c})^{2^k}$.

\begin{figure}[H]
\centering
\includegraphics[scale=0.17]{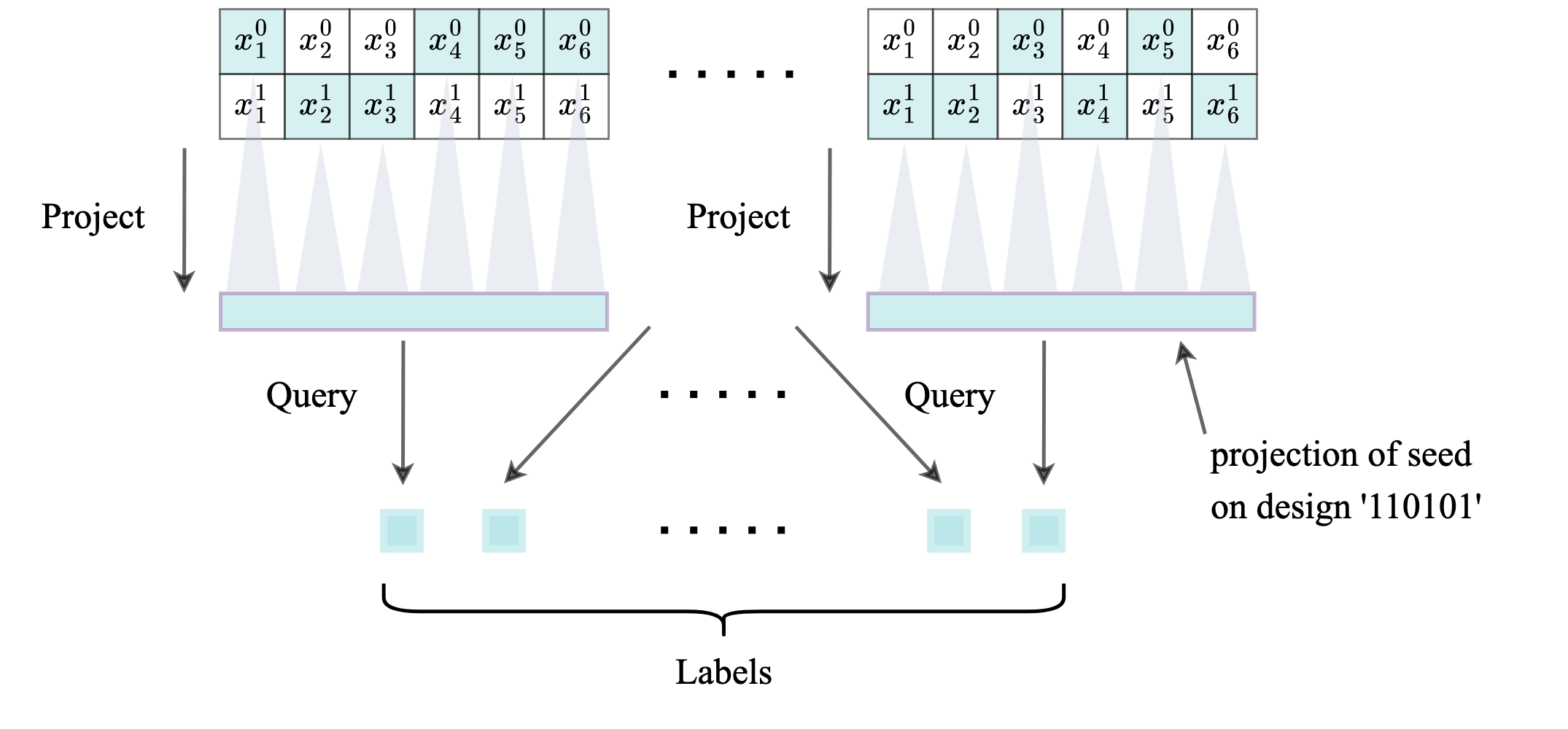}
\caption{Example of projected designs to form queries. The claim is that the sequence of $2^k$ labels can be distinguished from a uniformly random string in the set $\{-1,1\}^{2^k}$, whenever the concept has a large $k$-party norm.}
\label{fig:projected-design}
\end{figure}

Distinguishing labels queried in this randomized manner from a random string is sufficient to obtain a randomized prediction algorithm for $f$ with weak accuracy approximately $1/2+(\gamma 2^{-c})^{2^k}$, using a (rather involved) hybrid argument.
Then, we can proceed as \cite{carmosino2016learning} does: we use a randomized pre-processing stage to ``derandomize'' the randomized predictor. That is, we convert the randomized predictor to a randomized process that generates deterministic hypotheses with a weaker (but similar) prediction accuracy, with some non-negligible probability.
Finally, we use a constructive averaging argument to obtain a deterministic hypothesis with weak accuracy, with very high probability.



\paragraph{A brief comparison to \cite{carmosino2016learning}.} 

Interestingly, the design of our AMQ-learner follows a similar path to the influential learning algorithm for $\AC^0[p]$ from \cite{carmosino2016learning, carmosino2017agnostic} (the ``learning from natural proofs paradigm''), but with crucial twists along the way. 
In fact, our algorithm can be viewed as a remix of \cite{carmosino2016learning}, where the combinatorial designs and natural proof are different. The combinatorial designs we use follow the shape outlined in Figure \ref{fig:design-example2}, while \cite{carmosino2016learning} use the Nisan-Wigderson designs. The natural proof we leverage is useful against $\dCC{k}{c}$ and due to the sequence of works \cite{nisan1993communication, chung1993communication, raz2000bns, viola2007norms, karchmer2023distributional}, while \cite{carmosino2016learning} leverage natural proofs for $\AC^0[p]$ due to \cite{razborov1987lower, smolensky1987algebraic}.

Additionally, \cite{carmosino2017agnostic} extended the technique of \cite{carmosino2016learning} to AMQ-learning, over the uniform distribution, assuming \textit{tolerant} natural proofs.\footnote{Essentially, a tolerant natural proof can be thought of as a natural proof of an average-case lower bound.} However, their techniques still did not produce any strongly agnostic learning algorithm for $\AC^0[p]$, since the natural proofs of \cite{razborov1987lower, smolensky1987algebraic} do not have the requisite level of tolerance. In our case, Theorem \ref{intro:thm:corrBound} implies a highly tolerant natural proof, as it implies a strong \textit{average-case} lower bound against $\dCC{k}{c}$. Therefore our method obtains a full AMQ-learner, whereas \cite{carmosino2017agnostic} got a (much) weaker form of AMQ learning, which only obtains a hypothesis with error approximately as small as $\polylog(n)$ times the error rate of the optimal hypothesis within $\AC^0[p]$.





\section{Preliminaries}\label{prelims}

We cover basics of circuits. Other important definitions used in the paper are defined as needed throughout.

\paragraph{Complexity classes, circuits, gates, etc.} 

We consider various circuit classes with different bases (all being defined previously in the literature). $\AC^0$ is the class of constant depth, polynomial size, unbounded fan-in ${\sf AND/OR/NOT}$ circuits. $\AC^0[p]$ is the class of constant depth, polynomial size, unbounded fan-in ${\sf AND/OR/NOT/MOD}p$ circuits, where $p \in \nat$ is a prime number. An ${\sf XOR}$ gate takes the sum modulo 2 of its inputs.

An linear threshold function (LTF) gate computes an LTF defined by $t(x_1, \cdots x_m) := \sum^m_{i=1} w_ix_i \ge^? \theta$, which outputs 1 if and only if the sum of the inputs weighted by real coefficients $w_1, \cdots w_m$ exceeds a threshold $\theta$. When the weights are fixed to be $1$ and $\theta = m/2$, we call it a majority gate. A polynomial threshold function (PTF) is defined by a polynomial $p(x_1, \cdots, x_n)$ with real number coefficients. The $p(x)$ for input $x \in \{0,1\}^n$ is 1 if $p(x_1, \cdots, x_n) \ge 0$ and -1 otherwise. Note, the domain of the PTF is $\{0, 1\}^n$, so different polynomials can define identical PTFs. The degree of a
PTF is the degree of the polynomial $p$.
We use ${\sf PT}{\text -}{\sf Ckt}[k,m]$ to denote the class of circuits consisting of at most $m$ \textit{gates}, which can each compute a PTF of degree $k$, and ${\sf PT}{\text -}{\sf Dt}[k,d]$ to denote the class of decision trees consisting of at most depth $d$, with internal nodes computing PTFs of degree $k$.

A $\sym^+$ circuit of size $s$ and degree $d$ is a pair $(p, \theta)$, where $p$ is an $n$-variable degree-$d$ multi-linear polynomial over the integers. Size $s$ means that the magnitude of the coefficients of $p$ is at most $s$. On the other hand, $\theta: \mathbb{Z} \rightarrow \{-1,1\}$ is any function. The $\sym^+$ circuit $(p, \theta)$ evaluates by computing the function $s: \{0,1\}^n \rightarrow \{-1,1\}$ defined as $s(x) = \theta(p(x))$. We also consider circuits, where each gate computes any function computable by a $\sym^+$ circuit.
We denote by $\sym^+{\text -}{\sf Ckt}[k, t, m]$ the class of circuits consisting of at most $m$ gates, which can each compute any function computable by a $\sym^+$ circuit of degree $k$ and size $t$. We denote by $\sym^+{\text -}{\sf Dt}[k, t, d]$ the class of decision trees of depth $d$, with node queries computable by any $\sym^+$ circuit of degree $k$ and size $t$.


\section{Learning $k$-Party NOF Communication Protocols}\label{section:MQ-learning}

In this section, we will construct an agnostic learning algorithm for any touchstone class that has an efficient $k$-party deterministic number-on-forehead (NOF) communication protocol (defined previously but repeated below for the reader's convenience).

The $k$-party NOF communication model is the following. There are $k$ parties, each having unbounded computational power, who try to collectively compute a function. The input to the function is separated into $k$ parts, each containing $n/k$ of the inputs (assume without loss of generality that $n$ is a multiple of $k$), and the $i^{th}$ party sees all parts except the $i^{th}$. The communication between the parties is by broadcast: any party can send a bit to all others simultaneously.

All parties may transmit messages according to a fixed protocol. The protocol determines, for every sequence of bits transmitted up to that point (the transcript), whether the protocol is finished (as a function of the transcript), or if (and which) party writes next (as a function of the transcript) and what that party transmits
(as a function of the transcript and the input of that party). Finally, the last bit transmitted is the output of the protocol, which is a value in $\{-1, 1\}$. The complexity measure of the protocol is the total number of bits transmitted by the parties.

\begin{definition}[$\dCC{k}{c}$ class]
$\dCC{k}{c}$ is defined to be the class of functions $f: \{0,1\}^n \rightarrow \{-1,1\}$ that, for any partition of the $n$ variables into $k$ length $n/k$ parts, can be computed by a $k$-party communication protocol with complexity $c$.
\end{definition}
Again, for now, one can think of $n$ as being a multiple of $k$ without loss of generality. We will obtain AMQ-learners that do not require this.




To construct the learning algorithm, we will use a two-step process: first, we will construct a \textit{weak} agnostic learner, and then we will apply a distribution-specific agnostic boosting techniques, such as by \cite{feldman2009distribution} or \cite{kanade2009potential}.

\subsection{Weak Learning}

For \textit{weak} agnostic learning, we use a standard notion introduced by \cite{kalai2008agnostic} , because this type of weak learner can be subsequently used as an oracle in the boosting algorithms of \cite{feldman2009distribution, kanade2009potential}. Basically, the weak agnostic learner is required to recover some of the advantage that exists for the optimal concept $c \in \cC$. 

\begin{definition}[Weak agnostic learning]
    For $0 < \alpha \le \gamma < 1$, a $(\gamma, \alpha)$-weak agnostic learner for $\cC$ is an algorithm $\cA$ that (when receiving some oracle access to a target concept $\cD$) outputs a hypothesis $h: \{0,1\}^n \rightarrow \{-1,1\}$ such that 
    \[
        \max_{c \in \cal{C}} \Ex{(x,y)\sim \cD}{c(x)y} \ge \gamma \implies \Pr_{\cA}\left[\Ex{(x,y)\sim \cD}{h(x)y} \ge \alpha\right] \ge 2/3
    \]
\end{definition}

Our AMQ-learner works for the following class of distributions, which we term $t$-products. We use the notation $a||b$ to denote string concatenation of $a$ and $b$.

\begin{definition}[$t$-product distribution]
    Assume that $t \in \mathbb{Z}$ divides $n \in \mathbb{Z}$. A $t$-product distribution $\rho = \langle (\mu_i)_{i \in [t]}, \sigma \rangle$ is defined by $t$ distributions $(\mu_i)_{i \in [t]}$ each over $\{0,1\}^{n/t}$, and a permutation $\sigma$ over $[n]$. For $z \in \{0,1\}^{n}$, let $\sigma(z) := z_{\sigma(1)}||z_{\sigma(2)}||\cdots ||z_{\sigma(n)}$. To sample $\rho$, 
    \begin{enumerate}

        \item For each $i \in [t]$, sample $x_i \sim \mu_i$.
        \item Output $\sigma(x_1||\cdots ||x_t)$.
    \end{enumerate}
    Observe that a $t$-product distribution is defined to sample from $\{0,1\}^{n}$, since we assume that $t$ divides $n$.
    We denote by $\Delta^t_\sigma$ the class of $t$-product distributions that apply the permutation $\sigma$.
\end{definition}

The assumption that $t$ divides $n$ in the definition of $t$-products will also not sacrifice any generality in the ensuing AMQ-learner.
Rephrased now more formally, the end-goal of this subsection is to prove the following theorem.

\begin{theorem}\label{mqWeakAg}
There exists a $(\gamma, \gamma \cdot 2^{-c{2^k}-k})$-weak agnostic learner for $\dCC{k}{c}$. The learner uses membership queries and learns over any $\rho \in \Delta^{k}_\sigma$ (for some fixed $\sigma$), and has
\begin{itemize}
    \item query complexity $q := 2^{n-s(n)}$, for $s(n) = n/k-\log(k) +4(\log(\gamma) - c2^k +k))$
    \item time complexity $t := O(q)$.
    
\end{itemize}
\end{theorem}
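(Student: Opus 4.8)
The plan is to build the weak AMQ-learner for $\dCC{k}{c}$ over a $k$-product distribution $\rho \in \Delta^k_\sigma$ by reducing, via the boosting-friendly weak-learning guarantee, to the uniform-distribution prediction algorithm promised by \thmref{intro:mainTech_weak}. First I would set up the combinatorial design that arises from the $k$-party norm $R_k$: given the (unknown, but fixed) permutation $\sigma$ and the block structure it induces, a ``seed'' is a list of $2k$ slices $x^0_1,\dots,x^0_k,x^1_1,\dots,x^1_k$, each of length $n/k$, drawn so that slice $j$ of version $b$ is sampled from $\mu_j$ (the $j$-th marginal of the product); the $2^k$ designs are the concatenations $[x^{\epsilon_1}_1,\dots,x^{\epsilon_k}_k]$ over $\epsilon \in \{0,1\}^k$, each re-permuted by $\sigma$ so that it is genuinely distributed as $\rho$. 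The key point is that each of the $2^k$ design points is individually $\rho$-distributed (so membership queries at those points are legitimate and the correlation $\corr{\dCC{k}{c}}$ under $\rho$ is the relevant quantity), while the \emph{product} of the $2^k$ labels $\prod_{\epsilon} f([x^{\epsilon_1}_1,\dots,x^{\epsilon_k}_k])$ has expectation exactly the ($\rho$-analogue of the) $k$-party norm $R_k(f)$.

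Next I would invoke the correlation bound (\thmref{intro:thm:corrBound}), adapted to the product measure: if $\corr{\dCC{k}{c}} \ge \gamma$ under $\rho$, then $R_k(f) \ge (\gamma \, 2^{-c})^{2^k}$, so the distinguisher ``multiply the $2^k$ queried labels'' has advantage at least $(\gamma 2^{-c})^{2^k}$ over a uniform random string in $\{-1,1\}^{2^k}$. I would then run the hybrid argument (as in \cite{carmosino2016learning}): swapping, one design at a time, a true label $f(z_i)$ for an independent uniform bit, the telescoping sum of hybrid advantages must total at least $(\gamma 2^{-c})^{2^k}$, so some hybrid index $i$ exhibits advantage at least $(\gamma 2^{-c})^{2^k}/2^k$. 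Fixing the random bits for designs $\ne i$ and using the membership-query oracle to fill in $f$ on those $2^k - 1$ points, this yields a \emph{randomized} predictor for $f(z_i) = f(x)$ on a fresh $x \sim \rho$ with advantage $\Omega(\gamma \, 2^{-c 2^k - k})$; the fresh sample $x$ plays the role of slice $i$ and the remaining $2k-1$ slices are the predictor's internal randomness. Then, following \cite{carmosino2016learning}, a randomized pre-processing stage fixes (``derandomizes'') the internal slices to produce, with non-negligible probability, a deterministic hypothesis of comparable advantage, and a constructive averaging/Chernoff argument over $2^{n-s(n)}$ sampled hypotheses boosts the success probability to $2/3$. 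Counting queries: each evaluation of the randomized predictor costs $2^k - 1$ membership queries, and we evaluate it on roughly $2^{n-s(n)}$ points to estimate advantages and to run the averaging step, giving $q = 2^{n - s(n)}$ with $s(n) = n/k - \log k + 4(\log \gamma - c 2^k + k)$ and $t = O(q)$, matching the statement; the final step is to observe that the output $h$ satisfies $\Pr_{x\sim\rho}[h(x)=f(x)] \ge 1/2 + \Omega(\gamma 2^{-c2^k-k})$, i.e. $\Ex{(x,y)\sim\cD}{h(x)y} \ge \gamma \cdot 2^{-c2^k-k}$ (absorbing constants), which is exactly the $(\gamma, \gamma 2^{-c2^k-k})$-weak agnostic guarantee, and that the whole construction used only that $\rho$ is a $k$-product with permutation $\sigma$ — never the identities of the $\mu_j$'s.

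The main obstacle I expect is making the hybrid argument go through at the level of \emph{distributions} (arbitrary label distribution $\cD$ conditioned on $x$) rather than a fixed function $f$, while simultaneously respecting the $k$-product structure: the slices in a design are drawn from \emph{different} marginals $\mu_1,\dots,\mu_k$, so one must be careful that the hybrid-swap keeps every intermediate point correctly $\rho$-distributed and that the ``fresh input = slice $i$'' identification is valid regardless of which block $i$ falls in (this is where the unknown $\sigma$ and the unknown marginals must be handled uniformly). A secondary subtlety is bookkeeping the loss factors — the $2^{-c2^k}$ from the correlation bound, the extra $2^{-k}$ from picking out the best hybrid, and the polynomial-in-$1/\text{advantage}$ blow-up in sample size from the averaging argument — and checking they land inside the stated $4(\log\gamma - c2^k + k)$ slack in $s(n)$; since $\gamma$, $c$, and $k$ are all treated as fixed parameters here (with $c \approx n^{0.99}$ and $k$ sublogarithmic in the applications), this is routine but must be done carefully to preserve nontriviality $s(n) \in \omega(\log n)$.
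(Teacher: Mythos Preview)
Your high-level plan --- $k$-party-norm designs, Theorem~\ref{intro:thm:corrBound} as distinguisher, hybrid argument, then constructive averaging --- is exactly the paper's. Two genuine gaps remain.

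First, your query accounting misses the dominant cost. You write that each predictor evaluation costs $2^k-1$ queries and that you evaluate on ``roughly $2^{n-s(n)}$ points''; neither is where the $n/k$ term in $s(n)$ comes from. The weak learner must output a \emph{standalone} hypothesis $h$, and once the seed slices and hybrid index $\bar b$ are frozen, computing $h(z)$ still needs the values $f(\mathbf{B}|_a)$ for all $a<\bar b$ --- points that depend on $z$. The paper's key observation is that any design index $a\neq\bar b$ must agree with $b$ in at least one coordinate, so every such point has at least one block pinned to a seed slice; hence all required $f$-values live in a union of $2k$ block restrictions of size $2^{n-n/k}$ each, and one precomputes this lookup table once per candidate hypothesis. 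That $2k\cdot 2^{n-n/k}$ is the source of the $n/k-\log k$ in $s(n)$. Only $O(\alpha^{-2})$ candidates are generated and each is tested on $O(\alpha^{-2})$ samples, contributing the $\alpha^{-O(1)}$ factor (the $4(\log\gamma-c2^k+k)$ piece); your ``$2^{n-s(n)}$ sampled hypotheses'' conflates these two sources.

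Second, you flag the extension from a fixed function $f$ to a probabilistic concept $\cD$ as the main obstacle but leave it open. The paper's resolution is short but essential: it shows the entire procedure (all runs of the predictor and all test samples) can be arranged to issue \emph{no duplicate} membership queries, by tracking the block-restriction representatives across runs at no asymptotic extra cost. Since each point is queried at most once, a randomized labeler is indistinguishable from some deterministic function from the algorithm's perspective, and the analysis for functions transfers verbatim to yield the $(\gamma,\gamma\cdot 2^{-c2^k-k})$ guarantee for arbitrary $\cD$.
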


First, we will prove an easier version, and then extend it via simple lemmas proved after. 

\begin{definition}[Boolean function correlation]
    Define $\corr{\Lambda} := \max_{h \in \Lambda}|\ex{f(x) \cdot h(x)}|$, where $x$ is sampled uniformly at random from the domain. Define $\corrD{\Lambda}{\rho} := \max_{h \in \Lambda}|\ex{f(x) \cdot h(x)}|$, where $x \sim \rho$.
\end{definition}

\begin{theorem}\label{mqWeakAgEasy}
Let $f :\{0,1\}^n \rightarrow \{-1,1\}$. Given the permutation $\sigma$, there exists an oracle algorithm $\cA$ with the guarantee that if $\corrD{\dCC{k}{c}}{\rho} \ge \gamma$, then for any $\rho \in \Delta^{k}_\sigma$, $\cA^{f,\rho}$ (with query access to $f$ and sampling access to $\rho$) outputs $h: \{0,1\}^n \rightarrow \{-1,1\}$ such that 
\[
\Pr_{\cA}\left[\Pr_{z \sim \rho}\left[{h(z) = f(z)}\right] \ge 1/2+\alpha\right] \ge 2/3
\]
with:
\begin{itemize}
    \item $\alpha := \Omega(\gamma \cdot 2^{-c{2^k}-k})$,
    \item query complexity $q := 2^{n-s(n)}$, for $s(n) = n/k-\log(k) +4(\log(\gamma) - c2^k +k))$
    \item time complexity $t := O(q)$.
\end{itemize}
\end{theorem}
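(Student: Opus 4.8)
## Proof Proposal for Theorem \ref{mqWeakAgEasy}

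The plan is to build the weak predictor in three stages, closely following the ``learning from natural proofs'' architecture of \cite{carmosino2016learning}, but with the Nisan--Wigderson designs replaced by the designs arising naturally from the $k$-party norm $R_k(f)$. First I would set up the \emph{randomized predictor}. On input $z \in \{0,1\}^n$ (thought of as one of the $2^k$ ``slots'' in a seed), the algorithm samples the remaining $2n - n/k$ bits of a seed $x$ consisting of $2k$ slices $x^0_1,\dots,x^0_k,x^1_1,\dots,x^1_k$, each of length $n/k$, where $z$ is planted as the slot corresponding to a uniformly random design index $i^\star \in \{0,1\}^k$; that is, $z$ is written into the positions $(x^{i^\star_1}_1,\dots,x^{i^\star_k}_k)$ after undoing the permutation $\sigma$ (this is where knowing $\sigma$ is used — so that the $k$-product structure of $\rho$ lines up with the block partition). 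It then queries the membership oracle at the other $2^k - 1$ points $z_i$ given by the remaining design indices, forms the product $P = \prod_{i \ne i^\star} f(z_i) \in \{-1,1\}$, and predicts $f(z) \approx P$ (outputting $P$, or a random bit if some query fails). The key claim is that by Theorem \ref{intro:thm:corrBound}, if $\corrD{\dCC{k}{c}}{\rho} \ge \gamma$ then $R_k^\rho(f) \ge (\gamma 2^{-c})^{2^k}$, where $R_k^\rho$ is the $k$-party norm taken with the $i$-th slice drawn from the $i$-th marginal of $\rho$; since $R_k^\rho(f) = \Ex{x}{\prod_i f(z_i)}$ and each of the $2^k$ slots is symmetric, a standard averaging/hybrid argument shows that over the planting of $z$ and the seed randomness, $\Ex{}{P \cdot f(z)} \ge \Omega(R_k^\rho(f)) \ge \Omega((\gamma 2^{-c})^{2^k})$, i.e.\ the randomized predictor agrees with $f$ with probability $1/2 + \Omega(\gamma 2^{-c2^k})$ over $z \sim \rho$ and internal coins. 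I'd want to be careful here that the ``symmetry'' is genuine — the $2^k$ factors in the $R_k^\rho$ product do all have identical marginals even under $\rho$, because each slot is a valid concatenation of one sample from each $\mu_i$, so this goes through.

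Second, I would \emph{derandomize} the predictor into a distribution over fixed (deterministic) hypotheses, exactly as in \cite{carmosino2016learning}: the randomized predictor uses some $r$ bits of randomness (the seed-completion bits and $i^\star$); fix those bits once and for all by sampling $r$ uniformly, obtaining a deterministic circuit $h_r$ that makes $2^k - 1$ oracle calls. An averaging argument over $r$ shows that with probability $\ge \Omega(\gamma 2^{-c2^k})$ over $r$, the resulting $h_r$ satisfies $\Pr_{z\sim\rho}[h_r(z) = f(z)] \ge 1/2 + \Omega(\gamma 2^{-c2^k})$ — here using that the advantage is bounded in $[0,1]$ so a good fraction of $r$'s must individually be good. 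But $h_r$ still queries $f$; the third stage replaces each oracle call by a lookup into a table built during preprocessing. The preprocessing samples $q := 2^{n - s(n)}$ points, but the crucial point (and the one that controls query complexity) is \emph{which} points: since every query $z_i$ made by $h_r$ on input $z$ shares $n - n/k$ of its coordinates with $z$ and differs only in the block indexed by the flipped coordinates of $i$ versus $i^\star$, all queries ever made lie in an affine-type neighborhood determined by replacing one block of $z$; by fixing the seed-completion bits, the set of distinct query points that $h_r$ can ever generate, as $z$ ranges over $\{0,1\}^n$, has size roughly $2^{n - n/k}$ times a $\mathrm{poly}(2^k/\gamma)$ factor — which is exactly the claimed $q = 2^{n-s(n)}$ with $s(n) = n/k - \log k + 4(\log\gamma - c2^k + k)$. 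We query $f$ on all of them up front, store a table, and then $h_r$ becomes a genuine deterministic hypothesis with no oracle calls. Finally, I'd use a standard empirical-estimate + amplification step (run many independent $r$'s, estimate each $\Pr_{z\sim\rho}[h_r(z)=f(z)]$ using fresh samples from $\rho$, keep the best) to boost the success probability from $\Omega(\gamma 2^{-c2^k})$ to $2/3$, paying only polynomial factors in $2^k/\gamma$ in the query and time budget, which are absorbed into the $\Omega(\cdot)$ in $\alpha$ and the additive slack in $s(n)$.

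The main obstacle I anticipate is the hybrid argument in the first stage: going from the statement ``$\Ex{}{\prod_{i} f(z_i)} \ge \beta$'' (a statement about a $2^k$-fold product with all slots symmetric) to ``predicting any \emph{one} designated slot from the other $2^k - 1$ gives advantage $\Omega(\beta)$'' is not literally immediate, because the product being positive in expectation does not by itself say the last factor is predictable from the rest — one needs to set it up as a telescoping/hybrid over the $2^k$ slots and argue that \emph{some} slot (hence, by symmetry and re-randomizing $i^\star$, a random slot in expectation) is predictable with the full advantage, and then argue the predictor that outputs the product of the other coordinates is the right one. \cite{carmosino2016learning} handle the analogous step for $\mathrm{mod}\text{-}p$ / the Razborov--Smolensky test; here the test is just a $\pm 1$ product, which is if anything cleaner, but the bookkeeping to get the constants to land at $\alpha = \Omega(\gamma 2^{-c2^k - k})$ (note the extra $2^{-k}$ — plausibly the $1/2^k$ loss from picking a random slot $i^\star$) and to confirm that the preprocessing really only needs $2^{n-s(n)}$ queries with that precise $s(n)$ will require care. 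A secondary, more technical point is making sure the permutation $\sigma$ is handled correctly so that planting $z$ into a ``slot'' of the seed respects the $k$-product structure — but since $\sigma$ is given to the algorithm, this is just a change of coordinates and should not cause real trouble.
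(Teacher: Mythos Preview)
Your proposal is correct and shares the paper's overall architecture (the \cite{carmosino2016learning} template with the $k$-party-norm designs in place of Nisan--Wigderson, then derandomization by averaging, then trial-and-error amplification to confidence $2/3$). The one genuine difference is the randomized predictor itself. The paper's algorithm $\mathcal{P}$ plants $z$ at slot $\bar b$ but uses \emph{true} oracle values only for indices $a<\bar b$ and \emph{random bits} $r_a$ for $a\ge\bar b$, outputting $r_{\bar b}\cdot\prod_{a<\bar b} f(\mathbf{B}|_a)\cdot\prod_{a\ge\bar b} r_a$; the advantage is then proved by a full Yao-style hybrid over the $2^k$ positions (Lemma~\ref{adv}), and the extra $2^{-k}$ you spotted in $\alpha$ is precisely the loss from averaging over the random hybrid index. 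Your predictor instead outputs $P=\prod_{i\ne i^\star} f(z_i)$ using all $2^k-1$ true values. This is simpler and in fact sidesteps the $2^{-k}$ loss: since everything is $\{-1,1\}$-valued, $P\cdot f(z)=\prod_i f(z_i)$, hence $\mathbb{E}[P\cdot f(z)]=R_k^\rho(f)$ \emph{on the nose}, giving $\Pr[P=f(z)]=\tfrac12(1+R_k^\rho(f))$ with no hybrid at all. So the ``main obstacle'' you anticipate is not one in your own setup --- for $\pm 1$ products, the statement $\mathbb{E}[\prod_i f(z_i)]\ge\beta$ is \emph{literally} the statement that the product of any $2^k-1$ coordinates predicts the remaining one with advantage $\beta/2$; no telescoping is needed, and the choice of $i^\star$ is irrelevant by symmetry. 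The remaining stages (averaging over the seed to get a good deterministic hypothesis with probability $\Omega(\alpha)$, precomputing the $O(k\cdot 2^{n-n/k})$-size query table dominated by design indices at Hamming distance $1$ from $i^\star$, and empirical selection among $O(\alpha^{-2})$ candidates) line up with the paper's Lemmas~\ref{redux2Pred}, \ref{queryC}, and \ref{timeC}.
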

\begin{proof}

To prove the theorem, we will:
\begin{enumerate}
    \item Present an algorithm $\cP^{f, \rho}$ that nearly witnesses the statement.
    \item We will then argue that establishing advantage, query and time complexity parameters for $\cP^{f, \rho}$ similar to those desired by Theorem \ref{mqWeakAgEasy} suffices to prove Theorem \ref{mqWeakAgEasy} (Lemma \ref{redux2Pred}).
    \item Finally, we will prove that $\cP^{f, \rho}$ does satisfy those parameters (Lemmas \ref{adv}, \ref{queryC} and \ref{timeC}). 
\end{enumerate}

Lemmas \ref{redux2Pred}, \ref{adv}, \ref{queryC} and \ref{timeC} suffice to prove Theorem \ref{mqWeakAgEasy}.
\end{proof}

\textbf{Notation.} For any string $z \in \{0,1\}^*$ denote the bit-wise negation of $z$ by $\bar{z}$. For the $2 \times n$ table $\mathbf{B}$ with entries $x^0_1, x^1_1 \cdots x^0_n, x^1_n \in \{0,1\}^m$, use $\mathbf{B}|_z \in \{0,1\}^{mn}$ to denote the concatenation of strings $x^{z_1}_1x^{z_2}_2\cdots x^{z_n}_n$. For a ``lookup'' table $\mathbf{T}$, let $\mathbf{T}[i]$ (for $i \in \{0,1\}^m$) denote the value stored at location $i$ of $\mathbf{T}$. For shorthand, let $k:=k(n), c:=c(n)$.

\begin{algorithm}[H]
  \caption{$\cP^{f, \rho}$}
  \label{P1}
  \begin{algorithmic}[1]
    \State \textbf{Input}: $k,n \in \nat$. A description of $\sigma$. Query access to $f$, sample access to $\rho$.
    \LComment{Begin preprocessing.}
    \State Sample $x^0,x^1 \sim \rho$.
    \State Apply $x^0 \gets \sigma^{-1}(x^0)$ and $x^1 \gets \sigma^{-1}(x^1)$
    \State Split $x^0,x^1$ into $x^0_1, x^1_1 \cdots x^0_k, x^1_k \in \{0,1\}^{n/k}$, which are $k$ $n/k$-bit blocks so that each $x^0_i,x^1_i$ is sampled according to the $i^{th}$ distribution that makes up $\rho$.
    \State Choose uniformly random string $b \in \{0,1\}^{k}$. \label{choiceb}
    \State Choose uniformly random string $r \in \{-1,1\}^{2^k}$.
    Partially fill a $2 \times k$ table $\mathbf{B}$, such that $\mathbf{B}[b_i,i] = x^{b_i}_i$. Fill other entries with $\{*\}^{n}$.
    \State \textbf{For} every string $a < \bar{b}$ 
    \begin{quote}
        Viewing $\mathbf{B}|_a$ as a partial assignment $z^*$ of $n$ bits, \textbf{query} $f$ on all $n$-bit points consistent with $\sigma(z^*)$.\\
        Place the query-label pairs in a lookup table $\mathbf{T}$.
    \end{quote} 
    \LComment{End preprocessing.}
    \State \textbf{Generate and output} a circuit $h$ (with the random string $r \in \{-1,1\}^{2^k}$ hard-wired), according to the following template:
    \begin{quote}
    On input $z \in \{0,1\}^n$,\\
    Place the $i^{th}$ out of $k$, $n/k$-bit blocks of $z$ inside $\mathbf{B}[\bar{b}_i,i]$.\\
    Compute the values
    \[
    v = \prod_{a < \bar{b}} \mathbf{T}[\mathbf{B}|_a] \ \ \text{and} \ \ v' = \prod_{a \ge \bar{b}} r_a
    \]
    Output $v\cdot v' \cdot r_{\bar{b}}$
    \end{quote}
  \end{algorithmic}
\end{algorithm}

\begin{lemma}\label{redux2Pred}
Suppose that $\cP$ makes at most $q := q(n)$ queries while running in time $t:= t(n)$, and that whenever $\corrD{\dCC{k}{c}}{\rho} \ge \gamma$, it holds that
    \begin{align}\label{eqRandAdv}
        \Pr_{R,z \sim \rho}\Big[h(z) = f(z) : h \leftarrow \cP^{f, \rho}\Big] \ge \frac{1}{2} + \Omega(\alpha)
    \end{align}
for randomness $R = (r,b,x^0_1,x^1_1, \cdots x^0_k,x^1_k)$ of $\cP$. Then there exists a randomized oracle algorithm $\cA^{f, \rho}$ with the guarantee that if $\corrD{\dCC{k}{c}}{\rho} \ge \gamma$, then $\cA^{f, \rho}$ outputs $h: \{0,1\}^n \rightarrow \{-1,1\}$ such that 
\[
\Pr_{\cA}\left[\Pr_{z \sim \rho}\left[h(z) = f(x)\right] \ge \frac{1}{2}+ \Omega(\alpha)\right] \ge 2/3
\]
with:
\begin{itemize}
    \item $\alpha := \Omega(\gamma \cdot 2^{-c{2^k}-k})$,
    \item query complexity $q' := O(\alpha^{-4})\cdot q$,
    \item time complexity $t' := O(q')$.
\end{itemize}
\end{lemma}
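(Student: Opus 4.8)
The plan is to establish Lemma~\ref{redux2Pred} in two moves: first a \emph{derandomization of the internal randomness} $R$ of $\cP$, and then a \emph{constructive averaging} step that converts an unreliable randomized-hypothesis generator into a reliable one that succeeds with probability $2/3$. Both moves are standard once the advantage in \eqref{eqRandAdv} is in hand, and neither changes the running time by more than an $\operatorname{poly}(1/\alpha)$ factor, which is exactly the slack allowed in the statement ($q' = O(\alpha^{-4})q$, $t' = O(q')$).

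For the first move, I would start from \eqref{eqRandAdv}, which says $\Pr_{R, z\sim\rho}[h(z)=f(z)] \ge 1/2 + c_0\alpha$ for the random hypothesis $h \leftarrow \cP^{f,\rho}$ and some absolute constant $c_0>0$. By averaging over $R$, at least a $c_0\alpha/2$ fraction of settings of $R$ yield a (now deterministic) hypothesis $h_R$ with $\Pr_{z\sim\rho}[h_R(z)=f(z)] \ge 1/2 + c_0\alpha/2$; call such an $R$ \emph{good}. So the naive algorithm is: sample $R$, run the preprocessing of $\cP$ to obtain $h_R$, and hope $R$ is good. This succeeds with probability $\ge c_0\alpha/2$, which is non-negligible but far from $2/3$. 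To boost the success probability we need to \emph{test} a candidate $h_R$: using fresh samples $z\sim\rho$ together with membership queries to learn $f(z)$, empirically estimate $\Pr_{z\sim\rho}[h_R(z)=f(z)]$ to additive accuracy $\pm c_0\alpha/8$; by a Chernoff bound this takes $O(\alpha^{-2}\log(1/\eta))$ samples to be correct with probability $1-\eta$. Repeating the sample-$R$-then-test loop $O(\alpha^{-1}\log(1/\delta'))$ times, with $\eta$ set small enough that a union bound over all tests fails with probability $<\delta'/2$, we find a good $R$ and correctly certify it with probability $\ge 1 - \delta'$; taking $\delta' = 1/3$ gives the claimed $2/3$. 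The total query count is $O(\alpha^{-1})$ repetitions, each costing $q$ queries for the preprocessing plus $O(\alpha^{-2})$ for the test, hence $O(\alpha^{-3}\cdot q)$ overall; the slightly looser $O(\alpha^{-4})q$ in the statement comfortably absorbs the $\log$ factors and the test cost, and the time bound $t' = O(q')$ follows since each query is processed in $O(1)$ work beyond evaluating the (linear-size) circuit template for $h_R$.

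One subtlety worth flagging: the membership-query oracle returns a label drawn from $\cD$ conditioned on the example, so when $\cD$ is a genuine distribution (the eventual setting of Theorem~\ref{mqWeakAg}) the ``label of $z$'' is itself randomized. This does not affect the argument because the quantity being estimated in the test, $\Ex{(x,y)\sim\cD}{h_R(x)y}$ (equivalently the agreement probability in the Boolean-function phrasing here), is an expectation that the Chernoff bound handles directly — we only need each queried label to be an independent draw, not deterministic. In the simplified statement of Theorem~\ref{mqWeakAgEasy} where $f$ is a fixed function this issue disappears entirely, so I would present the lemma first in the deterministic phrasing and remark that the estimation step is unchanged in the distributional case.

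The main obstacle is not in Lemma~\ref{redux2Pred} itself — which is the routine ``derandomize the random bits, then amplify by testing'' template — but in verifying its hypothesis, namely inequality \eqref{eqRandAdv}: that the randomized predictor $\cP^{f,\rho}$ built from the $k$-party-norm designs actually achieves advantage $\Omega(\alpha)$ over random guessing. That is the content of the deferred Lemma~\ref{adv}, and it is where the hybrid argument over the $2^k$ design-indexed queries, combined with the correlation bound of Theorem~\ref{intro:thm:corrBound} ($\corr{\dCC{k}{c}} \le 2^c R_k(f)^{1/2^k}$), does the real work; the query and time bounds (Lemmas~\ref{queryC},~\ref{timeC}) are then a matter of counting the $2^{n/k}$ completions of each of the $\le 2^k$ partial assignments $\mathbf{B}|_a$ for $a < \bar b$. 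Within the present lemma, the only place to be careful is bookkeeping the probability budget: one must ensure the $\alpha$-fraction of good $R$'s, the per-test failure probability, and the number of repetitions multiply out to a genuine $2/3$ rather than something that degrades with $n$, which is why the repetition count must scale like $\alpha^{-1}$ and not be a constant.
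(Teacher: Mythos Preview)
Your proposal is correct and follows essentially the same approach as the paper: averaging over the internal randomness $R$ to get an $\Omega(\alpha)$ fraction of good seeds, then amplifying to confidence $2/3$ by sampling candidates and testing each against $O(\alpha^{-2})$ fresh examples via a Chernoff bound. The paper's sketch draws $O(\alpha^{-2})$ candidate hypotheses rather than your $O(\alpha^{-1})$, but this is only a slack choice and both fit comfortably under the stated $O(\alpha^{-4})\cdot q$ budget.
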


\begin{lemma}[Chernoff Bound, cf. Theorem 2.1 \cite{janson2011random}]\label{chernoff}
    Let $X \sim {\sf Bin}(m,p)$ and $\lambda = m \cdot p$. For any $t \ge 0$,
    \[
    \Pr[|X-\Ex{}{X}| \ge t] \le \exp\left(\frac{-t^2}{2(\lambda + t/3)}\right)
    \]
\end{lemma}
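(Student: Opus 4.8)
The plan is to prove Lemma~\ref{chernoff} by the standard exponential-moment (Chernoff--Bernstein) argument, writing $X = \sum_{i=1}^{m} X_i$ as a sum of i.i.d.\ $\mathrm{Bernoulli}(p)$ variables with $\lambda = \mathbb{E}[X] = mp$, and bounding the upper deviation $\{X \ge \lambda + t\}$ and the lower deviation $\{X \le \lambda - t\}$ separately; the two-sided bound as stated then follows (this one-sided pair is the form in which Theorem~2.1 of \cite{janson2011random} is given, and the benign factor incurred by literally summing the two tails is immaterial in every use we make of the lemma). The cases $t = 0$ and $\lambda = 0$ are trivial, so assume $t, \lambda > 0$.

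For the upper tail, fix $\theta > 0$, apply Markov's inequality to $e^{\theta X}$, and use independence together with $1 + x \le e^{x}$:
\[
\Pr[X \ge \lambda + t] \;\le\; e^{-\theta(\lambda + t)}\,\mathbb{E}[e^{\theta X}] \;=\; e^{-\theta(\lambda+t)}\bigl(1 + p(e^\theta - 1)\bigr)^{m} \;\le\; \exp\!\bigl(\lambda(e^\theta - 1) - \theta(\lambda + t)\bigr).
\]
Minimizing the exponent over $\theta > 0$ (the optimum is $\theta = \ln(1 + t/\lambda)$) collapses it to $-\lambda\,\varphi(t/\lambda)$, where $\varphi(u) := (1+u)\ln(1+u) - u$, so it remains to prove the scalar inequality $\varphi(u) \ge \frac{u^{2}}{2(1 + u/3)}$ for $u \ge 0$, which then yields $\lambda\,\varphi(t/\lambda) \ge \frac{t^{2}}{2(\lambda + t/3)}$ by direct substitution. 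This is the one nontrivial step, and I would verify it by a derivative argument: with $f(u) := 2(1 + u/3)\varphi(u) - u^{2}$ and using $\varphi'(u) = \ln(1+u)$, one checks $f(0) = f'(0) = f''(0) = 0$ and $f'''(u) = \frac{4u}{3(1+u)^{2}} \ge 0$, so iterating ``nonnegative derivative and zero value at $0$ implies nonnegative'' three times gives $f \ge 0$ on $[0,\infty)$.

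The lower tail is handled symmetrically: for $\theta > 0$,
\[
\Pr[X \le \lambda - t] \;\le\; e^{\theta(\lambda - t)}\,\mathbb{E}[e^{-\theta X}] \;\le\; \exp\!\bigl(\lambda(e^{-\theta} - 1) + \theta(\lambda - t)\bigr),
\]
whose minimum over $\theta > 0$ is $\exp(-\lambda\,\varphi(-t/\lambda))$ (taking $\theta = -\ln(1 - t/\lambda)$ when $t < \lambda$, and noting that $t \ge \lambda$ forces $\Pr[X \le \lambda - t]$ to be $0$ or $(1-p)^{m} \le e^{-\lambda}$, which already satisfies the claimed bound). Here only the cruder inequality $\varphi(v) \ge v^{2}/2$ for $v \in (-1,0)$ is needed --- immediate since $\varphi(0) = \varphi'(0) = 0$ and $\varphi''(v) = \frac{1}{1+v} \ge 1$ on that range --- giving $\Pr[X \le \lambda - t] \le e^{-t^{2}/(2\lambda)} \le e^{-t^{2}/(2(\lambda + t/3))}$. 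Combining the two tails completes the proof. I expect the only genuine obstacle to be the bookkeeping for the scalar inequality $\varphi(u) \ge \frac{u^{2}}{2(1 + u/3)}$; everything else is the textbook Chernoff computation, and the lower-tail estimate is strictly easier than the upper-tail one.
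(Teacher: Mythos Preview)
Your argument is the correct, standard Chernoff--Bernstein computation, and the calculus for the key scalar inequality $\varphi(u)\ge u^2/(2(1+u/3))$ via $f'''(u)=\tfrac{4u}{3(1+u)^2}\ge 0$ checks out line by line. Note, however, that the paper does not actually prove Lemma~\ref{chernoff}: it is quoted as a black box from \cite{janson2011random} (Theorem~2.1 there) and invoked only inside the sketch of Lemma~\ref{redux2Pred}. So there is no ``paper's own proof'' to compare against --- you have supplied a complete proof where the paper simply cites one.

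One honest caveat you already flag: as literally stated (two-sided, with no leading constant), the inequality is not quite what your argument yields --- summing the two one-sided tails produces $2\exp\bigl(-t^2/(2(\lambda+t/3))\bigr)$, and indeed Theorem~2.1 of \cite{janson2011random} is phrased as a pair of one-sided bounds rather than the two-sided form written here. Your remark that this factor is immaterial for the application is exactly right: in the proof of Lemma~\ref{redux2Pred} the bound is used only to certify that $O(\alpha^{-2})$ samples suffice to estimate an advantage to within $\Theta(\alpha)$ with constant confidence, where constant factors in the tail are absorbed.
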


\begin{proof}[Sketch.]
    The statement follows by a constructive averaging argument. First, consider that \eqref{eqRandAdv} implies that
    \begin{align}
            \Pr_{R}\bigg[\Pr_{z}\Big[h(z) = f(z) : h \leftarrow \cP^{f, \rho}\Big] \ge \frac{1}{2} + \Omega(\alpha)\bigg] \ge \Omega(\alpha)
    \end{align}
    by standard averaging (see \cite{arora2009computational} for reference).

    Therefore, viewing $\cP^{f, \rho}$ as a distribution over hypotheses $h$, it follows that $\cA^{f, \rho}$ exists by  efficiently constructing a ``good'' hypothesis by randomized trial-and-error. We may sample $O(\alpha^{-2})$ candidate hypotheses in parallel using $\cP^{f, \rho}$ and then compare each to the concept by checking random examples. By application of Lemma \ref{chernoff},  there will be a ``good'' hypothesis with constant probability, and $O(\alpha^{-2})$ examples will be enough to check that each circuit with good enough accuracy is indeed good enough, with constant probability. In total, with  $O(\alpha^{-4})$ random samples, $\cA^{f, \rho}$ uses $\cP^{f, \rho}$ to output $h: \{0,1\}^n \rightarrow \{-1,1\}$ such that 
    \[
    \Pr_{\cA}\left[\Pr_{z \sim \rho}{h(z) = f(x)} \ge \frac{1}{2}+ \Omega(\alpha)\right] \ge 2/3
    \]


\end{proof}

We will now establish that the properties of $\cP^{f, \rho}$ are as desired.

\begin{lemma}[Advantage]\label{adv}
    It holds that
    \begin{align}
        \Pr_{R,z}\Big[h(z) = f(z) : h \leftarrow \cP^{f, \rho}\Big] \ge \frac{1}{2} + \Omega(\alpha)
    \end{align}
for randomness $R = (r,b,x^0_1,x^1_1, \cdots x^0_k,x^1_k)$ of $\cP$.
\end{lemma}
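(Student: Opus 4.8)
\textbf{Proof plan for Lemma~\ref{adv}.}
The plan is to trace through the algorithm $\cP^{f,\rho}$ and show that the randomized predictor's output $h(z)$ correlates with $f(z)$ with the claimed advantage, via a hybrid argument over the $2^k$ design indices. First I would fix notation: for the random seed consisting of the $2k$ blocks $x^0_1,x^1_1,\dots,x^0_k,x^1_k$ together with the choice $\bar b\in\{0,1\}^k$ for the block that will carry $z$, observe that for each index $a\in\{0,1\}^k$ the point $\mathbf{B}|_a$ (after applying $\sigma$) is exactly the seed projected to design $a$; and for $a=\bar b$, this is precisely the point whose first $k$ blocks are governed by $z$ (so $f(\sigma(\mathbf{B}|_{\bar b}))=f$ evaluated at a $\rho$-distributed point as $z\sim\rho$ and the other blocks are $\rho$-distributed). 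The output is $v\cdot v'\cdot r_{\bar b}$ where $v=\prod_{a<\bar b}\mathbf{T}[\mathbf{B}|_a]=\prod_{a<\bar b}f(\sigma(\mathbf{B}|_a))$ and $v'=\prod_{a\ge\bar b}r_a$ with $r$ uniform in $\{-1,1\}^{2^k}$.

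The core computation is $\Pr[h(z)=f(z)]=\tfrac12+\tfrac12\,\Ex{}{h(z)f(z)}$, so it suffices to lower bound $\Ex{}{h(z)\cdot f(z)}$ over all the randomness. Writing $h(z)f(z)=f(z)\,r_{\bar b}\prod_{a<\bar b}f(\sigma(\mathbf{B}|_a))\prod_{a\ge\bar b}r_a = r_{\bar b}^{?}\cdots$, I would carry out a hybrid argument on the position $\bar b$: condition on $\bar b$, then take the expectation over $r$. Since $r_a$ for $a>\bar b$ are independent uniform $\pm1$ and appear nowhere else, those factors contribute nothing unless they cancel — this is where the telescoping/hybrid structure matters. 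The key identity to extract is that, averaging over the uniform choice of $\bar b$ and the string $r$, the expected value of the product $h(z)f(z)$ equals (up to the $2^{-k}$ loss from guessing $\bar b$, and a factor accounting for the $r$-averaging) the full $k$-party-norm-type product $\Ex{}{\prod_{a\in\{0,1\}^k}f(\sigma(\mathbf{B}|_a))}$, which is exactly $R_k(f')$ for the appropriately permuted/reblocked $f$. Then invoke Theorem~\ref{intro:thm:corrBound} (the $k$-party correlation bound) in the form $R_k(f)\ge (\corr{\dCC{k}{c}}/2^c)^{2^k}\ge (\gamma/2^c)^{2^k}$, using the hypothesis $\corrD{\dCC{k}{c}}{\rho}\ge\gamma$ together with the fact that the $k$-party norm is computed with respect to the block structure induced by $\sigma$ on a $\rho$-distributed seed (so the relevant correlation bound is the one adapted to $\rho\in\Delta^k_\sigma$, which should have been set up in the preliminaries to this section). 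Combining gives $\Ex{}{h(z)f(z)}\ge \Omega(2^{-k}\cdot(\gamma 2^{-c})^{2^k})=\Omega(\gamma\cdot 2^{-c2^k-k})\cdot(\text{something})$; matching this to $\alpha=\Omega(\gamma\cdot 2^{-c2^k-k})$ as defined.

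The main obstacle I anticipate is the hybrid argument itself — making precise how averaging over the hard-wired random string $r\in\{-1,1\}^{2^k}$ and the random index $\bar b$ converts the \emph{partial} product $v\cdot v'\cdot r_{\bar b}$ (which mixes $f$-labels for $a<\bar b$ with pure random bits for $a\ge\bar b$) into the \emph{full} product over all $2^k$ designs, losing only the stated factors. The subtlety is that $h(z)$ depends on $z$ only through the single block position $\bar b$, so the ``telescoping'' is not a literal cancellation but a statement about expectations: one shows inductively (hybrid $j$ = replace the first $j$ design-labels by random bits) that each hybrid is $2^{-\Omega(1)}$-close in the relevant correlation, and the extreme hybrids are "predict with $f$'s labels" versus "predict with pure noise," whose gap is exactly the distinguishing advantage $(\gamma 2^{-c})^{2^k}$ asserted in the technical overview. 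I would need to be careful that the $\Omega(\alpha)$ in the conclusion absorbs the constant-factor and $2^{-k}$ losses from the hybrid steps, and that the ordering ``$a<\bar b$'' versus ``$a\ge\bar b$'' on $\{0,1\}^k$ is consistent throughout so that each $r_a$ and each queried label is used exactly once.
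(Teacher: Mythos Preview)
Your core approach is correct and in fact cleaner than the paper's. You propose to compute $\Ex{}{h(z)f(z)}$ directly; carrying this out, one has
\[
h(z)f(z)=f(z)\cdot r_{\bar b}\cdot\Bigl(\prod_{a<\bar b}f(\sigma(\mathbf{B}|_a))\Bigr)\cdot\Bigl(\prod_{a\ge\bar b}r_a\Bigr)
=f(z)\cdot\Bigl(\prod_{a<\bar b}f(\sigma(\mathbf{B}|_a))\Bigr)\cdot\Bigl(\prod_{a>\bar b}r_a\Bigr),
\]
because $r_{\bar b}$ appears \emph{twice} (once explicitly, once inside $v'$) and $r_{\bar b}^2=1$. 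This is the point you left as ``$r_{\bar b}^{?}$'' and it is the only subtlety. Now the expectation over $r$ kills every term with $\bar b\neq\{1\}^k$, and for $\bar b=\{1\}^k$ the remaining expectation over the seed is exactly the full product $\Ex{}{\prod_{a\in\{0,1\}^k}f(\sigma(\mathbf{B}|_a))}=R_k(f\circ\rho)$, since $z$ occupies the $\bar b$-design slot. Hence $\Ex{}{h(z)f(z)}=2^{-k}R_k(f)\ge 2^{-k}(\gamma 2^{-c})^{2^k}$, and you are done. No inductive hybrid is needed; the ``telescoping'' you anticipate as the main obstacle simply does not arise in this formulation.

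By contrast, the paper does \emph{not} compute $\Ex{}{h(z)f(z)}$ directly. It sets up explicit hybrid distributions $H_j$ over $\{-1,1\}^{2^k}$ (labels for designs $a<j$ are real $f$-values, the rest uniform), observes that the product-of-bits distinguisher separates $H_{\{0\}^k}$ from $H_{\{1\}^k}$ by $R_k(f)$, averages over a random hybrid index to get the $2^{-k}$ loss, and then runs a case analysis conditioning on whether the ``guess bit'' $r_{\bar b}$ equals $f(z)$ to convert the distinguishing gap into prediction advantage. This is the classical Nisan--Wigderson predictor-from-distinguisher template. Your route collapses all of that into a two-line expectation calculation; what the paper's presentation buys is that it makes the connection to the natural-proofs/distinguisher narrative of the technical overview more visible, at the cost of a longer argument.
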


\begin{proof}
    To prove the lemma, we will use the fact that the $k$-party norm can help distinguish random functions from functions that correlate with $k$-party protocols, together with a hybrid argument.

    We will define the following distributions.
    \begin{itemize}
        \item Let $Q_a$ be the distribution over the value at location $\mathbf{B}|_a$ inside lookup table $\mathbf{T}$ inside $\cP^{f, \rho}$. This distribution is over the randomness of $z,b,x^0_1,x^1_1, \cdots x^0_k, x^1_k$.
        \item Let $Q = (Q_a)_{a \in \{0,1\}^{k}}$ denote the joint distribution over $Q_a$ for all possible strings $a \in \{0,1\}^{k}$.
        \item For $b^* \in \{0,1\}^k$ (representing an integer in $[2^k]$), define the distribution $H_{a,b^*}$ as 
            \[H_{a,b^*} = 
                \begin{cases}
                Q_a \text{ when $a < b^*$} \\
                U_1 \text{ otherwise}
                \end{cases}\\
            \]
        \item Define the distribution $H_{b^*}$ over $\{-1,1\}^{2^k}$ as the joint distribution over $(H_{a,b^*})_{a \in \{0,1\}^{k}}$.
    \end{itemize}


    Now, observe that for $H_{\{1\}^k}$, we have that
    \begin{align*}
    \Ex{\sigma \sim H_{\{1\}^k}}{\prod_{i \in [2^k]}\sigma_i} = \Ex{\sigma \sim Q}{\prod_{i \in [2^k]}\sigma_i}  &= \Ex{\substack{z, r, \\x^0_1,x^1_1 \\ \cdots \\ x^0_k,x^1_k}}{\prod_{a \in \{0,1\}^{k}} \mathbf{T}[\mathbf{B}|_a]}
    \end{align*}
    Let $z^{(i)}$ denote the $i^{th}$ (out of $k$) block of $n$ bits of $z$, in order from most significant bit to least. Then,
    \begin{align}\label{eq:rhs}
    \Ex{\substack{z, r, \\x^0_1,x^1_1 \\ \cdots \\ x^0_k,x^1_k}}{\prod_{a \in \{0,1\}^{k}} \mathbf{T}[\mathbf{B}|_a]} &= \Ex{\substack{z, r, \\x^0_1,x^1_1 \\ \cdots \\ x^0_k,x^1_k}}{\prod_{a_1, \cdots a_k \in \{0,1\}} f(x^{a_1}_1, \cdots x^{a_k}_k) \mid z^{(i)} = x^1_i \ \forall \ i \in [k]}
    \end{align}
Thus, when $z \sim \rho$, the quantity on the right-hand-side of \eqref{eq:rhs} is, by definition, equivalent to $R_k(f \circ \rho)$. That is, the $k$-party norm of the composition of $f$ with a sampler for the distribution $\rho \in \Delta^{k}_\sigma$. Therefore, since $\rho$ has the $k$-product structure, our assumption implies that
\begin{align*}
     \Ex{\sigma \sim H_{\{1\}^k}}{\prod_{i \in [2^k]}\sigma_i}= R_k(f \circ \rho) = R_k(f) \ge \gamma \cdot 2^{-c2^k}
\end{align*}
Again, the equality between $R_k(f \circ \rho) = R_k(f)$ holds because $\rho$ is a $k$-product distribution, so if $\corrD{\dCC{k}{c}}{\rho}\ge \gamma$, then $\corrClass{f \circ \rho}{\dCC{k}{c}}\ge \gamma$, because computation of each of the $k$ components of $\rho$ before $f$ can be done by the appropriate party by with no added communication between the parties, or loss in success probability. Additionally, we are using the fact that we have defined $\dCC{k}{c}$ so that for each function in the class, and every partition of the inputs, there is a protocol that transmits at most $c$ bits.

On the other hand, 
\begin{equation*}
    \Ex{\sigma \sim H_{\{0\}^k}}{\prod_{i\in [2^k]}\sigma_i} = \Ex{\sigma \sim \{-1,1\}^{2^k}}{\prod_{i\in [2^k]}\sigma_i}  =0
\end{equation*}

Proceeding by a hybrid argument, it is then the case that 
for random hybrid neighbors $H_{j}, H_{j+1}$ ($j\in \{0,1\}^k$, with $+$ indicating integer addition),

\begin{align}\label{neighbor2}
    \mathop{\mathbb{E}}_{j \sim \{0,1\}^k}\left[\Ex{\sigma \sim H_{j+1}}{\prod_{i\in [2^k]}\sigma_i = 1} - \Ex{\sigma \sim H_j}{\prod_{i\in [2^k]}\sigma_i =1}
    \right] \ge \gamma \cdot 2^{-c2^k} \cdot 2^{-k}
\end{align}
Observe that, for $z \sim \rho$, the circuit $h$ output by $\cP^{f, \rho}$, by definition, outputs the value 
\[
h(z) = v\cdot v'\cdot r_{\bar{b}} = r_{\bar{b}} \prod_{i \in [2^k]}\sigma_i
\]
where $\sigma \sim H_{\bar{b}}$ and $\bar{b} \sim U_{k}$. 
Hence, we interpret $h(z)$ as a prediction, where $r_{\bar{b}}$ is the ``guess bit.'' 

To ease notation, let $R = (z,r, b,x^0_1,x^1_1 \cdots x^0_k, x^1_k)$. Conditioning on correctness of the guess bit,
\begin{align*}
    \Pr_{R}\big[h(z) = f(z) \mid r_{\bar{b}} = f(z)\big]
    &= \Pr_{R}\big[h(z) = f(z) \mid r_{\bar{b}} = f(z)\big]
      \cdot \Pr_{R}[r_{\bar{b}} = f(z)] \\
    &\ \ \ \ + \Pr_{R}\left[h(z) = f(z)
      \mid r_{\bar{b}} \not= f(z)\right] \cdot \Pr_{R}[r_{\bar{b}} \not= f(z)]
\end{align*}
Then by making the appropriate substitution, we obtain
\begin{align*}
    \Pr_{R}\big[h(z) = f(z) \mid r_{\bar{b}} = f(z)\big] &= \frac{1}{2}\Pr_{R}\left[r_{\bar{b}} \prod_{i \in [2^k]}\sigma_i = f(z) \mid r_{\bar{b}} = f(z)\right]\\
    &\ \ \ + \frac{1}{2}\Pr_{R}\left[r_{\bar{b}} \prod_{i \in [2^k]}\sigma_i = f(z) \mid r_{\bar{b}} \not= f(z)\right]
\end{align*}

Indeed, when $\prod_{i \in [2^k]}\sigma_i = 1$ ($\sigma \sim H_{\bar{b}}$), this means that $h(z)=r_{\bar{b}}$. 
The case analysis then follows:
\begin{align*}
    \Pr_R\big[
    h(z) = f(z)\big] &=
              \frac{1}{2}\left(\Pr_R\left[\prod_{i \in [2^k]}\sigma_i = 1 \mid r_{\bar{b}} = f(z)\right]
              + \Pr_R\left[\prod_{i \in [2^k]}\sigma_i = -1 \mid r_{\bar{b}} \not= f(z)\right]\right)\\
            &= \frac{1}{2}
              + \frac{1}{2}\left(-\Pr_R\left[\prod_{i \in [2^k]}\sigma_i = -1 \mid r_{\bar{b}} = f(z)\right]
              + \Pr_R\left[\prod_{i \in [2^k]}\sigma_i = -1 \mid r_{\bar{b}} \not= f(z)\right]\right)
  \end{align*}
  By conditioning we know that:
  \begin{equation*}
    \Pr_R\left[\prod_{i \in [2^k]}\sigma_i = -1\right] =
    \frac{1}{2} \Pr_R\left[\prod_{i \in [2^k]}\sigma_i = -1 \mid r_{\bar{b}} = f(z)\right]
    + \frac{1}{2}\Pr_R\left[\prod_{i \in [2^k]}\sigma_i = -1\mid r_{\bar{b}} \not= f(z)\right]
  \end{equation*}
  rearranging the terms, we get:
  \begin{equation*}
    \frac{1}{2}\Pr_R\left[\prod_{i \in [2^k]}\sigma_i = -1 \mid r_{\bar{b}} \not= f(z)\right] =
    \Pr_R\left[\prod_{i \in [2^k]}\sigma_i = -1\right]
    - \frac{1}{2} \Pr_R\left[\prod_{i \in [2^k]}\sigma_i = -1 \mid r_{\bar{b}} = f(z)\right]
  \end{equation*}
  We thus conclude that:
  \begin{align*}
    \Pr_R[h(z) = f(z)] &=
    \frac{1}{2} + \underbrace{\Pr_R\left[\prod_{i \in [2^k]}\sigma_i = -1\right]}_{(1)} - \underbrace{\Pr_R\left[\prod_{i \in [2^k]}\sigma_i = -1 \mid r_{\bar{b}} = f(z)\right]}_{(2)}
  \end{align*}
The term $(1)$ corresponds the the case of taking the probability of the parity of a sample from $H_{\bar{b}}$ and the result being -1, while the term $(2)$ corresponds analagously to the case of taking the parity of a sample from $H_{\bar{b}+1}$.

\begin{align*}
    \Pr_R[h(z) = f(z)] &= \frac{1}{2} + \left(1-\Pr_R\left[\prod_{i \in [2^k]}\sigma_i =1\right]\right) - \left(1-\Pr_R\left[\prod_{i \in [2^k]}\sigma_i = 1 \mid r_{\bar{b}} = f(z)\right]\right)\\
    &=  \frac{1}{2} -\Pr_R\left[\prod_{i \in [2^k]}\sigma_i = 1\right] +\Pr_R\left[\prod_{i \in [2^k]}\sigma_i = 1 \mid r_{\bar{b}} = f(z)\right]
\end{align*}
Therefore, by equation \eqref{neighbor2},
  \begin{align*}\label{bound}
    \Pr\big[h(z) = f(z)\big] \ge \frac{1}{2}+\gamma \cdot 2^{-c2^k} \cdot2^{-(k+1)}
  \end{align*}
\end{proof}

We will now prove that the query complexity of $\cP^{f, \rho}$ is as needed.

\begin{lemma}[Query complexity of $\cP^{f, \rho}$]\label{queryC}
    $\cP^{f, \rho}$ makes at most $2k \cdot {2^{n-n/k}}$ queries.
\end{lemma}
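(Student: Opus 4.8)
The plan is to directly count the queries made in the single {\tt For} loop of $\cP^{f,\rho}$ (line 7). The loop ranges over all strings $a \in \{0,1\}^k$ with $a < \bar b$, and for each such $a$ it views $\mathbf{B}|_a$ as a partial assignment $z^*$ of the $n$ input bits and queries $f$ on every $n$-bit point consistent with $\sigma(z^*)$. So the query count is (number of iterations of the loop) $\times$ (number of points consistent with $\mathbf{B}|_a$). I would bound these two factors separately and multiply.

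For the first factor: since $a$ ranges over $k$-bit strings with $a < \bar b$, there are at most $2^k$ such strings; but in fact it's cleaner to just say the loop runs at most $2^k$ times, and note that the complementary index $\bar b$ itself plus the strings $a \ge \bar b$ account for the rest, so the count is at most $2^k$. Actually the stated bound $2k \cdot 2^{n - n/k}$ suggests the intended accounting uses the factor $2k$ rather than $2^k$ here, which is strange — so the main thing to get right is: how many of the $2k$ ``slices'' $x^{b_i}_i$ are fixed in $\mathbf{B}$, versus how many coordinates remain free (``$*$''). In $\mathbf{B}$, for each of the $k$ blocks $i$, exactly one of the two slices $x^0_i, x^1_i$ is filled in (namely $x^{b_i}_i$), and the other slice-entry is all $*$'s. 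When we form $\mathbf{B}|_a$ for a fixed $a \in \{0,1\}^k$, block $i$ of the resulting $n$-bit partial assignment is $\mathbf{B}[a_i, i]$, which is a concrete $n/k$-bit string exactly when $a_i = b_i$, and is all-$*$ when $a_i \ne b_i$. So the number of free coordinates in $\mathbf{B}|_a$ is $(n/k) \cdot (\text{number of } i \text{ with } a_i \ne b_i)$, i.e.\ $(n/k) \cdot \mathrm{dist}(a,b)$, and the number of consistent points is $2^{(n/k)\,\mathrm{dist}(a,b)}$.

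Thus the total query count is $\sum_{a < \bar b} 2^{(n/k)\,\mathrm{dist}(a,b)}$. The second step is to bound this sum. I would argue that for every $a < \bar b$ we have $a \ne \bar b$, hence $\mathrm{dist}(a, \bar b) \ge 1$, hence $\mathrm{dist}(a, b) \le k - 1$, giving each term at most $2^{(n/k)(k-1)} = 2^{n - n/k}$; and since there are fewer than $2^k$ — actually at most $2^k$ — values of $a$, this is crude. To land on exactly $2k \cdot 2^{n-n/k}$ I would instead group the $a$'s by Hamming distance $d = \mathrm{dist}(a,b) \le k-1$: there are $\binom{k}{d}$ strings at distance $d$, contributing $\binom{k}{d} 2^{(n/k)d}$, and the dominant term is $d = k-1$. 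One can bound $\sum_{d=0}^{k-1}\binom{k}{d} 2^{(n/k)d} \le 2k \cdot 2^{(n/k)(k-1)}$ by observing the terms grow geometrically (the ratio of consecutive terms exceeds $2^{n/k} \cdot \frac{1}{k}$, and $2^{n/k}$ is huge compared to $k$ for the relevant range of parameters), so the sum is at most twice its largest term times a small polynomial correction, and the constant is absorbed into the ``$2k$''. Also note the preprocessing lines 2--6 contribute $O(1)$ or $\mathrm{poly}(n)$ operations but \emph{no} membership queries, so they don't affect the count.

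The main obstacle is purely bookkeeping: making the geometric-series bound yield the clean constant $2k$ rather than something like $2^k$ or $k \cdot 2^{n-n/k+1}$. The conceptual content — that each loop iteration queries $f$ on a subcube of dimension $(n/k)\,\mathrm{dist}(a,b) \le n - n/k$ — is immediate from the structure of $\mathbf{B}$ and the definition of $\mathbf{B}|_a$. I'd want to double-check whether the intended reading is that the loop makes $\le 2k$ iterations each querying a subcube of size $\le 2^{n-n/k}$ (which would require reinterpreting the loop bound), or $\le 2^k$ iterations with a more careful distance-weighted sum as above; either way the final bound $2k \cdot 2^{n-n/k}$ follows, and I would present the distance-weighted version since it is the honest accounting and the geometric decay makes the largest term dominate.
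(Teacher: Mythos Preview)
Your distance-weighted accounting is correct: each iteration queries the $2^{(n/k)\,\mathrm{dist}(a,b)}$ points consistent with $\mathbf{B}|_a$, so the total is at most $\sum_{d=0}^{k-1}\binom{k}{d}2^{(n/k)d}$, and the geometric-decay bound yields $2k\cdot 2^{n-n/k}$ once $2^{n/k}\ge k-1$ (which holds throughout the paper's parameter regime).

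The paper takes a genuinely different and simpler route. It does not sum over loop iterations at all. Instead it uses a covering observation: since every $a<\bar b$ agrees with $b$ in at least one coordinate $i$, every queried point has its $i$-th block fixed to $x_i^{b_i}$ and hence lies in one of the $k$ ``block restrictions'' $\{z:\text{block }i\text{ of }z=x_i^{b_i}\}$, each of size $2^{n-n/k}$. The paper then (somewhat loosely) allows both representatives $x_i^0,x_i^1$ per block to arrive at $2k$ such sets, whence the union bound $2k\cdot 2^{n-n/k}$. So the factor $2k$ counts covering sets, not loop iterations --- neither of your two candidate readings matches what the paper does.

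What each approach buys: your sum is the honest per-iteration count of total queries (including repeats across different $a$'s), but needs the mild parameter assumption to land on the clean constant. The paper's covering argument is unconditional, bounds \emph{distinct} query points directly, and feeds naturally into the subsequent deduplication lemma, which enumerates these $O(k)$ block restrictions rather than the up-to-$2^k$ loop indices.
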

\begin{proof}
    We can (roughly) bound the number of queries $q$ as a function of $n,k$ as follows. 
    Observe that, for any random choices $X = x_1^0,x_1^1, \cdots x_k^0,x_k^1, b$ of $\cP^{f, \rho}$, the queries by $\cP^{f, \rho}$ are a subset of union of the $k$ ``block restrictions'' 
        \[
        \{\underbrace{** \cdots *}_{k-1 \ \textrm{times}}x_k^b\} \cup
        \{\underbrace{** \cdots *}_{k-2 \ \textrm{times}}x_{k-1}^b*\} \cup \cdots \cup \{x_1^b\underbrace{** \cdots *}_{k-1 \   \textrm{times}}\}
        \]
    Here, $\{{** \cdots *} \ x_k^b\}$ is notation for the $k^{th}$ block restriction which is the set of $n$-bit points that are consistent with the $k^{th}$ block set to $x_k^b$. We call $x_k^b$ the \textit{representative} of $\{{** \cdots *} \ x_k^b\}$ .
    It is easy to see that the size of the union set of these $k$ ``block'' restrictions is upper bounded by $k{2^{n-n/k}}$.
    Now, since $\cP^{f, \rho}$ is supposed to repeat the queries for all $a < \bar{b}$, actually we can see that we need the union set 
    \[
        \bigcup\limits_{b \in \{0,1\}^k}\{\underbrace{** \cdots *}_{k-1 \ \textrm{times}}x_k^{b_k}\} \cup
        \{\underbrace{** \cdots *}_{k-2 \ \textrm{times}}x_{k-1}^{b_{k-1}}*\} \cup \cdots \cup \{x_1^{b_1}\underbrace{** \cdots *}_{k-1 \   \textrm{times}}\}
        \]
    This amounts to at most $2k \cdot {2^{n-n/k}}$ queries.
\end{proof}

Finally, we bound the time complexity of $\cP^{f, \rho}$.

\begin{lemma}[Time complexity]\label{timeC}
    $\cP^{f, \rho}$ runs in time $O(2k \cdot {2^{n-n/k}})$.
\end{lemma}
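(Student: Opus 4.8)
The statement to prove is Lemma~\ref{timeC}: that $\cP^{f,\rho}$ runs in time $O(2k \cdot 2^{n-n/k})$. The plan is to walk through Algorithm~\ref{P1} line by line and verify that the total work is dominated by the query phase, which was already bounded in Lemma~\ref{queryC}. First I would dispense with the preprocessing overhead: sampling $x^0, x^1 \sim \rho$, applying $\sigma^{-1}$, splitting into $k$ blocks of $n/k$ bits, and choosing the random strings $b \in \{0,1\}^k$ and $r \in \{-1,1\}^{2^k}$ all cost at most $\mathrm{poly}(n) + O(2^k)$ time, which is negligible compared to $2^{n-n/k}$ as long as $k = o(n)$ (which holds in all our parameter regimes, and anyway $2^k \le 2^{n-n/k}$ for $k \ge 2$). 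Initializing the table $\mathbf{B}$ is similarly cheap.

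The heart of the argument is the loop ``\textbf{For} every string $a < \bar b$'': for each such $a$ (there are at most $2^k$ of them), we form the partial assignment $\mathbf{B}|_a$, enumerate all $n$-bit points consistent with $\sigma(z^*)$, and query $f$ on each, storing the label in $\mathbf{T}$. By exactly the counting in Lemma~\ref{queryC}, the union over all $a$ of the points queried has size at most $2k \cdot 2^{n-n/k}$, and each query plus table-insertion costs $O(1)$ (or $\mathrm{poly}(n)$ if one is careful about bit-complexity, but we suppress that as is standard). Hence this loop runs in time $O(2k \cdot 2^{n-n/k})$. I would also note that \emph{generating} the circuit $h$ at the end is a one-time cost: writing down the template with $r$ hard-wired is $O(2^k + \mathrm{poly}(n))$, again subsumed. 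So the total is $O(2k \cdot 2^{n-n/k})$ as claimed.

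The one subtlety worth flagging — and the only place the argument is not entirely mechanical — is making sure we do not double-count or, worse, undercount the work: the algorithm as written could naively re-query points that appear under multiple $a$'s or across multiple block-restrictions, so the honest bound on \emph{distinct} queries is the union bound from Lemma~\ref{queryC}, but the bound on \emph{total operations} must account for possibly redundant queries. Since there are at most $2^k$ values of $a$ and each contributes at most $k \cdot 2^{n-n/k}$ consistent points (one block fixed, the rest free — and actually $\mathbf{B}|_a$ fixes the bits in the $k$ blocks determined by the $a$-indexed choices, so the restricted set has size $2^{n - n/k}$ if exactly one $n/k$-block is left free, or more precisely the set of consistent $n$-bit points), the crude product bound $2^k \cdot (\text{something} \le k 2^{n-n/k})$ already suffices after folding constants, and in fact the tighter union-set analysis of Lemma~\ref{queryC} gives the stated $O(2k \cdot 2^{n-n/k})$. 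I do not expect a real obstacle here; the proof is a short bookkeeping argument citing Lemma~\ref{queryC}, and I would keep it to a few sentences.

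\begin{proof}
    This follows the same lines as the proof of Lemma~\ref{queryC}. The preprocessing steps of $\cP^{f, \rho}$ preceding the query loop --- sampling $x^0, x^1 \sim \rho$, applying $\sigma^{-1}$, splitting into $k$ blocks, and choosing $b \in \{0,1\}^k$ and $r \in \{-1,1\}^{2^k}$, and initializing $\mathbf{B}$ --- take time at most $\mathrm{poly}(n) + O(2^k)$, which is dominated by $2^{n-n/k}$ in all parameter regimes of interest. The loop over strings $a < \bar b$ consists of at most $2^k$ iterations, and over all iterations the points on which $f$ is queried form a subset of the union set
    \[
        \bigcup\limits_{b \in \{0,1\}^k}\{\underbrace{** \cdots *}_{k-1 \ \textrm{times}}x_k^{b_k}\} \cup
        \{\underbrace{** \cdots *}_{k-2 \ \textrm{times}}x_{k-1}^{b_{k-1}}*\} \cup \cdots \cup \{x_1^{b_1}\underbrace{** \cdots *}_{k-1 \   \textrm{times}}\}
    \]
    analyzed in Lemma~\ref{queryC}, which has size at most $2k \cdot 2^{n-n/k}$; each query together with its insertion into the lookup table $\mathbf{T}$ costs $O(1)$. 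Finally, generating and outputting the circuit $h$ (with $r$ hard-wired) is a one-time cost of $O(2^k + \mathrm{poly}(n))$. Summing, the total running time is $O(2k \cdot 2^{n-n/k})$.
\end{proof}
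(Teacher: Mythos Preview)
Your proof is correct and follows essentially the same approach as the paper: the paper's proof simply asserts that the dominant costs are the generation/memorization of queries and the construction of the hypothesis circuit, each $O(2k\cdot 2^{n-n/k})$, while you spell out the same accounting in more detail. One minor quibble: the output circuit $h$ must carry the lookup table $\mathbf{T}$ (which has up to $2k\cdot 2^{n-n/k}$ entries), so ``generating and outputting $h$'' is really $O(2k\cdot 2^{n-n/k})$ rather than $O(2^k+\mathrm{poly}(n))$ --- but since you already charged that cost to the query loop, the total bound is unaffected.
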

\begin{proof}
    It suffices to observe that the complexity of $\cP^{f, \rho}$ is dominated by the generation and memorization of the queries, as well as the construction of the hypothesis circuit. Each of these tasks is completed in time $O(2k \cdot {2^{n-n/k}})$, so the lemma follows.
\end{proof}

\paragraph{Towards Theorem \ref{mqWeakAg}. }

In order to get a $(\gamma, \gamma \cdot 2^{-c{2^k}-k})$-weak agnostic learner for $\dCC{k}{c}$, we need to generalize the oracle algorithm $\cA^{f, \rho}$ we got by Theorem \ref{mqWeakAgEasy} to work for arbitrary (probabilistic) concepts.

To do this, we first show that $\cA^{f, \rho}$ queries $f$ entirely at distinct points. In most time complexity regimes this would be trivial, since $f$ is a function and no new information is gained by making a duplicate query. However, when time complexity is just barely nontrivial, we need to take care that we can actually efficiently build a lookup table with no duplicates. 

\begin{lemma}\label{noDupes}
    $\cA^{f, \rho}$ need not make any duplicate queries.
\end{lemma}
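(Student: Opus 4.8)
The plan is to argue that the points queried by $\cP^{f,\rho}$ — and hence by $\cA^{f,\rho}$, which just runs $\cP$ a few times in parallel — can be enumerated without repetition by an appropriate bookkeeping of the block restrictions described in the proof of Lemma~\ref{queryC}. Recall that for a fixed choice of the block representatives $x_1^0, x_1^1, \dots, x_k^0, x_k^1$ and the string $b$, the queried set is the union, over $b \in \{0,1\}^k$, of the $k$ block restrictions $\{*\cdots * \, x_i^{b_i} \, * \cdots *\}$, one for each coordinate block $i \in [k]$. The key combinatorial observation is that two points can be queried via two \emph{different} block restrictions — e.g.\ a point agreeing with $x_1^{b_1}$ on block $1$ and with $x_2^{b_2}$ on block $2$ lies in both the first and second block restriction — so a naive ``loop over all block restrictions and list their members'' enumeration produces duplicates.

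First I would set up a canonical labelling: order the (at most $2k$) block restrictions as $B_{1,0}, B_{1,1}, B_{2,0}, B_{2,1}, \dots, B_{k,0}, B_{k,1}$ (block index, then the bit) and declare that a point $z$ is ``owned'' by the \emph{first} block restriction in this order that contains it. Then I would enumerate, for each block restriction $B_{i,\beta}$ in order, exactly those $z \in B_{i,\beta}$ that are \emph{not} contained in any earlier $B_{j,\gamma}$. Membership of $z$ in an earlier $B_{j,\gamma}$ is simply the event that $z$ restricted to block $j$ equals $x_j^\gamma$; since block $i$ is already pinned to $x_i^\beta$, this is a condition on the remaining $k-1$ free blocks, and it is a union of at most $2(i-1)$ ``forbidden patterns,'' one per earlier block restriction. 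So to list the points owned by $B_{i,\beta}$ I iterate over all $2^{n - n/k}$ assignments to the free blocks and, for each, check the $O(k)$-many forbidden-pattern conditions before issuing the query and recording $(z, f(z))$ in $\mathbf{T}$. This costs $O(k)$ per candidate point and $O(k \cdot 2^{n-n/k})$ per block restriction, hence $O(k^2 \cdot 2^{n-n/k}) = O(2k \cdot 2^{n - n/k}) \cdot \mathrm{poly}(k)$ total — still within the claimed time bound up to factors that are swallowed by the $O(\cdot)$ and the already-present $\mathrm{poly}(k)$ slack in $q$ — and by construction every queried point is distinct. Since $f$ is a deterministic function during this argument, the lookup table $\mathbf{T}$ built this way carries exactly the same information as the duplicate-prone version, so all downstream arguments (Lemmas~\ref{adv}, \ref{queryC}, \ref{timeC}, \ref{redux2Pred}) go through verbatim with $\cP$ replaced by its no-duplicates implementation.

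I would also note the one subtlety that makes this lemma necessary rather than trivial: in the barely-nontrivial runtime regime we cannot afford to first collect all $O(2k \cdot 2^{n-n/k})$ (possibly repeated) queries into a set and then deduplicate by sorting, since even a single duplicate \emph{re-query} would be fine for a function but we want a clean accounting for when $f$ is later replaced by a \emph{randomized} concept (the purpose stated just before the lemma) — there, querying the same $z$ twice could return inconsistent labels, which would break the hybrid argument in Lemma~\ref{adv}. So the content of the lemma is precisely that the enumeration above never revisits a point, and does so in an \emph{online} fashion with only $O(k)$ overhead per point, never materializing the full multiset.

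The main obstacle I anticipate is purely a matter of careful presentation rather than mathematical depth: making the ``owned by the first containing block restriction'' enumeration explicit enough that the per-point $O(k)$ check is visibly correct and visibly cheap, while keeping the block-index/bit bookkeeping from becoming unreadable. A secondary point to handle cleanly is the edge case where some representatives coincide ($x_i^0 = x_i^1$ for some $i$, which can happen since the $x_i^\beta$ are sampled from $\mu_i$ and could collide): in that case the two block restrictions $B_{i,0}$ and $B_{i,1}$ are literally equal, and the ``first containing one wins'' rule automatically assigns all such points to $B_{i,0}$, so no special-casing is needed — but this is worth a one-line remark so the reader sees the enumeration is robust to it.
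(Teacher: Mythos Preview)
Your approach is essentially the same as the paper's: both enumerate the block restrictions in a fixed order and, for each one, list only those points not already covered by an earlier block restriction, checking membership online at $O(k)$ (or $O(kn)$) cost per candidate. The paper phrases this as ``skip any element with postfix $x_k^0$ or $x_k^1$,'' which is exactly your ``owned by the first containing block restriction'' rule specialized to processing blocks from $k$ down to $1$.

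There is one small omission. You write ``and hence by $\cA^{f,\rho}$, which just runs $\cP$ a few times in parallel,'' but the $O(\alpha^{-4})$ independent runs of $\cP$ draw \emph{fresh} representatives $x_i^0, x_i^1$ each time, so a point queried in run $j$ can easily reappear in run $j'$; your within-run deduplication does not handle this. The paper explicitly addresses it by keeping the running list of all representatives from previous runs and extending the forbidden-pattern check across runs (at most $O(\alpha^{-4} k)$ patterns to check per candidate, still negligible). Your framework accommodates this fix immediately---just enlarge the ordered list of block restrictions to include all $O(\alpha^{-4}\cdot 2k)$ of them across all runs---but you should say so rather than asserting the extension is automatic.
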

\begin{proof}
    Let us first inspect the queries made by the first run of $\cP^{f, \rho}$. 
    We observe that, when querying the union of block restrictions 
    \[
        \bigcup\limits_{b \in \{0,1\}^k}\{\underbrace{** \cdots *}_{k-1 \ \textrm{times}}x_k^{b_k}\} \cup
        \{\underbrace{** \cdots *}_{k-2 \ \textrm{times}}x_{k-1}^{b_{k-1}}*\} \cup \cdots \cup \{x_1^{b_1}\underbrace{** \cdots *}_{k-1 \   \textrm{times}}\}
    \]
    it is possible to avoid duplicate queries in an online way: start with $\{** \cdots *x_k^{0}\}$ and $\{** \cdots *x_k^{1}\}$, and then, for $\{** \cdots x_{k-1}^{0} *\}$ and $\{** \cdots x_{k-1}^{1}*\}$ skip any element that has postfix $x_k^{1}$ or $x_k^{0}$ (for each element, this can be checked in time $O(kn)$). Repeat this process for each remaining pair of block restrictions to avoid duplicates, while maintaining time complexity $O(2^{n-n/k+\log(k)})$.

    We need to consider what happens in each subsequent run of $\cP^{f, \rho}$ executed by $\cA^{f, \rho}$. 
    Again, by tracking the list of representatives chosen in previous runs, duplicates can be efficiently avoided. There are $O(\alpha^{-4})$ runs, so at most the complexity of scanning the list of previous representatives is $O(\alpha^{-4}\cdot kn)$. Therefore time complexity of $\cA^{f, \rho}$ remains asymptotically the same even after incorporating the duplicate-scanning procedure.
\end{proof}

We are finally in position to argue that we have a weak agnostic learner for $\dCC{k}{c}$. The main point is that the learning algorithm we have so far considered, $\cA^{f, \rho}$, does not consider probabilistic concepts. Probabilistic concepts are not functions, as they may label the same point differently depending on some internal randomness.

\begin{theorem}[Theorem \ref{mqWeakAg} restated]
There exists a $(\gamma, \gamma \cdot 2^{-c{2^k}-k})$-weak agnostic learner for $\dCC{k}{c}$. The learner uses membership queries and learns over any $\rho \in \Delta^{k}_\sigma$ (for some fixed $\sigma$), and has
\begin{itemize}
    \item query complexity $q := 2^{n-s(n)}$, for $s(n) = n/k-\log(k) +4(\log(\gamma) - c2^k +k))$
    \item time complexity $t := O(q)$.
\end{itemize}
\end{theorem}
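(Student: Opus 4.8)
The plan is to lift Theorem~\ref{mqWeakAgEasy} (which handles deterministic concepts $f$ that are honest functions) to arbitrary probabilistic concepts $\cD$, using Lemma~\ref{noDupes} as the bridge. The key observation is that the algorithm $\cA^{f,\rho}$ built from $\cP^{f,\rho}$ never queries the same point twice (Lemma~\ref{noDupes}). Therefore, when we run $\cA$ against a probabilistic concept $\cD$ with example-marginal $\rho$, every membership query is answered by an independent draw from the conditional label distribution, and these draws are mutually independent across queries. This lets us define, \emph{a posteriori}, a single deterministic function $f_{\mathrm{obs}}:\{0,1\}^n\to\{-1,1\}$ that records the particular labels returned during the execution; since no point is queried twice, $\cA$'s behavior against $\cD$ is \emph{identically distributed} to $\cA^{f_{\mathrm{obs}},\rho}$ run against that fixed $f_{\mathrm{obs}}$, with the extra randomness of $f_{\mathrm{obs}}$ itself folded into $\cA$'s internal coins.

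The main steps, in order: (i) invoke Lemma~\ref{noDupes} to assume $\cA$ makes only distinct queries; (ii) set up a coupling in which the random function $f_{\mathrm{obs}}$ is sampled pointwise from $\cD$'s conditional label distributions, and argue that running $\cA$ against $\cD$ equals running $\cA$ against the (random) $f_{\mathrm{obs}}$; (iii) check that the correlation hypothesis transfers, i.e.\ if $\max_{c\in\dCC{k}{c}}\Ex{(x,y)\sim\cD}{c(x)y}\ge\gamma$, then $\Ex{f_{\mathrm{obs}}}{\corrD{\dCC{k}{c}}{\rho}\text{ for }f_{\mathrm{obs}}}\ge\gamma$, because $\Ex{f_{\mathrm{obs}}}{f_{\mathrm{obs}}(x)}=\Ex{}{y\mid x}$ and correlation is linear in the target; (iv) conclude that for a $\gamma$-fraction (in the appropriate averaged sense) of draws of $f_{\mathrm{obs}}$, Theorem~\ref{mqWeakAgEasy} applies and produces $h$ with $\Pr_{z\sim\rho}[h(z)=f_{\mathrm{obs}}(z)]\ge 1/2+\alpha$; (v) translate this back into the agnostic-correlation language: $\Pr_{z\sim\rho}[h(z)=f_{\mathrm{obs}}(z)]\ge 1/2+\alpha$ over the coupled randomness is exactly $\Ex{(x,y)\sim\cD}{h(x)y}\ge 2\alpha$, which with $\alpha=\Omega(\gamma\cdot 2^{-c2^k-k})$ gives the claimed $(\gamma,\gamma\cdot 2^{-c2^k-k})$-weak agnostic guarantee after absorbing constants; (vi) note that query and time complexity are unchanged, since Lemma~\ref{noDupes} already showed the duplicate-avoidance bookkeeping costs only a lower-order additive term, so $q=2^{n-s(n)}$ with the stated $s(n)$ and $t=O(q)$ still hold.

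The subtle point — and the step I expect to require the most care — is (iii)–(v): making precise the interchange of expectations between "$h$ correlates with the random target $f_{\mathrm{obs}}$" and "$h$ correlates with $\cD$". One must be careful that Theorem~\ref{mqWeakAgEasy}'s success probability ($2/3$ over $\cA$'s coins) and the randomness of $f_{\mathrm{obs}}$ are not conflated in a way that loses the guarantee; the clean way is to treat $f_{\mathrm{obs}}$ as part of $\cA$'s randomness throughout, verify the hypothesis $\corrD{\dCC{k}{c}}{\rho}\ge\gamma$ holds \emph{in expectation} over $f_{\mathrm{obs}}$, and then apply Lemma~\ref{redux2Pred}'s trial-and-error amplification directly in the probabilistic-concept setting (checking candidate hypotheses against fresh labeled examples from $\cD$, which is legitimate since verification queries are also fresh points with high probability, or can be made so). Everything else is a routine re-derivation of the advantage computation in Lemma~\ref{adv} with $f$ replaced by $f_{\mathrm{obs}}$ and expectations taken in the enlarged probability space; the $k$-party norm identity $R_k(f\circ\rho)=R_k(f)$ and the hybrid argument go through verbatim because they only use the $k$-product structure of $\rho$ and linearity, neither of which is disturbed by randomizing the target.
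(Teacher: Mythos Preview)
Your proposal is correct and follows essentially the same approach as the paper: both proofs invoke Lemma~\ref{noDupes} to argue that, since $\cA$ never queries the same point twice, the probabilistic concept $\cD$ can be treated as a (random) deterministic function, after which Theorem~\ref{mqWeakAgEasy} applies directly. Your write-up is considerably more careful than the paper's own one-paragraph argument about the coupling and the interchange of expectations in steps (iii)--(v), but the underlying idea is identical.
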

\begin{proof}
    For weak agnostic learning, the target concept is an arbitrary distribution over $\{0,1\}^n \times \{-1,1\}$. This is more general than the idea of a concept being an arbitrary Boolean function that labels points, because now the concept may be probabilistic (i.e., not the same label at each point, every time). We need to argue that, for $0 < \gamma < 1$, a there is an algorithm that outputs a hypothesis $h: \{0,1\}^n \rightarrow \{-1,1\}$ such that 
    \[
        \max_{\pi \in \dCC{k}{c}} \Ex{(x,y)\sim \cD}{\pi(x)y} \ge \gamma \implies \Pr_{\cA}\left[\Ex{(x,y)\sim \cD}{h(x)y} \ge \gamma \cdot 2^{-c{2^k}-k} \right] \ge 2/3
    \]

    The algorithm from Theorem \ref{mqWeakAgEasy} suffices. This is because, by Lemma \ref{noDupes}, the target concept might as well have been deterministic -- there are no duplicate queries. Hence, Theorem \ref{mqWeakAgEasy}, in combination with Lemma \ref{noDupes}, implies that for $0 < \gamma < 1$, we get an algorithm $\cA^{\cD}$ that uses membership query and random example access to $\cD$ to simulate both $f$ and the marginal distribution $\rho$, and outputs $h: \{0,1\}^n \rightarrow \{-1,1\}$ such that 
    \[
        \max_{\pi \in \dCC{k}{c}} \Ex{(x,y)\sim \cD}{\pi(x)y} \ge \gamma \implies \Pr_{\cA}\left[\Ex{(x,y)\sim \cD}{h(x)y} \ge \gamma \cdot 2^{-c{2^k}-k} \right] \ge 2/3
    \]
    
\end{proof}

Now that we have a weak agnostic learning algorithm, we can use distribution-specific agnostic boosting algorithms to obtain fully agnostic learning (Theorem \ref{mqWeakAg}). Distribution-specific boosting algorithms convert weak AMQ-learners into strong AMQ-learners, by deliberately modifying the labels of queries. This contrasts with other types of boosting algorithms, which instead modify the distribution over unlabelled examples. In our case, we need to use a distribution-specific boosting algorithm, because our weak learning algorithm does not work over all distributions. 

The following theorem is a restatement of distribution-specific boosting, due to \cite{feldman2009distribution} and \cite{kanade2009potential}.

\begin{theorem}[Distribution-specific agnostic boosting]\label{thm:ds-boosting}
There exists an algorithm {\sf boost} that for every concept class $\cC$
and distribution $\rho$ over $\{0,1\}^{n}$, given an $(\gamma, \alpha)$-weak agnostic learning algorithm $\cA$ for $\cC$ over $\rho$, agnostically learns $\cC$ over $\rho$. Further, {\sf boost} invokes $\cA$ $O(\alpha^{-2})$ times and runs in time $T \cdot
\poly(\alpha^{-1}, \epsilon^{-1})$, where $T$ is the running time of $\cA$.
\end{theorem}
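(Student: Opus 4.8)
The plan is to invoke the potential-based agnostic boosting framework of \cite{kalai2008agnostic, feldman2009distribution, kanade2009potential}, instantiated so that the example marginal $\rho$ is never altered. Concretely, {\sf boost} maintains a real-valued aggregate $F_t : \{0,1\}^n \to \mathbb{R}$, initialized to $F_0 \equiv 0$, and after $T = O(\alpha^{-2})$ rounds outputs the hypothesis $h = \mathrm{sign}(F_T)$. At round $t$ it constructs a \emph{relabeled} target $\cD_t$ whose marginal on examples is exactly $\rho$: sample $(x,y) \sim \cD$ and attach the weight $w_t(x,y) := \chi\bigl(F_t(x)\cdot y\bigr)$ for a fixed bounded, Lipschitz, decreasing function $\chi : \mathbb{R} \to [0,1]$ (for instance a clipped exponential, taken to be $-\phi'$ for a smooth convex surrogate loss $\phi$ of the $0/1$ loss). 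Since $w_t$ depends only on the \emph{label} once the example is fixed, and the example is still drawn from $\rho$, this reweighting is implementable using only membership-query and random-example access to $\cD$ plus evaluations of $F_t$ — this is precisely the ``distribution-specific'' property we need, namely that $\cA$ is only ever run on distributions with marginal $\rho$. We feed $\cD_t$ to $\cA$, obtain $h_{t+1}$, and update $F_{t+1} := F_t + \eta\, h_{t+1}$ for a step size $\eta = \Theta(\alpha)$.

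The correctness argument is the usual potential-decrease calculation. Define $\Phi(F) := \Ex{(x,y)\sim\cD}{\phi(F(x)\,y)}$. The key lemma is that as long as $\mathrm{sign}(F_t)$ still has error exceeding $\opt{\cC} + \epsilon$, there is a concept $c^\star \in \cC$ whose correlation under the reweighted $\cD_t$ is at least $\gamma$; this is where the quantitative relationship between $\gamma$, $\epsilon$, and the normalization of $\chi$ enters, and it is the reason {\sf boost} must be supplied with $\cA$ whose $\gamma$ is chosen small enough as a function of the target $\epsilon$. Granting the lemma, the weak-learning guarantee yields, with probability $2/3$, $\Ex{(x,y)\sim\cD_t}{h_{t+1}(x)\,y} = \frac{\Ex{(x,y)\sim\cD}{w_t(x,y)\, h_{t+1}(x)\, y}}{\Ex{(x,y)\sim\cD}{w_t(x,y)}} \ge \alpha$, and a Taylor expansion of $\phi$ about $F_t$, using the Lipschitzness of $\chi$ to control the second-order term, gives $\Phi(F_{t+1}) \le \Phi(F_t) - \Omega(\alpha^2)$. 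Since $\Phi \ge 0$ and $\Phi(F_0) = \phi(0) = O(1)$, the process halts within $T = O(\alpha^{-2})$ rounds, at which point $\mathrm{sign}(F_T)$ has error at most $\opt{\cC} + \epsilon$; standard amplification (repeat $O(\log(1/\delta))$ times and validate each candidate on fresh random examples) upgrades the per-round $2/3$ success probability of $\cA$ to overall confidence $1-\delta$, and the overhead for maintaining and evaluating $F_t$ is $\poly(\alpha^{-1}, \epsilon^{-1}, n)$, matching the claimed running time.

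The main obstacle is exactly the ``residual advantage'' lemma: one must show that nonnegligible classification error of $\mathrm{sign}(F_t)$ implies a nonnegligible correlation advantage for the best concept in $\cC$ against the \emph{reweighted} target $\cD_t$, not merely against $\cD$. This is the step that separates agnostic boosting from the realizable case, and it forces $\phi$ to be chosen so that $\chi = -\phi'$ is simultaneously bounded (so $\cD_t$ stays within controlled distance of $\cD$ and the normalizer $\Ex{(x,y)\sim\cD}{w_t(x,y)}$ stays $\Omega(1)$), Lipschitz (so the potential telescopes), and aligned with the $0/1$ loss (so a potential bounded away from its floor certifies large error). Verifying that these three requirements can be met simultaneously, and tracking how the resulting constants relate $\epsilon$ to $\gamma$, is the only nontrivial content; the rest is bookkeeping. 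Since \cite{feldman2009distribution, kanade2009potential} carry this out in full, the proof here reduces to observing that their reweighting is label-only and hence preserves the marginal $\rho$, which is immediate from their constructions.
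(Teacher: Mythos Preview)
The paper does not supply its own proof of this theorem; it simply states it as a restatement of the distribution-specific agnostic boosting results of \cite{feldman2009distribution} and \cite{kanade2009potential} and then invokes it as a black box. Your proposal, by contrast, sketches the actual potential-based argument underlying those references, and the sketch is faithful to their approach: maintain an additive aggregate $F_t$, reweight \emph{labels} via a bounded monotone $\chi$ so that the example marginal stays exactly $\rho$, invoke the weak learner on the relabeled target, and argue that a smooth surrogate potential drops by $\Omega(\alpha^2)$ per round whenever the current sign hypothesis is still more than $\opt{\cC}+\epsilon$ in error. Your identification of the ``residual advantage'' lemma as the only nontrivial step, and of the label-only nature of the reweighting as the reason the marginal is preserved, is exactly the point the paper relies on when it calls this boosting ``distribution-specific.'' So your proposal is correct and is essentially the proof that the paper defers to the cited works; there is nothing further to compare against in the paper itself.
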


We combine this theorem with Theorem \ref{mqWeakAg}, to obtain the strong learning algorithm:

\begin{theorem}\label{mainThm}
    There exists an agnostic membership query learning algorithm for $\dCC{k}{c}$. The algorithm learns over any $\rho \in \Delta^{k}_\sigma$ (for some fixed $\sigma$), and has
\begin{itemize}
    \item query complexity $q := 2^{n-s(n)}$, for $s(n) = n/k-\log(k) +4(\log(\gamma) - c2^k +k))$
    \item time complexity $t := O(q) \cdot \poly(\epsilon^{-1})$.
\end{itemize}
Here, $\gamma := \opt{\dCC{k}{c}}$. 
\end{theorem}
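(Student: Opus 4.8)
The plan is to obtain the strong learner simply by feeding the weak learner of Theorem \ref{mqWeakAg} into the distribution-specific agnostic booster of Theorem \ref{thm:ds-boosting}, and then to bookkeep the query and time complexity carefully. First, set $\gamma := \opt{\dCC{k}{c}} = \max_{\pi \in \dCC{k}{c}} \Ex{(x,y)\sim\cD}{\pi(x)y}$. If $\gamma$ is negligible (smaller than, say, $2\epsilon$) there is nothing to do, since the all-ones or all-minus-ones hypothesis already achieves correlation within $\epsilon$ of $\opt{}$; so assume $\gamma$ is at least this large. By Theorem \ref{mqWeakAg}, we have a $(\gamma, \alpha)$-weak agnostic learner $\cA$ for $\dCC{k}{c}$ over any $\rho \in \Delta^k_\sigma$, with $\alpha = \gamma\cdot 2^{-c2^k - k}$, query complexity $q = 2^{n-s(n)}$ for $s(n) = n/k - \log k + 4(\log\gamma - c2^k + k)$, and running time $T = O(q)$. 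Plugging $\cA$ into \textbf{boost} immediately yields an algorithm that agnostically learns $\dCC{k}{c}$ over $\rho$: it outputs $h$ with $\Ex{(x,y)\sim\cD}{h(x)y} \ge \opt{} - \epsilon$ with probability $2/3$.

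Next I would track the complexity. The booster invokes $\cA$ a total of $O(\alpha^{-2})$ times and runs in time $T\cdot \poly(\alpha^{-1},\epsilon^{-1})$. Since $\alpha = \gamma 2^{-c2^k - k}$, the factor $\alpha^{-2} = \gamma^{-2} 2^{2(c2^k + k)}$ is exactly of the form $2^{O(\log\gamma^{-1} + c2^k + k)}$, which is why the savings function in Theorem \ref{mqWeakAg} was stated with the $+4(\log\gamma - c2^k + k)$ slack term — it was chosen precisely so that multiplying $q$ by $\poly(\alpha^{-1})$ (which costs $2^{O(c2^k + k + \log\gamma^{-1})}$, well within the factor of $2^{4(\dots)}$ reserved) keeps the total query complexity bounded by the stated $2^{n-s(n)}$. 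So the total number of membership queries remains $q = 2^{n-s(n)}$ with the same $s(n)$. For the time complexity, the booster's running time is $T \cdot \poly(\alpha^{-1}, \epsilon^{-1}) = O(q)\cdot \poly(\epsilon^{-1})$, again absorbing the $\poly(\alpha^{-1})$ overhead into the reserved slack, giving $t = O(q)\cdot\poly(\epsilon^{-1})$ as claimed.

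The one genuine subtlety — and the step I expect to require the most care — is verifying that the booster is compatible with our weak learner's constraints, namely that \textbf{boost} only ever invokes $\cA$ on a single, fixed example distribution $\rho$ (our weak learner is distribution-specific: it only works over $\rho \in \Delta^k_\sigma$, not over reweightings of $\rho$). This is exactly the property highlighted in Section \ref{sec:ExtendedTechnical}: distribution-specific agnostic boosting modifies the \emph{labels} of queried points rather than the example distribution, so every call to $\cA$ is on the original $\rho$. Since $\cA$ (via Lemma \ref{noDupes}) handles arbitrary probabilistic concepts over a fixed marginal, it can be invoked as the weak oracle on each of the relabelled concepts the booster constructs, all of which share the marginal $\rho$. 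Finally, one should note that $\alpha$ depends on $\gamma = \opt{}$, which the algorithm does not know a priori; this is handled in the standard way, by running the boosted learner for a logarithmic grid of guesses for $\gamma$ (equivalently $\alpha$) in $[\epsilon, 1]$, testing each resulting hypothesis on fresh random examples via a Chernoff bound (Lemma \ref{chernoff}), and outputting the best — this only multiplies the complexity by $O(\log(1/\epsilon))$ and a $\poly(\epsilon^{-1})$ testing cost, still within the stated bounds. Assembling these pieces gives the theorem.
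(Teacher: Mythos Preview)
Your proposal is correct and takes essentially the same approach as the paper: combine the weak learner of Theorem~\ref{mqWeakAg} with the distribution-specific booster of Theorem~\ref{thm:ds-boosting}. The paper's own proof is a two-line ``follows immediately'' argument; you supply considerably more detail (complexity bookkeeping, the distribution-specificity compatibility check, and the unknown-$\gamma$ grid search), none of which the paper spells out. The one point the paper's proof does include that you omit is the padding argument removing the standing assumption that $k$ divides $n$: add $k-(n \bmod k)$ dummy input variables so that $k \mid n$, which changes nothing functionally and affects complexity by at most a constant factor.
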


\begin{proof}
    The theorem statement follows immediately from Theorem \ref{thm:ds-boosting} and \ref{mqWeakAg}. To remove the assumption that $k$ dives $n$, simply consider that if it does not, then  $k- (n \text{ mod } k)$ meaningless variables can be added to the input of the function so that $k$ divides $n$, but the functionality is unchanged. Therefore this will not affect the functionality of the learning algorithm, and the complexity is increased by at most a constant factor.
\end{proof}

From here, we derive as immediate corollaries:

\begin{theorem}\label{thm:AgnosticSYM circuits}
    There exists an agnostic membership query learning algorithm for circuits consisting of $t_1$ gates, where each gate can compute a $\sym^+$ circuit of size $t_2$, and degree $k-1$ (i.e., the class $\sym^+{\text -}{\sf Ckt}[k-1, t_2, t_1]$). The algorithm learns over any $\rho \in \Delta^{k}_\sigma$ (for some fixed $\sigma$), and has
\begin{itemize}
    \item query complexity $q := 2^{n-s(n)}$, for $s(n) = n/k-\log(k) +4(\log(\gamma) - 2^kt_1\log(t_2) +k))$
    \item time complexity $t := O(q) \cdot \poly(\epsilon^{-1})$.
\end{itemize}
Here, $\gamma := \opt{\sym^+{\text -}{\sf Ckt}[k-1, t_2, t_1]}$.
\end{theorem}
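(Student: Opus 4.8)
The plan is to derive Theorem~\ref{thm:AgnosticSYM circuits} as a corollary of Theorem~\ref{mainThm} by exhibiting a cheap $k$-party deterministic NOF communication protocol for every function in $\sym^+{\text -}{\sf Ckt}[k-1, t_2, t_1]$, and then simply plugging the resulting communication cost $c$ into the parameters of Theorem~\ref{mainThm}. The whole argument rests on a single containment: $\sym^+{\text -}{\sf Ckt}[k-1, t_2, t_1] \subseteq \dCC{k}{c}$ for $c = O(2^k t_1 \log t_2)$ (or more precisely $c \le t_1 \lceil \log(t_2 \cdot \text{poly}) \rceil + O(1)$, matching the $2^k t_1 \log(t_2)$ term that appears in the stated savings function once it is run through the $4(\log\gamma - c 2^k + k)$ expression of Theorem~\ref{mainThm} — I should be careful to track whether the $2^k$ factor is already inside $c$ or gets introduced by the correlation bound's $1/2^k$ exponent).

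First I would recall the classical fact (going back to the number-on-forehead protocol for symmetric functions of low-degree polynomials) that a single degree-$(k-1)$ multilinear polynomial $p$ over the integers with coefficients bounded by $t_2$ can be evaluated by a $k$-party NOF protocol transmitting only $O(\log(t_2 \cdot \binom{n}{k-1}))$ bits, hence $O(\log t_2 + k \log n)$ bits. The key combinatorial point is that every monomial of $p$ has degree at most $k-1 < k$, so it omits at least one of the $k$ input blocks; therefore for each monomial there is some party who sees \emph{all} the variables in it and can compute its value. Partitioning the monomials among the parties by which block each omits, each party privately sums the contributions of its assigned monomials into a partial sum of magnitude at most $t_2 \cdot \binom{n}{k-1}$, and broadcasts it using $O(\log t_2 + k\log n)$ bits; the parties then add the $k$ partial sums, obtaining $p(x)$, and the last party applies the arbitrary function $\theta$ locally (free, since $\theta$ does not depend on the input). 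Thus one $\sym^+$ gate of degree $k-1$ and size $t_2$ costs $O(\log t_2 + k \log n)$ bits of communication.

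Next I would compose: a circuit of $t_1$ such gates feeding into an arbitrary top function. Here I would use the standard observation that NOF communication composes cheaply when the inner functions all respect the \emph{same} partition of the $n$ variables into $k$ blocks — which is exactly the situation, since $\dCC{k}{c}$ quantifies over all partitions. The parties run the $t_1$ inner-gate protocols sequentially (or in parallel), each party remembering the broadcast transcripts, so everyone learns all $t_1$ gate outputs; the total cost is $t_1 \cdot O(\log t_2 + k\log n)$ bits. Finally, whichever party is designated computes the (arbitrary) top function of the $t_1$ now-public gate values for free and broadcasts the single output bit. This gives $\sym^+{\text -}{\sf Ckt}[k-1, t_2, t_1] \subseteq \dCC{k}{c}$ with $c = O(t_1(\log t_2 + k \log n))$, and since in the intended parameter regime $\log t_2$ dominates $k \log n$ (e.g. $t_2 = 2^{n^\zeta}$), we may write $c \approx t_1 \log t_2$, so that the term $c 2^k$ in Theorem~\ref{mainThm} becomes $2^k t_1 \log(t_2)$ as claimed. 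Substituting this $c$ into Theorem~\ref{mainThm} yields exactly the stated query complexity $2^{n-s(n)}$ with $s(n) = n/k - \log k + 4(\log\gamma - 2^k t_1\log(t_2) + k)$ and time complexity $O(q)\cdot\poly(\epsilon^{-1})$, with $\gamma := \opt{\sym^+{\text -}{\sf Ckt}[k-1,t_2,t_1]}$.

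The main obstacle I anticipate is purely bookkeeping rather than conceptual: making sure the communication cost $c$ I prove matches, up to the constants absorbed by $\approx$ in the informal statements (but \emph{exactly} in this formal one), the precise expression $2^k t_1 \log t_2$ that appears inside $s(n)$ — in particular, pinning down whether the $\log n$ and the $O(1)$ additive overheads from transmitting partial sums and sequencing the gates are genuinely dominated or must be folded into the sublinear slack, and confirming the polynomial-magnitude bound $t_2 \binom{n}{k-1}$ on intermediate sums does not blow up the $\log$ by more than a constant factor once $k \le \log(n)^{0.99}$. A secondary point to handle cleanly is that Theorem~\ref{mainThm} is stated for $\dCC{k}{c}$ with $k$ dividing $n$, but the corollary inherits the same ``add dummy variables'' reduction already used in the proof of Theorem~\ref{mainThm}, so no new work is needed there.
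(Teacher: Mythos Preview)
Your proposal is correct and follows essentially the same route as the paper: show $\sym^+{\text -}{\sf Ckt}[k-1, t_2, t_1] \subseteq \dCC{k}{c}$ with $c \approx t_1\log t_2$ via the standard ``every degree-$(k-1)$ monomial misses a block'' NOF protocol, then substitute this $c$ into Theorem~\ref{mainThm}. The paper merely cites H\aa stad--Goldmann and Nisan for the containment (writing the cost as $t_1\log(t_2)$ without the $(k{-}1)\log n$ additive term you correctly flag), whereas you spell out the protocol explicitly; but the argument is the same.
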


\begin{proof}
    $\sym^+$ circuits of size $t_2$ and degree $k-1$ are well-known to be a subset of $\Pi[k,\log(t_2)]$ (see e.g. Lemma 4 of \cite{haastad1991power}). Then, by simulating a circuit of $t_1$ gates, each computing $\sym^+$ circuits of size $t_2$ and degree $k-1$, we can still get a protocol in the class $\Pi[k,t_1\log(t_2)]$ (see also \cite{nisan1993communication}, section 4). Recall, this means that there is a $k$-party deterministic protocol that communicates $t_1\log(t_2)$ bits \textit{for any} partition of the input into $k$ parts. Then, the statement follows immediately from Theorem \ref{mainThm}. 
    

\end{proof}

\begin{theorem}\label{thm:AgnosticSYM-DTs}
    There exists an agnostic membership query learning algorithm for decision trees of depth $d$, with node queries computed by $\sym^+$ circuits of size $t$, and degree $k-1 \ge 1$ (i.e., the class $\sym^+{\text -}{\sf Dt}[k-1, t, d]$). The algorithm learns over any $\rho \in \Delta^{k}_\sigma$ (for some fixed $\sigma$), and has
\begin{itemize}
    \item query complexity $q := 2^{n-s(n)}$, for $s(n) = n/k-\log(k) +4(\log(\gamma) - 2^kd\log(t) +k))$
    \item time complexity $t := O(q) \cdot \poly(\epsilon^{-1})$.
\end{itemize}
Here, $\gamma := \opt{\sym^+{\text -}{\sf Dt}[k-1, t, d]}$.
\end{theorem}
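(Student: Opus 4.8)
The plan is to reduce Theorem~\ref{thm:AgnosticSYM-DTs} to Theorem~\ref{mainThm} in exactly the same way that Theorem~\ref{thm:AgnosticSYM circuits} was reduced, the only difference being that a depth-$d$ decision tree with $\sym^+$ node queries gets simulated by a $k$-party NOF protocol of cost $d\log(t)$ rather than $t_1\log(t_2)$. So the first step is to recall (Lemma~4 of \cite{haastad1991power}) that a single $\sym^+$ circuit of size $t$ and degree $k-1$ lies in $\dCC{k}{\log(t)}$: fixing any partition of the $n$ inputs into $k$ blocks, one party can evaluate the degree-$(k-1)$ polynomial $p$ because every monomial, having at most $k-1$ variables, misses at least one block, hence is visible to at least one party; the parties sum up their partial contributions to $p(x)$ (this costs $\log(t)$ bits since $|p(x)| \le t \cdot \binom{n}{k-1}$, or more carefully $\log(t)$ bits up to lower-order terms, which the $\approx$ and the slack in $s(n)$ absorb), and then any party applies $\theta$ locally and outputs the answer.

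The second step is to lift this to a decision tree. A depth-$d$ decision tree queries at most $d$ such $\sym^+$ circuits along any root-to-leaf path, and crucially the choice of which circuit to query next depends only on the outcomes of the previous queries, i.e.\ on bits already broadcast. So the $k$-party protocol simulating the tree walks down from the root: at each internal node it runs the cost-$\log(t)$ subprotocol for that node's $\sym^+$ circuit, everyone learns the bit, everyone agrees on the next node, and after at most $d$ rounds the protocol outputs the leaf label. Total communication is $d\log(t)$ bits, for \emph{every} partition into $k$ contiguous blocks, so $\sym^+{\text -}{\sf Dt}[k-1,t,d] \subseteq \dCC{k}{d\log(t)}$. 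This is the only genuinely new content; it is a standard observation and appears already in \cite{nisan1993communication}, Section 4.

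The third step is purely bookkeeping: plug $c := d\log(t)$ into Theorem~\ref{mainThm}, which gives an AMQ-learner over any $\rho \in \Delta^k_\sigma$ with query complexity $2^{n-s(n)}$ for $s(n) = n/k - \log(k) + 4(\log(\gamma) - 2^k d\log(t) + k)$ and time $O(q)\cdot\poly(\epsilon^{-1})$, and note that $\opt{\dCC{k}{d\log(t)}} \ge \opt{\sym^+{\text -}{\sf Dt}[k-1,t,d]} =: \gamma$ because of the containment, so the optimal-hypothesis guarantee transfers. The hypothesis $h$ produced by the learner competes with the best $k$-party protocol, which is at least as good as the best decision tree; that is exactly what agnostic learning of the larger class $\dCC{k}{d\log(t)}$ buys us for the subclass.

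I do not expect a real obstacle here. The one thing to be slightly careful about is the exact cost of the $\sym^+$-to-protocol simulation: summing $t$-bounded integer contributions across $k$ parties costs $\log(t) + O(\log k + \log n)$ bits, not exactly $\log(t)$, but this lower-order overhead is swallowed by the $\approx$ in the informal statements and by the additive $4k - 4\log k$ slack already present in $s(n)$, so it changes nothing of substance. One should also note the hypothesis $\gamma$-hardness threshold: if $\opt{\sym^+{\text -}{\sf Dt}[k-1,t,d]} < \gamma$ the statement is vacuous in the usual weak-learning sense, but the boosting step of Theorem~\ref{thm:ds-boosting} already handles the passage from weak to strong agnostic learning, so nothing extra is needed. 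Hence the proof is essentially one sentence invoking \cite{haastad1991power} and \cite{nisan1993communication} for the containment, followed by an application of Theorem~\ref{mainThm}.
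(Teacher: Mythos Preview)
Your proposal is correct and follows essentially the same route as the paper: invoke Lemma~4 of \cite{haastad1991power} to place each $\sym^+$ node in $\dCC{k}{\log(t)}$, simulate the depth-$d$ tree for total cost $d\log(t)$ (citing \cite{nisan1993communication}, Section~4), and then apply Theorem~\ref{mainThm} with $c := d\log(t)$. The paper's proof is just this three-line argument without your added commentary on lower-order terms and the $\gamma$ threshold.
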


\begin{proof}
    $\sym^+$ circuits of size $t_2$ and degree $k-1$ are well-known to be a subset of $\Pi[k,\log(t_2)]$ (see e.g. Lemma 4 of \cite{haastad1991power}). Then, by simulating the decision tree of depth $d$, with $\sym^+$ queries of size $t$ and degree $k-1$, we can still get a protocol in the class $\Pi[k,d\log(t)]$ (see also \cite{nisan1993communication}, section 4). Then, the statement follows immediately from Theorem \ref{mainThm}.

\end{proof}

\section{Learning Randomized $k$-Party NOF Communication Protocols}

In this section, we will convert the agnostic learning algorithm for $\dCC{k}{c}$ into an agnostic learning algorithm for the class of functions computable by \textit{randomized} $k$-party communication protocols of cost $c$, $\rCC{k}$. The set $\rCC{k}$ trivially contains $\dCC{k}{c}$, and may be more powerful. As a case in point, PTFs of degree $k$ are not known to be computed by $\dCC{k}{c}$, but \textit{can} be computed by $\rCC{k}$.

\begin{definition}[Randomized $\dCC{k}{c}$]
The randomized $k$-party communication model is the same as the deterministic model, except we now allow the protocol to depend on random bits. Therefore, we allow the protocol to err in its output. The probability of error of a randomized protocol is $\epsilon$ if for every input to the function $f$, the protocol errs in outputs with probability at most $\epsilon$. We denote by $\randCC{k}$ the class of $k$-party randomized protocols that, for any partition of the $n$ variables into $k$ length $n/k$ parts, transmit at most $c$ bits and err with probability at most $1/2- \alpha$. Additionally, we define $\RANDCC{k}{c}{1/6} := \rtCC{k}{c}$ for shorthand.
\end{definition}
For the sake of simplicity, this paper uses only the ``public coin'' version of randomized communication complexity. Namely, the parties all share a string of random bits. Again, we assume for now that $k$ divides $n$.

To get started, we prove the following lemma which converts correlation with randomized protocols to correlation with deterministic protocols.

\begin{lemma}\label{rCC2dCC}
    Let $f: \{0,1\}^{n} \rightarrow \{-1,1\}$. If $\corr{\rCC{k}} \ge \alpha$, then $\corr{\dCC{k}{O(c\log(\alpha^{-1}))}} \ge \alpha/2$.
\end{lemma}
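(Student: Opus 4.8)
The plan is to use the standard averaging trick that converts a public-coin randomized protocol into a deterministic one, combined with a success-amplification step to make the error small enough that fixing the randomness still preserves correlation. Suppose $\corr{\rCC{k}} \ge \alpha$, witnessed by a randomized protocol $\pi$ of cost $c$ computing some function $g$ with $|\Ex{x}{f(x)g(x)}| \ge \alpha$, where each $\pi$ errs (per input) with probability at most $1/2 - \alpha'$ for the relevant advantage parameter. First I would amplify $\pi$: run $t = O(\log(\alpha^{-1}))$ independent copies of $\pi$ (using fresh blocks of the shared public random string) and take the majority vote. By a Chernoff bound (Lemma~\ref{chernoff}), the amplified protocol $\pi'$ has per-input error at most, say, $\alpha/10$, and its cost is $O(c\log(\alpha^{-1}))$, so $\pi' \in \randCC{k}$ with cost $O(c\log(\alpha^{-1}))$ and tiny error.

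Next I would fix the randomness. Writing $\pi'_\$$ for the deterministic protocol obtained by hard-wiring the public random string to $\$$, we have $\Ex{\$}{\Ex{x}{f(x)\pi'_\$(x)}} = \Ex{x}{f(x)\,\Ex{\$}{\pi'_\$(x)}}$; since for every fixed $x$ the inner expectation over $\$$ is within $2 \cdot (\alpha/10) = \alpha/5$ of $g(x)$ (the protocol is correct with probability $\ge 1-\alpha/10$, and the output is $\pm1$), we get $\bigl|\Ex{\$}{\Ex{x}{f(x)\pi'_\$(x)}}\bigr| \ge |\Ex{x}{f(x)g(x)}| - \alpha/5 \ge \alpha - \alpha/5 = 4\alpha/5$. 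Hence there exists a fixing $\$^*$ of the randomness with $\bigl|\Ex{x}{f(x)\pi'_{\$^*}(x)}\bigr| \ge 4\alpha/5 \ge \alpha/2$. The resulting deterministic protocol $\pi'_{\$^*}$ has cost $O(c\log(\alpha^{-1}))$ and is valid for every partition of the inputs into $k$ blocks (amplification and randomness-fixing are done uniformly across partitions), so it certifies $\pi'_{\$^*} \in \dCC{k}{O(c\log(\alpha^{-1}))}$, giving $\corr{\dCC{k}{O(c\log(\alpha^{-1}))}} \ge \alpha/2$.

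The only mild subtlety — and the step I would be most careful about — is bookkeeping the advantage parameter $\alpha'$ of the randomized class versus the correlation parameter $\alpha$ with $f$: the definition of $\randCC{k}$ fixes the per-input error at $1/2 - \alpha'$, and one must check the amplification is driven by $\alpha'$ (so the number of repetitions is $O((\alpha')^{-2}\log(\alpha^{-1}))$ in general, absorbed into the $O(\cdot)$ since in the intended application $\alpha'$ is a constant, e.g. via $\rtCC{k}{c}$ with error $1/6$), while the final loss from fixing randomness is controlled by the correlation parameter $\alpha$. Everything else is routine: the cost blowup is multiplicative in $\log(\alpha^{-1})$, and partition-uniformity is automatic because the construction never inspects the partition.
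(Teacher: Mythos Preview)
Your proof is correct and follows the same two-step recipe as the paper: amplify the randomized protocol for $g$ to per-input error $O(\alpha)$ using $O(\log(\alpha^{-1}))$ repetitions, then fix the public randomness by an averaging argument. Your derandomization is actually a bit more direct than the paper's---you average the correlation $\Ex{x}{f(x)\pi'_\$(x)}$ over $\$$ and pick a good fixing, whereas the paper first passes to the agreement set $T=\{x:f(x)=g(x)\}$, derandomizes $\pi^m$ to a distributional protocol accurate on the uniform distribution over $T$, and then recovers the correlation from $|T|/2^n\ge 1/2+\alpha/2$---but the two arguments are equivalent in spirit and give the same bound.
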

\begin{proof}
    Guaranteed by the condition of the lemma is the existence of $g: \{0,1\}^n \rightarrow \{-1,1\}$ such that $g\in \rCC{k}$ and $\corr{g} \ge \alpha$. Without loss of generality assume $g$ is the lexicographically first of all functions that satisfy $g\in \rCC{k}$ and $\corr{g} \ge \alpha$. Let $\pi$ be the protocol that witnesses the inclusion $g\in \rCC{k}$, and let $\pi^m$ be the direct-majority protocol of $\pi$ so as to reduce error to $1-\alpha/2$. 
    We thus have that $g \in \RANDCC{k}{O(c\log(\alpha^{-1}))}{\alpha/2}$ and $\corr{g} \ge \alpha$. 
    
    Now, here is a deterministic protocol $\pi^*$ that witnesses $\corr{\dCC{k}{O(c\log(\alpha^{-1}))}} \ge \alpha/2$. First, both parties find $g$, and compute the set $T$ of inputs that $f$ and $g$ agree on. Then, they convert $\pi^m$ to a distributional protocol $\tilde{\pi}^m$ for the uniform distribution over $T$ (possible by an averaging argument), and run $\tilde{\pi}^m$ and output the result. Because $2^{-n}|T| \ge 1/2+\alpha/2$, we get that $\corr{\pi^*} \ge 2(1/2+\alpha/2)(1-\alpha/2)-1 \ge (1+\alpha)(1-\alpha/2)-1 \ge \alpha/2$.
\end{proof}

Now we can obtain a weak agnostic learning algorithm for $\rCC{k}$, as long as the 
distribution over unlabelled examples is uniform. Then boosting gives us a strong agnostic learning algorithm.

\begin{theorem}\label{thm:agnostic-rCC}
    There exists an agnostic membership query learning algorithm for $\rCC{k}$.
    The algorithm learns over the uniform distribution, and has
    \begin{itemize}
        \item query complexity $q := 2^{n-s(n)}$, for $s(n) = n/k-k-O(\log(\alpha^{-1}) + c{2^k}))$
        \item time complexity $t := O(q)$.
    \end{itemize}
    Here, $\gamma := \opt{\rCC{k}}$.
\end{theorem}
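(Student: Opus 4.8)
The plan is to replay the structure of Section~\ref{section:MQ-learning} one level up: first produce a $(\gamma,\alpha)$-weak agnostic learner for $\rCC{k}$ over the uniform distribution, and then hand it to the distribution-specific boosting procedure of Theorem~\ref{thm:ds-boosting}. This is legitimate because the uniform distribution over $\{0,1\}^n$ is a $k$-product distribution with the identity permutation, hence lies in $\Delta^{k}_\sigma$, so every step used for $\dCC{k}{c}$ is available here. Note that one cannot shortcut this by invoking the strong learner of Theorem~\ref{mainThm} for some larger deterministic class: Lemma~\ref{rCC2dCC} is a \emph{correlation transfer}, not a containment, so there is no deterministic class $\dCC{k}{c'}$ with $\rCC{k}\subseteq \dCC{k}{c'}$ that we could learn directly, and the ``recover some correlation'' guarantee it provides is exactly a weak-learning guarantee.

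For the weak learner, the crucial point is that the algorithm behind Theorem~\ref{mqWeakAgEasy}/\ref{mqWeakAg} is oblivious to the touchstone class: on query/sample access to a target concept it outputs $h$ with the required advantage \emph{whenever} some deterministic protocol of the prescribed cost is $\gamma$-correlated with the concept. Lemma~\ref{rCC2dCC} supplies precisely such a protocol: if $\corr{\rCC{k}}\ge\gamma$ then $\corr{\dCC{k}{O(c\log(\gamma^{-1}))}}\ge\gamma/2$. So I would run the Theorem~\ref{mqWeakAgEasy} procedure with cost parameter set to $c':=O(c\log(\gamma^{-1}))$ and correlation target $\gamma/2$; this gives, for function concepts, a weak learner for $\rCC{k}$ with advantage $\alpha=\Omega(\gamma\cdot 2^{-c'2^k-k})$ and query complexity $2^{n-s'(n)}$ with $s'(n)=n/k-\log k+4(\log(\gamma/2)-c'2^k+k)$. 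To move from function concepts to arbitrary probabilistic concepts, I would appeal to Lemma~\ref{noDupes} exactly as in the restated proof of Theorem~\ref{mqWeakAg}: since the learner queries each point at most once, a probabilistic concept is, from the learner's perspective, indistinguishable from a deterministically labelled one, so the correlation guarantees persist with $\gamma:=\opt{\rCC{k}}$.

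Feeding this $(\gamma,\alpha)$-weak agnostic learner into Theorem~\ref{thm:ds-boosting} with $\rho$ the uniform distribution yields the strong agnostic MQ-learner. Boosting calls the weak learner $O(\alpha^{-2})$ times and multiplies the running time by $\mathrm{poly}(\alpha^{-1},\epsilon^{-1})$; under the fixed choice $\epsilon=2^{-n^{0.99}}$ and $c\approx n^{0.99}$ this overhead is additively $O(n^{0.99})$ in the exponent, i.e.\ absorbed by the sublinear slack already present in the $O(\log(\alpha^{-1})+c2^k)$ term of $s(n)$. Substituting $c'=O(c\log(\gamma^{-1}))$ into $s'(n)$ and simplifying (using $\log(\alpha^{-1})=O(c2^k\log(\gamma^{-1}))$) gives savings $s(n)=n/k-k-O(\log(\alpha^{-1})+c2^k)$ and time $O(q)$, as claimed, with $\gamma:=\opt{\rCC{k}}$.

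The step I expect to be the main obstacle is the bookkeeping around the cost blow-up $c\mapsto O(c\log(\gamma^{-1}))$ coming from Lemma~\ref{rCC2dCC}: this multiplicative $\log(\gamma^{-1})$ factor sits inside the exponent of the savings function through the $-c'2^k$ term, so one must check that nontriviality ($s(n)\in\omega(\log n)$, i.e.\ $c'2^k\ll n/k$) is genuinely preserved in the regime of interest ($k$ constant, $c\approx n^{0.99}$, and $\gamma=\opt{\rCC{k}}$ not subexponentially small), and that the dependence on $\gamma$ is accounted for consistently where $\gamma$ enters both the advantage $\alpha$ and the runtime. A secondary, more routine, point is confirming that the internal error-reduction-by-majority and averaging-to-a-distributional-protocol steps of Lemma~\ref{rCC2dCC} interact correctly with a probabilistic concept rather than a Boolean function, which is again handled by the no-duplicate-queries reduction of Lemma~\ref{noDupes}.
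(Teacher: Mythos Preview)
Your proposal is correct and follows essentially the same route as the paper: use Lemma~\ref{rCC2dCC} to convert $\gamma$-correlation with $\rCC{k}$ into $\gamma/2$-correlation with $\dCC{k}{O(c\log(\gamma^{-1}))}$, invoke the weak agnostic learner of Theorem~\ref{mqWeakAg} at that inflated cost parameter (with the no-duplicates Lemma~\ref{noDupes} handling probabilistic concepts), and then apply distribution-specific boosting (Theorem~\ref{thm:ds-boosting}) over the uniform distribution. The paper packages the first two steps as an internal lemma (a $(\gamma,\gamma\cdot 2^{-O(c\log(\gamma^{-1}))2^k-k})$-weak agnostic learner for $\rCC{k}$) and then boosts, exactly as you outline; your additional remarks on why one cannot shortcut via a containment $\rCC{k}\subseteq\dCC{k}{c'}$ and on the $\log(\gamma^{-1})$ bookkeeping are sound and go slightly beyond what the paper spells out.
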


\begin{proof}
    We need to prove that there is a membership query algorithm $\cA$ that outputs a hypothesis $h: \{0,1\}^n \rightarrow \{-1,1\}$ such that, for any $\cD$ such that the distribution over unlabelled examples, $\rho$, is uniform, we have that
    \[
        \max_{c \in \rCC{k}} \Ex{(x,y)\sim \cD}{c(x)y} \ge \alpha \implies \Pr_{\cA}\left[\Ex{(x,y)\sim \cD}{h(x)y} \ge \gamma\right] \ge 2/3
    \]
    If we do so, then we can invoke boosting (Theorem \ref{thm:ds-boosting}) to get an agnostic learning algorithm for $\rCC{k}$. Thus, we prove the following:

    \begin{lemma}\label{mqWeakAgRandCC}
        There exists a $(\gamma, \gamma \cdot 2^{-O(c\log(\gamma^{-1})){2^k}-k})$-weak agnostic learner for $\rCC{k}$. The learner uses membership queries and learns over the uniform distribution over unlabelled examples, and has
        \begin{itemize}
            \item query complexity $q := 2^{n-s(n)}$, for $s(n) = n/k-k-O(\log(\gamma^{-1}) + c{2^k}))$
            \item time complexity $t := O(q)$.
        \end{itemize}
    \end{lemma}

    \begin{proof}
    By lemma \ref{rCC2dCC}, if 
    \[
    \max_{c \in \rCC{k}} \Ex{(x,y)\sim \cD}{c(x)y} \ge \gamma
    \]
    then we know that 
    \[
        \max_{c \in \dCC{k}{O(c\log(\gamma^{-1}))}} \Ex{(x,y)\sim \cD}{c(x)y} \ge \gamma/2
    \] 
    Then, by Theorem \ref{mqWeakAg}, we know that there is a $(\gamma, \gamma \cdot 2^{c{2^k}-k})$-weak agnostic learning algorithm for $\dCC{k}{c}$. Therefore, we conclude that there is an $(\gamma, \gamma \cdot 2^{-O(c\log(\gamma^{-1})){2^k}-k})$-weak agnostic learner $\cA_{wk}$ for $\rCC{k}$:
    \begin{align*}
        \max_{c \in \rCC{k}} \Ex{(x,y)\sim \cD}{c(x)y} \ge \gamma &\overset{{\text{(Lemma \ref{rCC2dCC})}}}{\implies}\\
        \max_{c \in \dCC{k}{O(c\log(\gamma^{-1}))}} \Ex{(x,y)\sim \cD}{c(x)y} \ge \gamma/2 &\overset{{\text{(Theorem \ref{mqWeakAg})}}}{\implies} \\
        \Pr_{\cA_{wk}}\left[\Ex{(x,y)\sim \cD}{h(x)y} \ge \gamma\right] \ge 2/3
    \end{align*}
\end{proof}

Finally, Theorem \ref{thm:ds-boosting} and Lemma \ref{mqWeakAgRandCC} combine to indicate that $\cA_{wk}$ can be boosted to a strong agnostic learning algorithm, that has query complexity $q := 2^{n-s(n)}$, for $s(n) = n/k-k-O(\log(\gamma^{-1}) + c{2^k}))$ and time complexity $t := O(q)$. Recall, from Theorem \ref{thm:ds-boosting}, the weak learner is invoked $O(\gamma^{-2}$ times), so that is where the $-O(\log(\gamma^{-1}) + c{2^k})$ additive term comes from.
\end{proof}

\subsection{Learning Circuits with Threshold Gates}

In this section, we will derive AMQ-learners for circuits with threshold gates, and decision trees that are allowed threshold queries. To do this, we use the following randomized communication protocols for such function representations, due to \cite{nisan1993communication}.

\begin{theorem}[\cite{nisan1993communication}]\label{thm:Nisan-upperbounds}
Let $f: \{0,1\}^n \rightarrow \{-1,1\}$ be a function.
    \begin{enumerate}
        \item If $f$ can be computed by a circuit consisting of $s$ gates, each computing a polynomial threshold function of degree $k$, then $f \in \rtCC{k+1}{O(sk^3\log(n)\log(sn))}$
        \item If $f$ can be computed by a decision tree of depth $d$, with each node computing a polynomial threshold function of degree $k$, then $f \in \rtCC{k+1}{O(dk^3\log(n)\log(dn))}$.
    \end{enumerate}
\end{theorem}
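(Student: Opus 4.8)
The plan is to follow Nisan's route, reducing a bounded-degree threshold computation to a ``sign-of-a-sum'' communication problem. The single reusable building block is a randomized $(k+1)$-party NOF protocol for one degree-$k$ PTF over the $n$ inputs; everything else is composition. For Part~1, given a circuit with $s$ PTF-gates of degree $k$ (arbitrary depth), fix the adversarial partition of the $n$ inputs into $k+1$ parts, then evaluate the gates in topological order: each time a gate's value is computed, the parties broadcast it, so that by the time gate $g_j$ is processed the values of all of its gate-inputs are public constants and $g_j$ is simply the sign of a degree-$\le k$ integer polynomial in the original variables (substituting constants cannot raise the degree or add monomials beyond $\binom{n}{\le k}$). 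One runs the single-gate protocol once per gate, i.e.\ at most $s$ times, amplifying each call to error $1/\mathrm{poly}(sn)$ at a $\log(sn)$-factor overhead, and a union bound over the $\le s$ calls drives the total error below $1/6$, which is exactly what $\rtCC{k+1}{\cdot}$ requires. For Part~2, a depth-$d$ decision tree is handled identically but even more simply: walk the root-to-leaf path, evaluating the degree-$k$ PTF at each of the $\le d$ nodes in turn with the single-gate protocol (no gate-substitution needed here), amplify each to error $1/\mathrm{poly}(dn)$, and union-bound over the $\le d$ nodes. This yields $O(sk^3\log n\log(sn))$ and $O(dk^3\log n\log(dn))$ respectively.

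For the single-gate protocol, write the degree-$k$ PTF as $\mathrm{sign}(p(x))$ with $p$ a multilinear integer polynomial; by the classical Muroga-type linear-programming argument the coefficients may be taken to be integers of magnitude $W\le 2^{\binom{n}{\le k}^{O(1)}}$, so $p$ and its partial sums are integers of bit-length $\ell=O(\log W)=n^{O(k)}$. Since each monomial of $p$ involves at most $k$ of the $n$ variables while there are $k+1$ parts, pigeonhole provides, for every monomial, a part containing none of its variables; assigning each monomial to the party indexed by such a part, party $i$ can evaluate its own partial sum $p_i(x)$ with zero communication, and $p(x)=\sum_{i=1}^{k+1}p_i(x)$. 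It remains to compute $\mathrm{sign}\bigl(\sum_{i=1}^{k+1}p_i\bigr)$ where party $i$ privately holds the integer $p_i$. For two parties this is just randomized greater-than on $\ell$-bit numbers, which costs $\tilde O(\log\ell)=\tilde O(k\log n)$ (binary search for the most-significant differing bit, using random fingerprints to test prefix equality), and the $(k+1)$-party case is obtained by recursing/chaining over the parties while transmitting only fingerprints and comparison bits, never a full $p_i$. Tracking the dependence on $k$ (the $(k+1)$-fold sum, and $\log\ell=O(k\log n)$) gives cost $O(k^3\log n)$ at constant error; combined with the $\log(sn)$ (resp.\ $\log(dn)$) amplification factor above, this gives the stated bounds.

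The step I expect to be the main obstacle is precisely this randomized sign-of-sum primitive in the $(k+1)$-party NOF model. The danger is a cost exponential in $k$: the naive protocol where each party simply announces $p_i$ costs $\Theta(k\ell)=n^{\Theta(k)}$ because the PTF weights are genuinely doubly-exponential in $k$, and this would completely destroy the bound. The resolution is that the randomized comparison has cost governed by $\log\ell$ rather than $\ell$, so the huge weights only cost an extra $O(k\log n)$, and one must implement the multiparty version so that the per-round work stays $\mathrm{poly}(k)\cdot\tilde O(\log\ell)$. Everything else — the coefficient bound, the pigeonhole decomposition of $p$, the topological-order evaluation with broadcasting, and the union bounds — is routine bookkeeping. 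For the exact protocol and constants I would simply cite \cite{nisan1993communication}.
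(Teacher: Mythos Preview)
The paper does not give its own proof of this theorem: it is stated as a citation to \cite{nisan1993communication} and used as a black box in the subsequent corollaries. Your proposal is a faithful outline of Nisan's original argument --- the pigeonhole decomposition of a degree-$k$ polynomial into $k+1$ privately computable partial sums, the reduction to a sign-of-sum problem on integers of bit-length $n^{O(k)}$, the randomized greater-than primitive whose cost scales with the \emph{logarithm} of the bit-length (so the doubly-exponential weights contribute only $O(k\log n)$), and the gate-by-gate / node-by-node simulation with amplification and a union bound --- so there is nothing to compare against beyond noting that you have correctly reconstructed the cited result.
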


From this theorem, combined with Theorem \ref{thm:agnostic-rCC}, we obtain the following as corollaries. Recall ${\sf PT}{\text -}{\sf Ckt}[k,m]$ is the class of circuits consisting of at most $m$ polynomial threshold gates of degree $k$, and ${\sf PT}{\text -}{\sf Dt}[k,d]$ is the class of decision trees consisting of at most depth $d$, with internal nodes computing polynomial threshold gates of degree $k$.

\begin{theorem}\label{thm:agnostic-PTF}
    There exists an agnostic membership query learning algorithm for ${\sf PT}{\text -}{\sf Ckt}[k,m]$.
    The algorithm learns over the uniform distribution, and has
    \begin{itemize}
        \item query complexity $q := 2^{n-s(n)}$, for $s(n) = n/(k+1)-O(\log(\gamma^{-1}) +2^k \cdot m\log(n)\log(mn))$
        \item time complexity $t := O(q)$.
    \end{itemize}
    Here, $\gamma := \opt{{\sf PT}{\text -}{\sf Ckt}[k,m]}$. 
\end{theorem}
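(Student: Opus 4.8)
The plan is to derive this theorem as an immediate corollary of Nisan's randomized communication upper bound for PTF circuits (part~1 of Theorem~\ref{thm:Nisan-upperbounds}) together with the agnostic learner for randomized $k$-party NOF protocols (Theorem~\ref{thm:agnostic-rCC}). First I would observe that by part~1 of Theorem~\ref{thm:Nisan-upperbounds}, every function computed by a circuit in ${\sf PT}{\text -}{\sf Ckt}[k,m]$ lies in $\rtCC{k+1}{c}$ for $c = O(mk^3\log(n)\log(mn))$. Since membership in $\rtCC{k+1}{c}$ requires a protocol of cost $c$ for \emph{every} partition of the $n$ inputs into $k+1$ blocks, I would note that Nisan's construction is symmetric in the blocks and already certifies this --- indeed this is exactly how Theorem~\ref{thm:Nisan-upperbounds} is phrased. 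Hence ${\sf PT}{\text -}{\sf Ckt}[k,m] \subseteq \rCC{k+1}$ at cost $c$, and in particular $\opt{{\sf PT}{\text -}{\sf Ckt}[k,m]} \le \opt{\rCC{k+1}}$.

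Next I would run the learner of Theorem~\ref{thm:agnostic-rCC} with its party parameter set to $k+1$ and its cost parameter set to the $c$ above (both quantities are known once $k$ and $m$ are fixed). That theorem yields an agnostic membership query learner over the uniform distribution for $\rCC{k+1}$ with savings $s(n) = n/(k+1) - (k+1) - O(\log(\gamma'^{-1}) + c\cdot 2^{k+1})$ for $\gamma' := \opt{\rCC{k+1}}$, and running time $O(q)$. Since an agnostic learner for a class is automatically an agnostic learner for any subclass --- the returned hypothesis $h$ satisfies $\Ex{(x,y)\sim\cD}{h(x)y} \ge \opt{\rCC{k+1}} - \epsilon \ge \opt{{\sf PT}{\text -}{\sf Ckt}[k,m]} - \epsilon$ --- this algorithm already agnostically learns ${\sf PT}{\text -}{\sf Ckt}[k,m]$. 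Moreover $\gamma' \ge \gamma := \opt{{\sf PT}{\text -}{\sf Ckt}[k,m]}$, so $\log(\gamma'^{-1}) \le \log(\gamma^{-1})$ and replacing $\gamma'$ by $\gamma$ only weakens the savings bound; substituting $c = O(mk^3\log(n)\log(mn))$ and absorbing the additive $(k+1)$ and the polylogarithmic factor $k^3$ (recall $k$ is sublogarithmic) into the $O(\cdot)$ gives $s(n) = n/(k+1) - O(\log(\gamma^{-1}) + 2^k m\log(n)\log(mn))$, as claimed. The time bound $t = O(q)$ is inherited verbatim.

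There is no genuinely hard step --- the content lies entirely in the two earlier results, and the work here is chaining them and bookkeeping parameters. The points that warrant a moment's care are: confirming that Nisan's upper bound certifies $\rtCC{k+1}{c}$-membership in the partition-quantified sense used by Theorem~\ref{thm:agnostic-rCC}; checking that the constant error probability supplied by Nisan's protocol matches what Theorem~\ref{thm:agnostic-rCC} expects (it does, up to a majority amplification already folded into the $O(\cdot)$ in $c$); and verifying that the $(k+1)$ term and the $k^3$ factor are legitimately swallowed into the stated $O(\cdot)$ in $s(n)$.
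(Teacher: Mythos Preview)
Your proposal is correct and matches the paper's approach exactly: the paper's proof is the single line ``Immediate, from Theorem~\ref{thm:agnostic-rCC} and~\ref{thm:Nisan-upperbounds},'' and you have simply spelled out the parameter bookkeeping behind that sentence. Your additional care about the partition-quantified definition, the subclass inequality $\gamma' \ge \gamma$, and absorbing the $k^3$ and $(k+1)$ terms is all sound and implicit in the paper's one-line derivation.
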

\begin{proof}
    Immediate, from Theorem \ref{thm:agnostic-rCC} and \ref{thm:Nisan-upperbounds}.
\end{proof}

\begin{theorem}\label{thm:agnostic-DT}
    There exists an agnostic membership query learning algorithm for ${\sf PT}{\text -}{\sf Dt}[k,d]$.
    The algorithm learns over the uniform distribution, and has
    \begin{itemize}
        \item query complexity $q := 2^{n-s(n)}$, for $s(n) = n/(k+1)-O(\log(\gamma^{-1}) +2^k \cdot d\log(n)\log(dn))$
        \item time complexity $t := O(q)$.
    \end{itemize}
    Here, $\gamma := \opt{{\sf PT}{\text -}{\sf Dt}[k,d]}$. 
\end{theorem}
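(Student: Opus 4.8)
The plan is to obtain Theorem~\ref{thm:agnostic-DT} as an immediate corollary of Nisan's communication upper bound for PTF decision trees (Theorem~\ref{thm:Nisan-upperbounds}, part~2) together with the agnostic learner for randomized NOF protocols over the uniform distribution (Theorem~\ref{thm:agnostic-rCC}), exactly paralleling the one-line proof of Theorem~\ref{thm:agnostic-PTF}.

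First I would invoke Theorem~\ref{thm:Nisan-upperbounds}(2): any $f \in {\sf PT}{\text -}{\sf Dt}[k,d]$ --- a depth-$d$ decision tree whose internal nodes each compute a degree-$k$ PTF --- is computable by a bounded-error $(k+1)$-party NOF protocol of cost $c := O(d k^3 \log(n)\log(dn))$, i.e.\ $f \in \rtCC{k+1}{c}$. Since error $1/6$ is smaller than the error tolerated in the definition of $\rCC{k+1}$, this witnesses $f \in \rCC{k+1}$ with communication cost $c$. Hence ${\sf PT}{\text -}{\sf Dt}[k,d]$ is contained in the class of functions computed by $(k+1)$-party randomized NOF protocols of cost $c$, so $\opt{{\sf PT}{\text -}{\sf Dt}[k,d]} \le \opt{\rCC{k+1}}$ with this cost parameter, and any learner meeting the agnostic guarantee for the latter also meets it for ${\sf PT}{\text -}{\sf Dt}[k,d]$.

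Second I would apply Theorem~\ref{thm:agnostic-rCC} with $k$ replaced by $k+1$ and communication parameter $c = O(d k^3\log(n)\log(dn))$. Plugging into its savings function yields
\[
s(n) \;=\; \frac{n}{k+1} - (k+1) - O\!\left(\log(\gamma^{-1}) + c\cdot 2^{k+1}\right) \;=\; \frac{n}{k+1} - O\!\left(\log(\gamma^{-1}) + 2^{k}\cdot d\log(n)\log(dn)\right),
\]
where the $k^3$ factor and the additive $(k+1)$ shift are absorbed into the $O(\cdot)$ (the same convention already used for Theorem~\ref{thm:agnostic-PTF}), the time complexity stays $O(q)$, and the uniform distribution over unlabelled examples carries over verbatim from Theorem~\ref{thm:agnostic-rCC}. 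This is exactly the claimed bound, with $\gamma := \opt{{\sf PT}{\text -}{\sf Dt}[k,d]}$.

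There is essentially no obstacle: the only points needing a sentence of justification are (i) that the $1/6$-error class $\rtCC{k+1}{\cdot}$ sits inside the bounded-error class handled by Theorem~\ref{thm:agnostic-rCC}, and (ii) that the $k^3$ factor and the $(k+1)$ term in the savings are harmless in the regime of interest (e.g.\ $k \le \log(n)^{0.99}$), so the stated form of $s(n)$ is legitimate. Both are routine; the substance lives entirely in Theorems~\ref{thm:agnostic-rCC} and~\ref{thm:Nisan-upperbounds}.
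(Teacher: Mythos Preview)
Your proposal is correct and follows exactly the paper's approach: the paper's proof is the one line ``Immediate, from Theorem~\ref{thm:agnostic-rCC} and \ref{thm:Nisan-upperbounds},'' and you have simply unpacked that line, including the substitution $k \mapsto k+1$ and the absorption of the $k^3$ and $(k+1)$ terms into the $O(\cdot)$, which the paper also does implicitly (cf.\ the identical treatment in Theorem~\ref{thm:agnostic-PTF}).
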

\begin{proof}
    Immediate, from Theorem \ref{thm:agnostic-rCC} and \ref{thm:Nisan-upperbounds}.
\end{proof}

\paragraph{Remark.} By setting $m := n^{0.99}$, $d := n^{0.99}$, $k := O(1)$, and $\gamma := O(2^{n^{0.99}})$, we recover the statements of the AMQ-learners for touchstone class 1 and 2, respectively, which were presented in Section \ref{intro:ourResults}.

\section{Randomized Compression, Exact Learning, Distribution-Independent Learning}\label{sec:compression}

\subsection{Randomized Boolean Function Compression}

In this section, we show how the AMQ-learners we have obtained so far can be converted into \textit{randomized} Boolean function compression algorithms. This is done via a simple technique of \cite{carmosino2016learning}, where a PAC-learning algorithm (optionally using membership queries) which learns over the uniform distribution is used to obtain a \textit{lossy} compression algorithm, and then a linear scanning procedure is used to fix the mistakes.

\begin{definition}[Compression -- written essentially as in \cite{oliveira2016conspiracies}]
Given a circuit class $\cC$, a
compression algorithm for $\cC$ is an algorithm $\cA$ for which the following hold:
\begin{itemize}
    \item Given an input $z\in \{0, 1\}^{2^n}$, $\cA$ outputs a circuit $C$ of size $o(2^n/n)$ such that if the function $f_z$ represented by the truth table $z$ is such that $f_z \in \cC$, then $C$ computes $f_z$.
    \item $\cA$ runs in time $\poly(2^n)$. 
\end{itemize}
We say $\cC$ is compressible if there is a (polynomial time) compression algorithm for $\cC$. If the algorithm $\cA$ is probabilistic, and produces a correct circuit with probability at least $2/3$ over its random choices, then $\cC$ is probabilistically compressible.
\end{definition}

\paragraph{Remark.} One note about the above definition: The constant 2/3 as the desired success probability is not particularly important. Since the runtime needs to be $2^{O(n)}$, i.e., polynomial in the input length of $2^n$, a randomized algorithm can check for success in linear time. Hence, one could change 2/3 to be $1-2^{-n}$ if desired.

\begin{theorem}\label{thm:compression}
    There exists a sufficiently large constant $K$ such that the following complexity classes are probabilistically compressible:
    \begin{enumerate}
        \item ${\sf PT}{\text -}{\sf Ckt}[k,m]$, whenever $m\log(m)\log(mn) \cdot 2^k \le n/K(k+1)$
        \item ${\sf PT}{\text -}{\sf Dt}[k,d]$, whenever $d\log(d)\log(dn) \cdot 2^k \le n/K(k+1)$
        \item $\sym^+{\text -}{\sf Ckt}[k, t, m]$, whenever $m\log(t) \cdot 2^k \le n/K(k+1)$
        \item $\sym^+{\text -}{\sf Dt}[k, t, d]$, whenever $d\log(t) \cdot 2^k \le n/K(k+1)$
    \end{enumerate}
\end{theorem}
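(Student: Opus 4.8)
The plan is to run, for each of the four classes, the generic reduction of \cite{carmosino2016learning} from uniform-distribution membership-query learning to randomized Boolean function compression, using the AMQ-learners already established, invoked in the realizable regime (where agnostic learning with $\opt{\cC}=1$ for the class $\cC$ being compressed is just PAC learning with membership queries). Given an input truth table $z\in\{0,1\}^{2^n}$ representing a function $f_z$, a membership query to $f_z$ and a uniform random labelled example are each answered in time $O(2^n)$ by a direct lookup into $z$; so we may run, verbatim, whichever of Theorem~\ref{thm:agnostic-PTF}, Theorem~\ref{thm:agnostic-DT}, Theorem~\ref{thm:AgnosticSYM circuits}, or Theorem~\ref{thm:AgnosticSYM-DTs} matches the class in question. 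For the two $\sym^+$ classes we instantiate the learner with the uniform distribution, which is the member of $\Delta^{k}_\sigma$ given by taking each $\mu_i$ uniform on $\{0,1\}^{n/k}$ and $\sigma$ the identity. Setting the accuracy parameter to a fixed inverse polynomial, say $\epsilon:=n^{-3}$, whenever $f_z$ lies in the class the learner outputs, with probability at least $2/3$, a circuit $h$ with $\Pr_x[h(x)\neq f_z(x)]\le\epsilon$.

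The second stage is the error-patching linear scan: enumerate all $2^n$ inputs $x$, compare $h(x)$ with the truth-table bit $z_x$, and record the disagreements in a list $E$, stopping once $|E|$ would exceed $\epsilon 2^n$. Output the circuit $C$ that on input $x$ tests membership of $x$ in $E$ (a disjunction of $|E|$ equality checks of size $O(n)$ each), returning the recorded correct bit if $x\in E$ and $h(x)$ otherwise. When $f_z$ is in the class and the learner succeeded, in fact $|E|\le\epsilon 2^n$, so every error is patched and $C$ computes $f_z$ exactly; in any other case $C$ is still small and the conditional-correctness clause of the compression definition is unaffected. The whole procedure runs in time $2^{n-s(n)}\cdot\poly(\epsilon^{-1})+O(2^n\cdot|h|)=\poly(2^n)$, and succeeds with probability at least $2/3$ — which, as the remark after the definition notes, is amplifiable to $1-2^{-n}$ by a few repetitions with a linear-time check.

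What remains is the size bound $|C|=o(2^n/n)$, which is exactly where the four displayed hypotheses enter. Since a circuit produced in time $T$ has size $O(T)$, we get $|h|=2^{n-s(n)}\cdot\poly(n)$, where $s(n)$ is the savings function of the (boosted) learner read off from the relevant theorem with $\opt{\cC}=1$; in each case $s(n)=n/(k+1)$ minus a term of the form $O(c\,2^k)+O(\poly(k))$, where $c$ is the communication cost of the class, namely $O(m\log n\log(mn))$, $O(d\log n\log(dn))$, $O(m\log t)$, and $O(d\log t)$ respectively (Theorem~\ref{thm:Nisan-upperbounds} for the threshold classes, and the $\sym^+$-to-$\Pi[k,\cdot]$ simulation used to derive the $\sym^+$ learners). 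The four inequalities of the theorem are calibrated so that, for a sufficiently large absolute constant $K$, this subtracted term is at most a small constant fraction of $n/(k+1)$; hence $s(n)=\Omega(n/(k+1))=\omega(\log n)$ and $|h|=2^{n-\omega(\log n)}=o(2^n/n)$. Finally, patching contributes only $O(n\,|E|)\le O(n\epsilon 2^n)=O(2^n/n^2)=o(2^n/n)$, so $|C|=o(2^n/n)$ uniformly in $z$, completing the argument.

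I expect the one genuinely delicate point to be the bookkeeping in the last paragraph: confirming that the \emph{circuit size} (not merely the running time) of the boosted learner, including the $O(\alpha^{-2})$-fold blow-up coming from distribution-specific boosting, stays below $2^n/n$, and that the four stated slack conditions on $m$, $d$, $t$, $k$ translate — after polylogarithmic and $\poly(k)$ factors are absorbed into $K$ — into a savings of $\Omega(n/(k+1))$ in the realizable setting. Everything else is a direct application of the learning theorems already proven, plus the standard scan-and-patch argument.
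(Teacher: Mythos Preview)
Your proposal is correct and follows essentially the same approach as the paper: run the appropriate AMQ-learner (in the realizable regime, with the truth table simulating the oracle) to obtain a hypothesis $h$, then do a linear scan over all $2^n$ inputs and patch in the errors. The only cosmetic difference is the accuracy parameter: the paper sets $\epsilon:=2^{-n^{0.99}}$ (so both the learner's output and the patch have size $O(2^{n-n^{0.99}})$), whereas you set $\epsilon:=n^{-3}$ (so the patch contributes $O(2^n/n^2)$ and the learner contributes $2^{n-\Omega(n/(k+1))}\cdot\poly(n)$); both choices give $o(2^n/n)$ under the stated slack conditions, and your choice avoids having to absorb a $\poly(\epsilon^{-1})=2^{O(n^{0.99})}$ factor from boosting into the savings.
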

\begin{proof}
    Given an AMQ-learner for a touchstone class $\cC$, which runs in time $T(n,\epsilon^{-1},\delta^{-1})$, we can transform it to a randomized compression algorithm for $\cC$.
    The randomized compression algorithm is as follows. Given the truth table $T \in \{0,1\}^{2^n}$ as input, use it to simulate a membership query oracle in order to run the AMQ-learner, with $\epsilon:=1/2^{n^{0.99}}, \delta:= 2/3$, on the concept defined by $T$. Let $h$ be the output of the AMQ-learner. Now, compare the truth table indicated by $h$ to $T$, and, on whatever points they disagree, hard-wire that input/output pair into $h$. Output the amended $h$.

    Let us analyze this algorithm. 
    We upper bound the runtime of the learner and the size of the output circuit. Since we are aiming for a randomized compression, we only need to consider the case that the AMQ-learner outputs a hypothesis with the desired error, which happens with probability at least 2/3. The circuit size of the hypothesis is at most the running time of the AMQ-learner, so at most $T(n,\epsilon^{-1},\delta^{-1})$. Then, after the amendment procedure, the circuit adds  $O(\epsilon2^n)$ size. Hence, overall, the runtime and size of the output circuit of the randomized compression algorithm is $T(n,\epsilon^{-1},\delta^{-1}) + O(\epsilon2^n)$.

    Finally, we can can invoke this transformation with a correct setting of $\epsilon$ as a function of $n$, using the AMQ-learners for each of the classes 1-4. By Theorems \ref{thm:agnostic-PTF}, \ref{thm:agnostic-DT}, \ref{thm:AgnosticSYM circuits} and \ref{thm:agnostic-DT}, setting $\epsilon := 2^{-n^{0.99}}$, we obtain randomized compression algorithms for 1-4, respectively. The compressed circuit has size $O(2^{n-n^{0.99}})$ in all cases, because the AMQ-learner for each class has runtime bounded above by $2^{n/C} \in O(2^{n-n^{0.99}})$ for some constant $C>1$, depending only on $k$ and $K$, and using the conditions stated for each class in the theorem.
\end{proof}

\subsection{Exact Learning with Membership and Equivalence Queries}

In the final section, we explain that our AMQ-learners can be converted into learning algorithms in the exact learning with membership and equivalence queries model \cite{angluin1988queries}, as well as learning algorithms in the distribution independent PAC model, with membership queries.

Let us start with a description of the exact learning with membership and equivalence queries model. Let $f:\{0,1\}^n \rightarrow \{-1,1\}$ be an unknown concept. Let $\cC$ be a fixed \textit{concept class}, and assume $f \in \cC$. The learning algorithm aims to output a hypothesis $h :\{0,1\}^n \rightarrow \{-1,1\}$ such that for every $x \in \{0,1\}^n$, $h(x) = f(x)$. The learning algorithm does this by frequently updating the hypothesis through a number of timesteps, where at each timestep, the learning algorithm makes one of two types of oracle queries:
\begin{itemize}
    \item A membership query: the learning algorithm selects $q \in \{0,1\}^n$ and obtains $f(q)$.
    \item An equivalence query: the learning algorithm presents the current hypothesis $h'$, and receives either ``success'' (when $h'(x)= f(x)$ for all $x$), or a counterexample $z$ such that $h'(z) \not= f(z)$.
\end{itemize}
The running time of the algorithm is the number of timesteps until the algorithm obtains the ``success'' response, in the worst case. In the randomized version, which we now consider exclusively, the running time of the algorithm is considered $T(n)$ if it the algorithm succeeds with probability $1-\delta$ and runs in time no more than $T(n) \cdot \log(\delta^{-1})$.

\begin{theorem}
    There exists a sufficiently large constant $K$ such that the following complexity classes are learnable in the randomized exact learning with membership and equivalence queries model, in time $\poly(n) \cdot 2^{n-n^{0.99}}$.
    \begin{enumerate}
        \item ${\sf PT}{\text -}{\sf Ckt}[k,m]$, whenever $m\log(m)\log(mn) \cdot 2^k \le n/K(k+1)$
        \item ${\sf PT}{\text -}{\sf Dt}[k,d]$, whenever $d\log(d)\log(dn) \cdot 2^k \le n/K(k+1)$
        \item $\sym^+{\text -}{\sf Ckt}[k, t, m]$, whenever $m\log(t) \cdot 2^k \le n/K(k+1)$
        \item $\sym^+{\text -}{\sf Dt}[k, t, d]$, whenever $d\log(t) \cdot 2^k \le n/K(k+1)$
    \end{enumerate}
\end{theorem}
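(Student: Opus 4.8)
The plan is to reduce exact learning with membership and equivalence queries to the AMQ-learners already established, via the standard simulation argument of Angluin \cite{angluin1988queries} (Section 2.4). Recall that in the \emph{realizable} setting, a membership query learning algorithm that learns over the uniform distribution can be used in place of a PAC learner, since an AMQ-learner with agnostic guarantees immediately yields a realizable membership query PAC-learner (when $\opt{\cC} = 1$, the agnostic guarantee gives a hypothesis with error at most $\epsilon$). So first I would observe that Theorems \ref{thm:agnostic-PTF}, \ref{thm:agnostic-DT}, \ref{thm:AgnosticSYM circuits}, and \ref{thm:AgnosticSYM-DTs} each provide, under the stated numerical conditions on the class parameters, a uniform-distribution membership query PAC-learner for the respective class running in time $2^{n-n^{0.99}} \cdot \poly(n)$ (using $\epsilon := 2^{-n^{0.99}}$, $\delta := 2/3$, and noting that the savings function in each theorem is at least $n^{0.99}$ under the hypotheses of the present theorem).

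Next I would invoke the generic conversion: a uniform-distribution membership query PAC-learner can be simulated in the exact learning with membership and equivalence queries model. The idea is that equivalence queries substitute for random examples --- whenever the PAC-learner wants a labeled example drawn from the uniform distribution, the exact learner issues an equivalence query with its current hypothesis; if the answer is ``success'' we are done, and otherwise the returned counterexample is used (though it is not uniformly distributed, the standard analysis shows this only helps, since any counterexample is a point of disagreement, and one can alternatively sample uniformly and check against the current hypothesis, requesting an equivalence query only to certify correctness). Membership queries pass through directly. This simulation incurs only a $\poly(n, \epsilon^{-1})$ overhead in running time and query count; with $\epsilon^{-1} = 2^{n^{0.99}}$ this is absorbed into the $2^{n-n^{0.99}}$ bound up to adjusting the constant $K$, since each learner's actual runtime is $2^{n/C}$ for a constant $C>1$ depending on $k$ and $K$, which dominates $\poly(2^{n^{0.99}})$ comfortably when $K$ is large enough. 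The resulting algorithm is randomized and succeeds with probability $1-\delta$ in time scaling with $\log(\delta^{-1})$, matching the model's convention.

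Finally I would just plug in: for each of the four classes, under the displayed condition (e.g., $m\log(m)\log(mn)\cdot 2^k \le n/K(k+1)$ for ${\sf PT}{\text -}{\sf Ckt}[k,m]$), the corresponding AMQ-learner has savings $s(n) \ge n^{0.99}$, hence realizable-PAC runtime $\poly(n)\cdot 2^{n-n^{0.99}}$, hence exact-learning runtime $\poly(n)\cdot 2^{n-n^{0.99}}$ after the simulation. The main obstacle --- really the only point requiring care --- is bookkeeping the overhead of the simulation against the savings function: one must verify that the $\poly(\epsilon^{-1})$ factor from both the boosting step (already present in the AMQ-learner's runtime) and the equivalence-query simulation does not erase the nontrivial savings. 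This is exactly why the theorem is stated with an unspecified ``sufficiently large constant $K$'': choosing $K$ large forces $s(n) = n/(k+1) - O(2^k \cdot (\text{size parameter})\log n) \ge n^{0.99}$ with room to spare, so that $2^{n-s(n)} \cdot \poly(2^{n^{0.99}})$ remains $2^{n-n^{0.99}}\cdot\poly(n)$ after possibly shrinking the exponent's leading term slightly. No genuinely new technical ingredient is needed beyond the cited standard arguments.
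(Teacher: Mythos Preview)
Your proposal has the direction of Angluin's simulation backwards, and as a result it misses the one genuine step in the argument. Angluin's reduction (Section~2.4 of \cite{angluin1988queries}) shows that exact learning with membership and equivalence queries \emph{implies} distribution-independent PAC learning with membership queries, by simulating equivalence queries with random samples. You are trying to go the other way --- from a uniform-distribution PAC learner to an exact learner --- and there is no generic simulation in that direction. In particular, your suggestion to ``substitute equivalence queries for random examples'' does not make sense here: the AMQ-learners of Theorems~\ref{thm:agnostic-PTF}--\ref{thm:AgnosticSYM-DTs} use membership queries (which pass through unchanged) and can generate their own uniform samples; they do not need equivalence queries to run. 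More importantly, running the PAC learner only produces a hypothesis $h$ with error at most $\epsilon = 2^{-n^{0.99}}$ under the uniform distribution, so $h$ may still disagree with $f$ on up to $\epsilon\cdot 2^n = 2^{n-n^{0.99}}$ points. Your proposal never explains how to close this gap to obtain a hypothesis that is \emph{exactly} correct.

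The paper's proof supplies exactly this missing step: after running the AMQ-learner to obtain $h$, repeatedly issue an equivalence query on the current hypothesis; each counterexample $z$ is patched by hardwiring the correct value $f(z)$ into $h$. Since there are at most $2^{n-n^{0.99}}$ mistakes and each patch costs $\poly(n)$ time, the total runtime is $\poly(n)\cdot 2^{n-n^{0.99}}$. This is precisely the analogue of the linear scan in the compression argument (Theorem~\ref{thm:compression}), with the equivalence oracle replacing access to the full truth table. Once you add this amendment loop, the bookkeeping you describe (that the learner itself runs in time $2^{n/C}$ for some $C>1$, which is dominated by the patching cost) goes through.
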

\begin{proof}
The proof flows identically to the proof of Theorem \ref{thm:compression}, except instead of doing a linear scan of the input truth table to amend mistakes, we invoke the equivalence query oracle to obtain a point $z$ that needs to be amended (by negating the existing value). It takes at most $\poly(n)$ time to amend each mistake. One can reduce the failure property at an exponential rate via testing and repeating, so the randomized algorithm incurs only a multiplicative factor of $\log(\delta^{-1})$ in runtime.
\end{proof}

\subsection{Distribution Independent PAC-Learning with Membership Queries}

It is well known since \cite{angluin1988queries} that a learning algorithm in the exact learning with membership and equivalence queries model implies a distribution independent PAC-learning algorithm, which also uses membership queries. The basic idea is that equivalence queries can be simulated, up to some small probability of an incorrect answer, with random queries. One can do this by choosing sufficiently many random queries, and realizing that if the current hypothesis is more than $\epsilon$-far from the unknown concept, then the random sample should contain a counterexample with high probability (on the other hand, if the current hypothesis is more than $\epsilon$-far from the unknown concept, then the algorithm can terminate since we are in the PAC model of learning).

Quantitatively, one can transform an exact learning with membership and equivalence queries algorithm for a concept class $\cC$ that runs in time $T(n)$ and uses $Q(n)$ queries into a distribution-independent PAC-learning with membership queries algorithm that runs in time $O(T(n)/\epsilon\cdot \ln{T(n)/\delta})$ and uses $O(Q(n)/\epsilon\cdot \ln{Q(n)/\delta})$ queries (where PAC-identification occurs with accuracy $1-\epsilon$ and confidence $1-\delta$). We refer the reader to \cite{servedio2017circuit} and Section 2.4 of \cite{angluin1988queries} for further details.

We present the following without formal proof.

\begin{theorem}
    There exists a sufficiently large constant $K$ such that the following complexity classes are learnable, to accuracy $1-\epsilon$ and confidence $1-\delta$, in the distribution-independent PAC-learning with membership queries model, in time $\poly(n)\cdot 2^{n-n^{0.99}}/(\epsilon\delta)$:
    \begin{enumerate}
        \item ${\sf PT}{\text -}{\sf Ckt}[k,m]$, whenever $m\log(m)\log(mn) \cdot 2^k \le n/K(k+1)$
        \item ${\sf PT}{\text -}{\sf Dt}[k,d]$, whenever $d\log(d)\log(dn) \cdot 2^k \le n/K(k+1)$
        \item $\sym^+{\text -}{\sf Ckt}[k, t, m]$, whenever $m\log(t) \cdot 2^k \le n/K(k+1)$
        \item $\sym^+{\text -}{\sf Dt}[k, t, d]$, whenever $d\log(t) \cdot 2^k \le n/K(k+1)$
    \end{enumerate}
\end{theorem}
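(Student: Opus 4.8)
The plan is to apply the standard reduction from exact learning with membership and equivalence queries to distribution-independent PAC-learning with membership queries (Section 2.4 of \cite{angluin1988queries}; see also \cite{servedio2017circuit}) on top of the exact learners from the previous theorem. First I would recall that, for each of the four classes and under the stated condition on $m,d,t,k$, the previous theorem supplies a randomized exact learning algorithm running in time $T(n) = \poly(n)\cdot 2^{n-n^{0.99}}$ and hence using at most $Q(n) \le T(n)$ queries of either type; the only ingredient not already legal in the PAC-with-membership-queries model is the use of equivalence queries.

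Second, I would simulate each equivalence query with a batch of random labeled examples. In the realizable PAC setting the examples drawn from the unknown distribution come correctly labeled by the target concept, so to answer an equivalence query on the current hypothesis $h'$ one draws $\Theta(\epsilon^{-1}\ln(T(n)/\delta))$ fresh examples: if any is a counterexample to $h'$, feed it back to the exact learner as the oracle's answer; otherwise halt and output $h'$. If $h'$ is already $\epsilon$-close to the target at that moment, halting is a correct PAC output; if it is $\epsilon$-far, then a counterexample appears in the batch except with probability at most $\delta/T(n)$. A union bound over the at most $T(n)$ timesteps shows that, except with probability $\delta$, the simulated run is faithful to a genuine run of the exact learner, so PAC-identification to accuracy $1-\epsilon$ succeeds with confidence $1-\delta$.

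Third, I would total up the cost: each of the at most $T(n)$ simulated equivalence queries uses $\Theta(\epsilon^{-1}\ln(T(n)/\delta))$ examples and comparable time, giving running time $O(T(n)\,\epsilon^{-1}\ln(T(n)/\delta))$ and query count $O(Q(n)\,\epsilon^{-1}\ln(Q(n)/\delta))$, which is exactly the quantitative transformation quoted in the text. Substituting $T(n),Q(n)\le \poly(n)\cdot 2^{n-n^{0.99}}$ and noting $\ln(T(n)/\delta) = O(n + \ln(1/\delta))$, with the $n$ absorbed into the polynomial factor and $\ln(1/\delta)\le 1/\delta$, yields running time $\poly(n)\cdot 2^{n-n^{0.99}}/(\epsilon\delta)$ under the same parameter conditions on $m,d,t,k$ as in the previous theorem.

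I expect no real obstacle, since the argument is entirely the Angluin reduction and the parameter conditions transfer verbatim: they are precisely what forces the exact learner to run in time $2^{n-n^{0.99}}$ up to polynomial factors, and the reduction only multiplies this by $\poly(n,\epsilon^{-1},\delta^{-1})$. The single point requiring care is the failure-probability bookkeeping: the per-batch sample size must be $\Theta(\epsilon^{-1}\ln(T(n)/\delta))$ rather than merely $\Theta(\epsilon^{-1}\ln(1/\delta))$, so that the union bound over all $T(n)$ simulated equivalence queries still leaves total failure probability at most $\delta$.
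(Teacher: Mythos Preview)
Your proposal is correct and is exactly the approach the paper takes: it presents this theorem ``without formal proof,'' simply citing the standard Angluin reduction (Section~2.4 of \cite{angluin1988queries}) applied to the exact learners of the preceding theorem, with the same quantitative transformation $T(n)\mapsto O(T(n)\epsilon^{-1}\ln(T(n)/\delta))$ that you spell out. Your write-up is in fact more detailed than what the paper provides.
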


\section*{Acknowledgements}I'm grateful to Marco Carmosino for insightful discussions relating to this line of work. Part of this work was completed while I was visiting the Simons institute for the theory of computing.

\bibliography{bib.bib}

\newcommand{\etalchar}[1]{$^{#1}$}
\begin{thebibliography}{CKK{\etalchar{+}}15}

\bibitem[AB09]{arora2009computational}
Sanjeev Arora and Boaz Barak.
\newblock {\em Computational complexity: a modern approach}.
\newblock Cambridge University Press, 2009.

\bibitem[Ang88]{angluin1988queries}
Dana Angluin.
\newblock Queries and concept learning.
\newblock {\em Machine learning}, 2:319--342, 1988.

\bibitem[BFKL93]{blum1993cryptographic}
Avrim Blum, Merrick Furst, Michael Kearns, and Richard~J Lipton.
\newblock Cryptographic primitives based on hard learning problems.
\newblock In {\em Annual International Cryptology Conference}, pages 278--291.
  Springer, 1993.

\bibitem[BIP{\etalchar{+}}18]{boneh2018exploring}
Dan Boneh, Yuval Ishai, Alain Passel{\`e}gue, Amit Sahai, and David~J Wu.
\newblock Exploring crypto dark matter: New simple prf candidates and their
  applications.
\newblock In {\em Theory of Cryptography: 16th International Conference, TCC
  2018, Panaji, India, November 11--14, 2018, Proceedings, Part II}, pages
  699--729. Springer, 2018.

\bibitem[BKW03]{blum2003noise}
Avrim Blum, Adam Kalai, and Hal Wasserman.
\newblock Noise-tolerant learning, the parity problem, and the statistical
  query model.
\newblock {\em Journal of the ACM (JACM)}, 50(4):506--519, 2003.

\bibitem[Blu94]{blum1994separating}
Avrim~L Blum.
\newblock Separating distribution-free and mistake-bound learning models over
  the boolean domain.
\newblock {\em SIAM Journal on Computing}, 23(5):990--1000, 1994.

\bibitem[BT94]{beigel1994acc}
Richard Beigel and Jun Tarui.
\newblock On acc.
\newblock {\em Computational Complexity}, 4:350--366, 1994.

\bibitem[CIKK16]{carmosino2016learning}
Marco~L Carmosino, Russell Impagliazzo, Valentine Kabanets, and Antonina
  Kolokolova.
\newblock Learning algorithms from natural proofs.
\newblock In {\em 31st Conference on Computational Complexity (CCC 2016)}.
  Schloss Dagstuhl-Leibniz-Zentrum fuer Informatik, 2016.

\bibitem[CIKK17]{carmosino2017agnostic}
Marco~L Carmosino, Russell Impagliazzo, Valentine Kabanets, and Antonina
  Kolokolova.
\newblock Agnostic learning from tolerant natural proofs.
\newblock In {\em Approximation, Randomization, and Combinatorial Optimization.
  Algorithms and Techniques (APPROX/RANDOM 2017)}. Schloss
  Dagstuhl-Leibniz-Zentrum fuer Informatik, 2017.

\bibitem[CKK{\etalchar{+}}15]{chen2015mining}
Ruiwen Chen, Valentine Kabanets, Antonina Kolokolova, Ronen Shaltiel, and David
  Zuckerman.
\newblock Mining circuit lower bound proofs for meta-algorithms.
\newblock {\em Computational Complexity}, 24:333--392, 2015.

\bibitem[CT93]{chung1993communication}
Fan~RK Chung and Prasad Tetali.
\newblock Communication complexity and quasi randomness.
\newblock {\em SIAM Journal on Discrete Mathematics}, 6(1):110--123, 1993.

\bibitem[Fel09a]{feldman2009distribution}
Vitaly Feldman.
\newblock Distribution-specific agnostic boosting.
\newblock {\em arXiv preprint arXiv:0909.2927}, 2009.

\bibitem[Fel09b]{feldman2009power}
Vitaly Feldman.
\newblock On the power of membership queries in agnostic learning.
\newblock {\em The Journal of Machine Learning Research}, 10:163--182, 2009.

\bibitem[FK13]{fomin2013exact}
Fedor~V Fomin and Petteri Kaski.
\newblock Exact exponential algorithms.
\newblock {\em Communications of the ACM}, 56(3):80--88, 2013.

\bibitem[GKK08]{gopalan2008agnostically}
Parikshit Gopalan, Adam~Tauman Kalai, and Adam~R Klivans.
\newblock Agnostically learning decision trees.
\newblock In {\em Proceedings of the fortieth annual ACM symposium on Theory of
  computing}, pages 527--536, 2008.

\bibitem[GKS97]{goldman1997agnostic}
Sally~A Goldman, Stephen~S Kwek, and Stephen~D Scott.
\newblock Agnostic learning of geometric patterns.
\newblock In {\em Proceedings of the tenth annual conference on Computational
  learning theory}, pages 325--333, 1997.

\bibitem[GL89]{goldreich1989hard}
Oded Goldreich and Leonid~A Levin.
\newblock A hard-core predicate for all one-way functions.
\newblock In {\em Proceedings of the twenty-first annual ACM symposium on
  Theory of computing}, pages 25--32, 1989.

\bibitem[Hau92]{haussler1992decision}
David Haussler.
\newblock Decision theoretic generalizations of the pac model for neural net
  and other learning applications.
\newblock {\em Information and Computation}, 100:78--150, 1992.

\bibitem[HG91]{haastad1991power}
Johan H{\aa}stad and Mikael Goldmann.
\newblock On the power of small-depth threshold circuits.
\newblock {\em Computational Complexity}, 1:113--129, 1991.

\bibitem[IMP12]{impagliazzo2012satisfiability}
Russell Impagliazzo, William Matthews, and Ramamohan Paturi.
\newblock A satisfiability algorithm for ac0.
\newblock In {\em Proceedings of the twenty-third annual ACM-SIAM symposium on
  Discrete Algorithms}, pages 961--972. SIAM, 2012.

\bibitem[JLR11]{janson2011random}
Svante Janson, Tomasz Luczak, and Andrzej Rucinski.
\newblock {\em Random graphs}.
\newblock John Wiley \& Sons, 2011.

\bibitem[Kar23]{karchmer2023distributional}
Ari Karchmer.
\newblock Distributional pac-learning from nisan's natural proofs.
\newblock {\em arXiv preprint arXiv:2310.03641}, 2023.

\bibitem[KK09]{kanade2009potential}
Varun Kanade and Adam Kalai.
\newblock Potential-based agnostic boosting.
\newblock {\em Advances in neural information processing systems}, 22, 2009.

\bibitem[KKMS08]{kalai2008agnostically}
Adam~Tauman Kalai, Adam~R Klivans, Yishay Mansour, and Rocco~A Servedio.
\newblock Agnostically learning halfspaces.
\newblock {\em SIAM Journal on Computing}, 37(6):1777--1805, 2008.

\bibitem[KM91]{kushilevitz1991learning}
Eyal Kushilevitz and Yishay Mansour.
\newblock Learning decision trees using the fourier spectrum.
\newblock In {\em Proceedings of the twenty-third annual ACM symposium on
  Theory of computing}, pages 455--464, 1991.

\bibitem[KMV08]{kalai2008agnostic}
Adam~Tauman Kalai, Yishay Mansour, and Elad Verbin.
\newblock On agnostic boosting and parity learning.
\newblock In {\em Proceedings of the fortieth annual ACM symposium on Theory of
  computing}, pages 629--638, 2008.

\bibitem[KN96]{kushilevitz1996communication}
Eyal Kushilevitz and Noam Nisan.
\newblock Communication complexity, 1996.

\bibitem[KSS92]{kearns1992toward}
Michael~J Kearns, Robert~E Schapire, and Linda~M Sellie.
\newblock Toward efficient agnostic learning.
\newblock In {\em Proceedings of the fifth annual workshop on Computational
  learning theory}, pages 341--352, 1992.

\bibitem[Lee96]{lee1996efficient}
Wee~Sun Lee.
\newblock Efficient agnostic learning of neural networks with bounded fan-in.
\newblock {\em IEEE Transactions on Information Theory}, 42(6):2118--2132,
  1996.

\bibitem[Lit88]{littlestone1988learning}
Nick Littlestone.
\newblock Learning quickly when irrelevant attributes abound: A new
  linear-threshold algorithm.
\newblock {\em Machine learning}, 2:285--318, 1988.

\bibitem[Lyu05]{lyubashevsky2005parity}
Vadim Lyubashevsky.
\newblock The parity problem in the presence of noise, decoding random linear
  codes, and the subset sum problem.
\newblock In {\em International Workshop on Approximation Algorithms for
  Combinatorial Optimization}, pages 378--389. Springer, 2005.

\bibitem[Nis93]{nisan1993communication}
Noam Nisan.
\newblock The communication complexity of threshold gates.
\newblock {\em Combinatorics, Paul Erdos is Eighty}, 1:301--315, 1993.

\bibitem[OS16]{oliveira2016conspiracies}
Igor~C Oliveira and Rahul Santhanam.
\newblock Conspiracies between learning algorithms, circuit lower bounds and
  pseudorandomness.
\newblock {\em arXiv preprint arXiv:1611.01190}, 2016.

\bibitem[PPSZ05]{paturi2005improved}
Ramamohan Paturi, Pavel Pudl{\'a}k, Michael~E Saks, and Francis Zane.
\newblock An improved exponential-time algorithm for k-sat.
\newblock {\em Journal of the ACM (JACM)}, 52(3):337--364, 2005.

\bibitem[PPZ97]{paturi1997satisfiability}
Ramamohan Paturi, Pavel Pudl{\'a}k, and Francis Zane.
\newblock Satisfiability coding lemma.
\newblock In {\em Proceedings 38th Annual Symposium on Foundations of Computer
  Science}, pages 566--574. IEEE, 1997.

\bibitem[Raz87]{razborov1987lower}
Alexander~A Razborov.
\newblock Lower bounds on the size of bounded depth circuits over a complete
  basis with logical addition.
\newblock {\em Mathematical Notes of the Academy of Sciences of the USSR},
  41(4):333--338, 1987.

\bibitem[Raz00]{raz2000bns}
Ran Raz.
\newblock The bns-chung criterion for multi-party communication complexity.
\newblock {\em Computational Complexity}, 9(2):113--122, 2000.

\bibitem[RR97]{razborov1997natural}
Alexander~A Razborov and Steven Rudich.
\newblock Natural proofs.
\newblock {\em Journal of Computer and System Sciences}, 55(1):24--35, 1997.

\bibitem[Sch99]{schoning1999probabilistic}
T~Schoning.
\newblock A probabilistic algorithm for k-sat and constraint satisfaction
  problems.
\newblock In {\em 40th Annual Symposium on Foundations of Computer Science
  (Cat. No. 99CB37039)}, pages 410--414. IEEE, 1999.

\bibitem[Sch05]{schuler2005algorithm}
Rainer Schuler.
\newblock An algorithm for the satisfiability problem of formulas in
  conjunctive normal form.
\newblock {\em Journal of Algorithms}, 54(1):40--44, 2005.

\bibitem[Smo87]{smolensky1987algebraic}
Roman Smolensky.
\newblock Algebraic methods in the theory of lower bounds for boolean circuit
  complexity.
\newblock In {\em Proceedings of the nineteenth annual ACM symposium on Theory
  of computing}, pages 77--82, 1987.

\bibitem[ST17]{servedio2017circuit}
Rocco~A Servedio and Li-Yang Tan.
\newblock What circuit classes can be learned with non-trivial savings?
\newblock In {\em 8th Innovations in Theoretical Computer Science Conference
  (ITCS 2017)}. Schloss Dagstuhl-Leibniz-Zentrum fuer Informatik, 2017.

\bibitem[Val84]{valiant1984theory}
Leslie~G Valiant.
\newblock A theory of the learnable.
\newblock {\em Communications of the ACM}, 27(11):1134--1142, 1984.

\bibitem[VW07]{viola2007norms}
Emanuele Viola and Avi Wigderson.
\newblock Norms, xor lemmas, and lower bounds for gf (2) polynomials and
  multiparty protocols.
\newblock In {\em Twenty-Second Annual IEEE Conference on Computational
  Complexity (CCC'07)}, pages 141--154. IEEE, 2007.

\end{thebibliography}

\end{document}